%% file: main.tex
\documentclass[10pt]{article} 

\usepackage[preprint]{tmlr} 

\input{math_commands.tex}

\usepackage{hyperref}
\usepackage{xspace}
\usepackage{url}
\usepackage{enumitem}
\usepackage{booktabs}
\usepackage{float}          
\usepackage{algorithm}      
\usepackage{algpseudocode}  
\usepackage[nameinlink,noabbrev]{cleveref}
\usepackage{xcolor}
\usepackage{graphicx}
\usepackage{tikz}
\usetikzlibrary{arrows.meta,calc,positioning}
\usepackage{aliascnt}
\usepackage{amsthm}
\usepackage{pgf}

\newtheorem{theorem}{Theorem}
\newaliascnt{proposition}{theorem}
\newtheorem{proposition}[proposition]{Proposition}
\aliascntresetthe{proposition}
\newaliascnt{lemma}{theorem}
\newtheorem{lemma}[lemma]{Lemma}
\aliascntresetthe{lemma}
\newaliascnt{corollary}{theorem}
\newtheorem{corollary}[corollary]{Corollary}
\aliascntresetthe{corollary}
\theoremstyle{definition}
\newaliascnt{definition}{theorem}
\newtheorem{definition}[definition]{Definition}
\aliascntresetthe{definition}
\newaliascnt{example}{theorem}

\aliascntresetthe{example}
\newaliascnt{assumption}{theorem}

\aliascntresetthe{assumption}
\newaliascnt{remark}{theorem}

\aliascntresetthe{remark}

\crefname{theorem}{Theorem}{Theorems}
\Crefname{theorem}{Theorem}{Theorems}
\crefname{proposition}{Proposition}{Propositions}
\Crefname{proposition}{Proposition}{Propositions}
\crefname{lemma}{Lemma}{Lemmas}
\Crefname{lemma}{Lemma}{Lemmas}
\crefname{corollary}{Corollary}{Corollaries}
\Crefname{corollary}{Corollary}{Corollaries}
\crefname{definition}{Definition}{Definitions}
\Crefname{definition}{Definition}{Definitions}
\crefname{example}{Example}{Examples}
\Crefname{example}{Example}{Examples}
\crefname{remark}{Remark}{Remarks}
\Crefname{remark}{Remark}{Remarks}
\crefname{algorithm}{Algorithm}{Algorithms}
\Crefname{algorithm}{Algorithm}{Algorithms}

\title{Barycentric Projections of Optimal Transport Plans on Riemannian Manifolds}

\author{\name Kisung You \email kisung.you@baruch.cuny.edu \\
      \addr Department of Mathematics\\
      Baruch College
      }

\providecommand{\R}{\mathbb{R}}
\providecommand{\E}{\mathbb{E}}
\providecommand{\1}{\mathbf{1}}
\providecommand{\sM}{\mathcal{M}}
\providecommand{\sX}{\mathcal{X}}
\providecommand{\sY}{\mathcal{Y}}
\providecommand{\sD}{\mathcal{D}}
\providecommand{\sK}{\mathcal{K}}
\providecommand{\sO}{\mathcal{O}}
\providecommand{\va}{\mathbf{a}}
\providecommand{\vb}{\mathbf{b}}
\providecommand{\vone}{\mathbf{1}}
\providecommand{\rvx}{\mathbf{x}}
\providecommand{\rvy}{\mathbf{y}}
\providecommand{\mGamma}{\boldsymbol{\Gamma}}
\providecommand{\mLambda}{\boldsymbol{\Lambda}}

\providecommand{\Frechet}{Fr\'{e}chet\xspace}

\DeclareMathOperator{\spt}{spt}
\DeclareMathOperator{\Cut}{Cut}
\DeclareMathOperator{\id}{id}

\def\month{MM}  
\def\year{YYYY} 
\def\openreview{\url{https://openreview.net/forum?id=XXXX}} 

\begin{document}

\maketitle

\begin{abstract}
Optimal transport couplings are probabilistic objects, while many learning
pipelines require deterministic maps. In Euclidean space, barycentric projection
converts a coupling into a map by taking conditional expectations, but on a
Riemannian manifold curvature and cut loci make this operation nontrivial. We
develop a framework for barycentric projections of transport couplings on
Riemannian manifolds. The intrinsic projection maps each source point to the
conditional Fr\'echet mean of its destination law and is shown to be the best
deterministic representative under squared geodesic loss. The corresponding
minimum value is an integrated conditional Fr\'echet variance, which vanishes
exactly for map-induced couplings and therefore defines a conditional-variance
Monge defect. We also study a tangential log--exp projection, prove its
Euclidean exactness, its compatibility with Brenier--McCann maps in the Monge
case, and its interpretation as the first unit Riemannian gradient update for
the intrinsic objective. For discrete couplings, both constructions decompose
row-wise into weighted Fr\'echet mean and log--exp problems. Experiments on
spherical data, synthetic SPD data, and real EEG covariance matrices support
the proposed division of roles: the intrinsic projection is the variational
representative, while the tangential projection is a useful local displacement
surrogate.
\end{abstract}

\section{Introduction}
\label{sec:introduction}

Optimal transport (OT) tells us how to rearrange mass \citep{monge_1781_MemoireTheorieDeblais}. Learning systems often ask a sharper question: once a source point has been matched in distribution, where should it actually go? This distinction is harmless if the only goal is to compare probability measures. It becomes central when optimal transport is used as a deterministic layer, a correspondence rule, a feature-alignment procedure, a domain-adaptation mechanism, or a transport-based predictor. A transport plan may split the mass of one source point among several destinations, while such downstream tasks typically expect one output point for each input point.

In Euclidean space, the standard answer is barycentric projection that replaces the conditional cloud of destinations attached to each source point by its average. Equivalently, the barycentric projection is the conditional expectation of the destination given the source, and its optimality under squared Euclidean loss is the familiar $L^2$ projection property of conditional expectation. This Euclidean fact is conceptually simple, but it does not immediately transfer to curved spaces. On a Riemannian manifold, there is no unique global notion of averaging. One may average intrinsically using geodesic distance, or one may average displacement vectors in a tangent space and map the result back with the exponential map. These two procedures agree in flat Euclidean space. On a curved manifold, they may differ, and the tangent-space procedure may even be undefined because of the cut locus.

This paper develops a systematic framework for turning transport couplings on Riemannian manifolds into deterministic maps. Our motivating case is the exact, non-entropic optimal transport plan. In that setting, mass splitting is a genuine feature of the Kantorovich solution, especially for empirical measures with unequal weights or unequal support sizes. The problem of extracting a deterministic representative is therefore not merely a numerical post-processing choice, but a geometric plan-to-map problem.

We study two complementary constructions, which are visualized in Figure~\ref{fig:intrinsic-vs-tangential}. The first is intrinsic. For each source point, take the \Frechet mean of its conditional destination law. This ties barycentric projection directly to the classical theory of \Frechet and Karcher means on manifolds \citep{frechet_1948_ElementsAleatoiresNature,karcher_1977_RiemannianCenterMass,afsari_2011_Riemannian$L_p$Center}. The second is tangential. For each source point, we take an average of the logarithmic displacement vectors pointing toward its destinations and then project onto the manifold with the exponential map. This second construction is naturally connected to the displacement-based description of quadratic optimal maps due to Brenier in Euclidean space and McCann on Riemannian manifolds \citep{brenier_1991_PolarFactorizationMonotone, mccann_2001_PolarFactorizationMaps}.

\begin{figure}[ht]
\centering
\resizebox{0.96\linewidth}{!}{%
  \input{figures/tikz_snippet}
}
\caption{Intrinsic and tangential barycentric projections. The intrinsic projection (left) solves the conditional \Frechet mean problem directly on the manifold, whereas the tangential projection (right) first averages log-mapped destination points in the tangent space and then pushes the result back to the manifold.}
\label{fig:intrinsic-vs-tangential}
\end{figure}

The main message is asymmetric. The intrinsic construction is the variational representative of a plan, which solves the conditional \Frechet mean problem and is optimal among all deterministic maps under squared geodesic loss. In regimes where conditional \Frechet means are unique, this representative is canonical. Without uniqueness, the natural object is a compact-valued barycentric correspondence together with measurable selections. Meanwhile, the  tangential construction is a structured surrogate that is compatible with the Brenier--McCann displacement formula in the Monge case and coincides with the intrinsic construction in Euclidean space, but it is local and sensitive to cut-locus phenomena.

This distinction is useful for manifold-valued learning. A transport plan is
often the right probabilistic object, while a deterministic map is often the
right computational object. We make this passage precise under explicit
assumptions on measurability, uniqueness, curvature, and cut loci.

Our contributions are as follows.
\begin{enumerate}
    \item We formalize the plan-to-map problem for manifold-valued transport couplings through conditional destination laws. The core theory applies to arbitrary finite-cost couplings, while exact optimal transport plans are the primary motivating case.
    \item We propose the intrinsic barycentric projection, defined by conditional Fr\'echet means, and prove that it is the best deterministic representative under squared geodesic loss. This is the Riemannian analogue of the Euclidean conditional-expectation property, with the necessary curvature and measurability infrastructure made explicit.
    \item We identify the integrated conditional Fr\'echet variance as a conditional-variance Monge defect. It is exactly the minimum deterministic representation error and vanishes precisely for map-induced couplings.
    \item We analyze the tangential log--exp projection as a displacement-coordinate surrogate. We prove its Euclidean exactness, its compatibility with Brenier--McCann maps in the Monge case, its Borel measurability in the Hadamard regime, and its interpretation as the first unit Riemannian gradient update for the intrinsic objective.
    \item We derive row-wise algorithms for discrete couplings. The intrinsic method reduces to weighted Fr\'echet mean problems, and the tangential method reduces to log--exp averages. A hybrid initialization is described as an algorithmic option. 
\end{enumerate}

The rest of the paper is organized as follows. \Cref{sec:background}
reviews the three ingredients behind the framework: barycentric projection of
transport plans, Fr\'echet means on nonlinear spaces, and tangent-space
displacement coordinates. \Cref{sec:setup} fixes notation and defines the
intrinsic and tangential projections from a common coupling. \Cref{sec:intrinsic}
develops the intrinsic projection and proves its variational optimality, its
Monge-defect characterization, and the Hadamard energy gap. \Cref{sec:tangent}
studies the tangential projection as a displacement-coordinate surrogate and
relates it to Brenier--McCann maps and to the first-order geometry of the
intrinsic objective. \Cref{sec:algorithms} gives row-wise algorithms for
discrete couplings. \Cref{sec:experiments} reports synthetic and real-data
experiments, and we conclude with scope and future directions in  \Cref{sec:conclusion}.

\section{Background}
\label{sec:background}

The paper is built from three familiar ideas that do not usually appear together in this form:  barycentric projection of an optimal transport plan, \Frechet means on nonlinear spaces, and tangent-space displacement coordinates. We recall these connections first, keeping the formal notation for the next section.

\subsection{Plan-to-map operations and conditional barycenters}

Formulation of the OT problem by \citet{kantorovitch_1958_TranslocationMasses} produces couplings, while many applications require maps. In computational OT, especially for empirical measures, barycentric projection is the standard device for converting a coupling into a transported point cloud or deterministic representative \citep{santambrogio_2015_OptimalTransportApplied, peyre_2019_ComputationalOptimalTransport}.  In Euclidean space this operation is simply conditional expectation. For a coupling of random variables $(\rvx,\rvy)$, the map $x\mapsto \E[\rvy\mid \rvx=x]$ is the $L^2$-optimal deterministic representative of the conditional law of $\rvy$ given $\rvx$. This observation underlies common OT-based map-estimation and domain-adaptation pipelines \citep{perrot_2016_MappingEstimationDiscrete,courty_2017_JointDistributionOptimal}.

We keep this plan-to-map motivation but change the geometry. Once the state space is a Riemannian manifold, the conditional expectation is no longer available as a linear average. The natural intrinsic replacement is the conditional Fr\'echet mean, which is defined as a point, or set of points, minimizing expected squared geodesic distance to the conditional destination law. Fr\'echet means, also called barycenters, generalize Euclidean averages \citep{frechet_1948_ElementsAleatoiresNature}, whose local versions on Riemannian manifolds are often called Karcher means \citep{karcher_1977_RiemannianCenterMass}, requiring attention to injectivity radii, curvature, and cut loci \citep{pennec_2006_IntrinsicStatisticsRiemannian, afsari_2011_Riemannian$L_p$Center}. In nonpositively curved spaces, they enjoy global uniqueness and variance inequalities \citep{sturm_2003_ProbabilityMeasuresMetric}.

There is also a statistical literature on conditional Fr\'echet means and Fr\'echet regression for metric-space-valued responses \citep{petersen_2019_FrechetRegressionRandom}. 
Our use of conditional Fr\'echet means is close in spirit but differs in its
source. The conditional law is specified by a transport coupling, not by an
external regression model.

\subsection{Displacement coordinates and tangent-space transport}

The second construction studied in the paper comes from a different OT tradition. In Euclidean quadratic OT, Brenier's theorem represents optimal maps by gradients of convex potentials \citep{brenier_1991_PolarFactorizationMonotone}. On a Riemannian manifold, McCann's polar factorization theorem gives the corresponding displacement representation $T(x)=\exp_x(-\nabla\psi(x))$ under the assumptions ensuring a Monge solution \citep{mccann_2001_PolarFactorizationMaps}. The vector $\log_x(T(x))$ is therefore the natural displacement coordinate of a deterministic optimal map.

Linearized or tangent-space OT methods similarly use displacement coordinates to replace nonlinear Wasserstein geometry by linear surrogates. Recent manifold versions include linearized optimal transport on manifolds, which studies barycentric logarithmic maps, discretization consistency through barycentric projection, and continuity properties of logarithmic maps \citep{sarrazin_2024_LinearizedOptimalTransport}. Our tangential projection is related in a row-wise sense that it averages the displacement vectors associated with a fixed conditional destination law rather than embedding an entire space of measures.

The analysis below separates these two viewpoints. The intrinsic projection is the variational object. The tangential projection is the displacement-coordinate surrogate. The point of the formal setup is to define both from the same coupling and then prove exactly what each one represents.

\section{Problem setup and proposed projections}
\label{sec:setup}

We now fix notation and state the plan-to-map problem precisely. The main theory applies to any finite-cost coupling between two probability measures. Exact optimal transport plans are the primary motivating case, but optimality is not needed for the core barycentric projection theorems.

\subsection{Geometry, couplings, and conditional laws}

Let $\sM$ be a connected, complete, finite-dimensional Riemannian manifold with Riemannian metric tensor $g$ and geodesic distance $d$. For $x\in\sM$, write $T_x\sM$ for the tangent space, $\langle\cdot,\cdot\rangle_x$ for the Riemannian inner product on $T_x\sM$, and $\|\cdot\|_x$ for the associated norm. The exponential map at $x$ is denoted by $\exp_x:T_x\sM\to\sM$. The cut locus of $x$ is denoted by $\Cut(x)$. Whenever $y\notin\Cut(x)$, we write $\log_x(y)\in T_x\sM$ for the inverse exponential, or logarithmic map, of \(y\) at \(x\), i.e., the initial velocity of the unique minimizing geodesic from $x$ to $y$.

For a Polish space $\sX$, $\mathcal P(\sX)$ denotes the set of Borel probability measures on $\sX$, and $\mathcal P_2(\sX)$ denotes the subset of measures with finite second moment. If $\rho\in\mathcal P(\sX)$, $\spt(\rho)$ denotes its support. If $T:\sX\to\sY$ is Borel, $T_\#\rho$ denotes the pushforward of $\rho$. On product spaces, $\mathrm{pr}_1$ and $\mathrm{pr}_2$ denote coordinate projections. All maps are understood to be Borel unless explicitly stated otherwise.

Let $\mu,\nu\in\mathcal P_2(\sM)$. We write
\[
\Pi(\mu,\nu)
:=
\left\{\gamma\in\mathcal P(\sM\times\sM):
(\mathrm{pr}_1)_\#\gamma=\mu,
\ (\mathrm{pr}_2)_\#\gamma=\nu
\right\}
\]
for the set of couplings between $\mu$ and $\nu$. Since both marginals have finite second moment, every $\gamma\in\Pi(\mu,\nu)$ has finite quadratic cost by the triangle inequality. Throughout the paper we use the cost
\[
 c(x,y)=\frac12 d(x,y)^2.
\]
The factor $1/2$ does not change minimizers but gives cleaner first-order formulas. The exact quadratic optimal transport problem is
\[
\inf_{\gamma\in\Pi(\mu,\nu)}
\int_{\sM\times\sM}\frac12 d(x,y)^2\,d\gamma(x,y).
\]
Any minimizer is called an exact optimal transport plan. Such a minimizer exists by the standard lower-semicontinuity and tightness argument for quadratic cost on Polish spaces \citep{villani_2009_OptimalTransportOld}. Most results below are stated for an arbitrary coupling $\pi\in\Pi(\mu,\nu)$. When $\pi$ is an optimal plan, these results apply to exact optimal transport. If there exists a Borel map $T:\sM\to\sM$ such that $\pi=(\id,T)_\#\mu$, we say that $\pi$ is {map-induced} or {Monge-induced}. In the Monge-induced case, the plan does not split mass and  each source point is assigned a single destination.

Fix a coupling $\pi\in\Pi(\mu,\nu)$. By disintegration with respect to its first marginal, there exists a Borel probability kernel $x\mapsto \pi(\cdot\mid x)$, unique for $\mu$-almost every $x$, such that
\[
\pi(dx,dy)=\mu(dx)\,d\pi(y\mid x).
\]
We call $d\pi(y\mid x)$ the {conditional destination law} at $x$. It describes how the mass located at $x$ is distributed among destinations by the coupling $\pi$. The plan-to-map problem is to replace this conditional probability measure by a single point of $\sM$, in a way that is geometrically meaningful and stable.

\subsection{Intrinsic and tangential projections}

The intrinsic construction starts from the conditional Fr\'echet functional
\[
F_x:\sM\to\R\cup\{+\infty\},
\qquad
F_x(z):=\frac12\int_{\sM} d(z,y)^2\,d\pi(y\mid x).
\]
Its minimizers are the Fr\'echet means of the conditional destination law.

\begin{definition}[Intrinsic barycentric projection]
The intrinsic barycentric set of $\pi$ at $x$ is
\[
\mathcal B_\pi(x):=\argmin_{z\in\sM} F_x(z).
\]
A Borel map $B_\pi:\sM\to\sM$ satisfying
\[
B_\pi(x)\in\mathcal B_\pi(x)
\quad\text{for }\mu\text{-almost every }x
\]
is called an {intrinsic barycentric projection} of $\pi$. When $\mathcal B_\pi(x)$ is a singleton for $\mu$-almost every $x$, we use $B_\pi(x)$ for its unique element.
\end{definition}

We also define the pointwise conditional Fr\'echet variance
\[
V_\pi(x):=\inf_{z\in\sM}F_x(z),
\]
and the integrated value
\[
V(\pi):=\int_{\sM} V_\pi(x)\,d\mu(x).
\]
The main result of Section~\ref{sec:intrinsic} shows that $V(\pi)$ is exactly the minimum deterministic representation error.

The second construction averages displacement vectors. It is only defined where logarithms are single-valued and square-integrable.

\begin{definition}[Tangential domain and projection]
Define
\[
\mathrm{Dom}_2(\widetilde B_\pi)
:=
\left\{
 x\in\sM:
 \begin{array}{l}
 y\mapsto\log_x(y)\text{ is }\pi(\cdot\mid x)\text{-almost surely single-valued,}\\[0.3em]
 \displaystyle \int_{\sM}\|\log_x(y)\|_x^2\,d\pi(y\mid x)<\infty
 \end{array}
\right\}.
\]
For $x\in\mathrm{Dom}_2(\widetilde B_\pi)$, define
\[
 m_\pi(x):=\int_{\sM}\log_x(y)\,d\pi(y\mid x)\in T_x\sM
\]
and
\[
\widetilde B_\pi(x):=\exp_x(m_\pi(x)).
\]
We call $\widetilde B_\pi$ the {tangential barycentric projection} of $\pi$ on its domain.
\end{definition}

The intrinsic projection is an optimizer on the manifold. The tangential projection is a first-order displacement surrogate based at the source point. Section~\ref{sec:tangent} makes this distinction precise.

\subsection{Losses, metrics, and geometric regimes}

For a Borel map $T:\sM\to\sM$, define its plan-to-map loss relative to $\pi$ by
\[
\mathcal E_\pi(T)
:=
\frac12\int_{\sM\times\sM}d(T(x),y)^2\,d\pi(x,y).
\]
Equivalently,
\[
\mathcal E_\pi(T)=\int_{\sM} F_x(T(x))\,d\mu(x).
\]
This quantity measures how well the deterministic map $T$ represents the original possibly mass-splitting plan. The intrinsic projection will attain the smallest possible value of $\mathcal E_\pi$. For any map $T$, we define the excess deterministic representation error
\[
\Delta_\pi(T):=\mathcal E_\pi(T)-V(\pi).
\]
When $T_\#\mu\in\mathcal P_2(\sM)$, we also define the target mismatch
\[
\mathcal M_\pi(T):=\frac12 W_2^2(T_\#\mu,\nu),
\]
where $W_2$ is the quadratic Wasserstein distance induced by $d$. These quantities will be used in the theory and in the empirical validation.

Some results hold on any complete finite-dimensional Riemannian manifold, while uniqueness and quantitative energy gaps require additional geometry. To clarify the assumptions, we use the following two regimes.

\begin{description}
    \item[(H) Hadamard regime.] The manifold $\sM$ is complete, simply connected, and has nonpositive sectional curvature.
    \item[(L) Local Fr\'echet uniqueness regime.] For $\mu$-almost every $x$, the conditional Fr\'echet functional $F_x$ has a unique global minimizer. Moreover, this minimizer lies in a geodesically convex open set $U_x$ containing $\spt(\pi(\cdot\mid x))$, and $F_x$ is strictly geodesically convex on $U_x$.
\end{description}

Regime \textbf{(H)} gives a global theory. Regime \textbf{(L)} is a compact way to encode the usual small-support Fr\'echet/Karcher mean assumptions on a general manifold. Concrete sufficient conditions are given by standard Karcher--Afsari radius bounds involving injectivity and curvature. For simplicity, we do not need their explicit constants in the main statements \citep{karcher_1977_RiemannianCenterMass,afsari_2011_Riemannian$L_p$Center}.  The same distinction will reappear algorithmically. In the Hadamard regime the row-wise objectives are globally geodesically convex, while in the local regime optimization must remain inside a convexity neighborhood. Standard convergence theory for first-order methods on geodesically convex functions is available in the global case \citep{zhang_2016_FirstorderMethodsGeodesically}.

\section{Intrinsic barycentric projection}
\label{sec:intrinsic}

This section shows that the intrinsic barycentric projection is the variational deterministic representative of a transport coupling. The proofs are stated for an arbitrary coupling $\pi\in\Pi(\mu,\nu)$, not only for an optimal plan. Throughout this section, equalities and inclusions involving maps are understood $\mu$-almost everywhere unless otherwise stated.

\subsection{Existence, compactness, and measurable selection}

The first task is purely foundational. Before asking whether an intrinsic barycentric projection is optimal, we must know that the conditional Fr\'echet problems have minimizers and that these minimizers can be assembled into a measurable map. The next proposition gives the pointwise existence statement. The following one turns the pointwise minimizers into an admissible deterministic representative.

\begin{proposition}[Existence and compactness of conditional Fr\'echet means]
\label{prop:existence_compactness}
Let $\mu,\nu\in\mathcal P_2(\sM)$, and let $\pi\in\Pi(\mu,\nu)$. Then there is a Borel set $\sX_0\subset\sM$ with $\mu(\sX_0)=1$ such that for every $x\in \sX_0$, the function $F_x$ is finite-valued, continuous, and coercive on $\sM$. Consequently,
\[
\mathcal B_\pi(x)=\argmin_{z\in\sM}F_x(z)
\]
is nonempty and compact for every $x\in \sX_0$.
\end{proposition}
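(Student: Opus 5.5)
The plan is to reduce everything to a single integrability fact about the conditional laws, and then use the elementary metric inequalities for $d$ together with the Hopf--Rinow theorem.

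\textbf{Step 1: the full-measure set.} Fix a base point $o\in\sM$. Since $\nu\in\mathcal P_2(\sM)$ and $(\mathrm{pr}_2)_\#\pi=\nu$, disintegration gives
\[
\int_{\sM}\left(\int_{\sM} d(o,y)^2\,d\pi(y\mid x)\right)d\mu(x)=\int_{\sM} d(o,y)^2\,d\nu(y)<\infty .
\]
The inner integral $x\mapsto\int d(o,y)^2\,d\pi(y\mid x)$ is Borel, because $\pi(\cdot\mid x)$ is a Borel kernel and the integrand is a nonnegative Borel function (approximate it by simple functions). Hence the set
\[
\sX_0:=\left\{x:\int d(o,y)^2\,d\pi(y\mid x)<\infty\right\}
\]
is Borel and has $\mu(\sX_0)=1$. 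This set does not depend on the choice of $o$, since $d(o',y)^2\le 2d(o',o)^2+2d(o,y)^2$.

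\textbf{Step 2: finiteness and continuity.} Fix $x\in\sX_0$. The bound $d(z,y)^2\le 2d(z,o)^2+2d(o,y)^2$ shows that $F_x(z)<\infty$ for every $z$. For continuity, take $z_n\to z$. Then
\[
|d(z_n,y)^2-d(z,y)^2|\le d(z_n,z)\bigl(d(z_n,y)+d(z,y)\bigr).
\]
The right-hand side is dominated by $C\bigl(1+d(o,y)\bigr)$, uniformly in $n$, and this dominating function is $\pi(\cdot\mid x)$-integrable by Cauchy--Schwarz. Dominated convergence then gives $F_x(z_n)\to F_x(z)$. An alternative route is that $\sqrt{2F_x}$ is the $L^2(\pi(\cdot\mid x))$-norm of $d(z,\cdot)$, which is $1$-Lipschitz in $z$ by Minkowski's inequality.

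\textbf{Step 3: coercivity and conclusion.} The inequality $d(z,y)^2\ge \tfrac12 d(z,o)^2-d(o,y)^2$ gives
\[
F_x(z)\ge \tfrac14 d(z,o)^2-\tfrac12\int d(o,y)^2\,d\pi(y\mid x)\to\infty
\quad\text{as } d(z,o)\to\infty .
\]
So every sublevel set $\{F_x\le c\}$ is closed, by continuity, and bounded. Since $\sM$ is complete and finite-dimensional, the Hopf--Rinow theorem says closed bounded sets are compact, so each sublevel set is compact. Take $c=F_x(o)$. The continuous function $F_x$ then attains its infimum on the nonempty compact set $\{F_x\le c\}$, so $\mathcal B_\pi(x)\neq\emptyset$. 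Finally, $\mathcal B_\pi(x)=\{F_x\le V_\pi(x)\}$ is a closed subset of a compact set, hence compact.

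\textbf{Main obstacle.} The only delicate point is Step 1: checking that $\sX_0$ is Borel and has full measure. This relies on measurability of the kernel integrals and on the disintegration identity for nonnegative functions. Everything after that is routine metric estimates plus Hopf--Rinow.
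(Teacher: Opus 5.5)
Your proposal is correct and follows essentially the same route as the paper. Both arguments define $\sX_0$ by finiteness of the conditional second moment about a base point $o$, get finiteness and continuity by dominated convergence, use the same coercivity bound $F_x(z)\ge\frac14 d(z,o)^2-\frac12\int_{\sM} d(o,y)^2\,d\pi(y\mid x)$, and conclude with Hopf--Rinow; your domination of the difference $|d(z_n,y)^2-d(z,y)^2|$ (or the Minkowski observation that $\sqrt{2F_x}$ is $1$-Lipschitz) and your explicit sublevel-set argument are only cosmetic variations.
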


Pointwise existence is not yet enough for a plan-to-map theorem, because the representative must be a measurable function of the source variable. The compactness of the argmin sets makes the standard measurable-selection machinery applicable.

\begin{proposition}[Borel measurable selection]
\label{prop:borel_selection}
Under the assumptions of \Cref{prop:existence_compactness}, there exists a Borel map $B_\pi:\sM\to\sM$ such that
\[
B_\pi(x)\in\mathcal B_\pi(x)
\quad\text{for }\mu\text{-almost every }x.
\]
Thus intrinsic barycentric projections exist. Under either regime \textbf{(H)} or regime \textbf{(L)}, the selector is unique up to $\mu$-almost everywhere equality.
\end{proposition}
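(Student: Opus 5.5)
The plan is to apply the Kuratowski--Ryll-Nardzewski selection theorem, or equivalently the measurable maximum theorem for Carathéodory integrands (Aliprantis--Border, Thm.~18.19), to the set-valued map $x\mapsto\mathcal B_\pi(x)$ on the Borel set $\sX_0$ of \Cref{prop:existence_compactness}. Set $\Phi(x,z):=F_x(z)$ for $x\in\sX_0$ and $z\in\sM$. First I check that $\Phi$ is Carathéodory. For fixed $z$, the map $x\mapsto F_x(z)=\tfrac12\int d(z,y)^2\,d\pi(y\mid x)$ is Borel, because the kernel $x\mapsto\pi(\cdot\mid x)$ is Borel and $y\mapsto d(z,y)^2$ is a nonnegative Borel function; the monotone class theorem extends measurability from indicators to nonnegative Borel integrands. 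For fixed $x\in\sX_0$, continuity in $z$ is given by \Cref{prop:existence_compactness}. Since $\sM$ is Polish, $\Phi$ is then jointly Borel on $\sX_0\times\sM$.

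The key step is measurability of the argmin correspondence. Because $\sM$ is $\sigma$-compact, I would avoid the compact-range version of the maximum theorem and exploit coercivity directly. Fix a point $o$ and write $K_n:=\{z:d(o,z)\le n\}$, which is compact by Hopf--Rinow. The value $V_\pi(x)=\inf_{z\in D}F_x(z)$ over a countable dense set $D\subset\sM$ is Borel in $x$, by continuity in $z$. It follows that $\mathcal B_\pi(x)=\{z: F_x(z)-V_\pi(x)\le 0\}$ is closed and nonempty. Its graph is the zero set of the Borel function $\Phi(x,z)-V_\pi(x)$, so it is a Borel subset of $\sX_0\times\sM$. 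Weak measurability in the sense of Kuratowski--Ryll-Nardzewski requires, for each open $U$, that $\{x:\mathcal B_\pi(x)\cap U\ne\emptyset\}$ be Borel. This is a projection, and projections are a priori only analytic. This is the main obstacle.

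I would resolve it in one of two ways. The first option uses the fact that the sections are compact: by the Arsenin--Kunugui theorem (Kechris 35.46, via $\sigma$-compact sections), a Borel set with $\sigma$-compact sections has Borel projection and admits a Borel uniformization. This directly yields a Borel selector on $\sX_0$. The second option is more elementary. Write $U=\bigcup_k C_k$ with closed balls $C_k\subset U$, and intersect with $K_n$. Then $\mathcal B_\pi(x)\cap C_k\cap K_n\neq\emptyset$ holds iff $\min_{C_k\cap K_n}F_x\le V_\pi(x)$. The left-hand minimum equals an infimum over a countable dense subset of $\mathrm{int}(C_k\cap K_n)$, up to the usual closure care, and is therefore Borel. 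Either way I obtain a Borel $B_\pi$ on $\sX_0$, and I extend it by a constant on the null set $\sM\setminus\sX_0$.

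For uniqueness, under \textbf{(L)} the set $\mathcal B_\pi(x)$ is a singleton for $\mu$-a.e.\ $x$ by assumption. Under \textbf{(H)}, for each $y$ the function $z\mapsto\tfrac12 d(z,y)^2$ is $1$-strongly geodesically convex on a Hadamard manifold. Integrating, $F_x$ is strongly convex on $\sX_0$ and so has a unique minimizer. In both regimes, any two Borel selectors agree wherever $\mathcal B_\pi(x)$ is a singleton, hence $\mu$-almost everywhere.
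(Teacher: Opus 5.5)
Your proposal is correct and follows the same route as the paper. Both arguments use the Carath\'eodory structure of $(x,z)\mapsto F_x(z)$, a Borel value function obtained from a countable dense set, a Borel argmin graph with nonempty compact sections, and then weak measurability and Kuratowski--Ryll-Nardzewski; uniqueness comes from CAT(0) strong convexity in regime \textbf{(H)} and holds by hypothesis in regime \textbf{(L)}, and your explicit treatment of the projection step via Arsenin--Kunugui for compact or $\sigma$-compact sections only spells out what the paper asserts in one line.
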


The selected projection will later be pushed forward by $\mu$ and compared to the target law in $W_2$. The next lemma records the required integrability. It is a small point, but it prevents the target-mismatch bound from hiding an implicit moment assumption.

\begin{lemma}[Second moment of intrinsic barycentric projections]
\label{lem:second_moment_B}
Let $B_\pi$ be an intrinsic barycentric projection. Then $(B_\pi)_\#\mu\in\mathcal P_2(\sM)$. Moreover, $V(\pi)<\infty$.
\end{lemma}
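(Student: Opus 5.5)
The plan is to bound $V(\pi)$ and the second moment of $(B_\pi)_\#\mu$ by elementary comparisons with the trivial candidate $z=x$ in the conditional Fr\'echet problem, combined with the triangle inequality. Fix a base point $o\in\sM$. Finiteness of second moments for $\mu,\nu$ means $\int d(o,x)^2\,d\mu(x)<\infty$ and $\int d(o,y)^2\,d\nu(y)<\infty$. I will work on the full-measure set $\sX_0$ from \Cref{prop:existence_compactness}, intersected with the $\mu$-full set where $B_\pi(x)\in\mathcal B_\pi(x)$.

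First, $V(\pi)<\infty$. For $x\in\sX_0$, take $z=x$ to get $V_\pi(x)\le F_x(x)=\tfrac12\int d(x,y)^2\,d\pi(y\mid x)$. Integrating against $\mu$ and using the disintegration $\pi(dx,dy)=\mu(dx)\,d\pi(y\mid x)$ gives
\[
V(\pi)\le \tfrac12\int d(x,y)^2\,d\pi(x,y)\le \int d(o,x)^2\,d\mu+\int d(o,y)^2\,d\nu<\infty,
\]
by $d(x,y)^2\le 2d(x,o)^2+2d(o,y)^2$. Measurability of $x\mapsto V_\pi(x)$ deserves one sentence. Since $V_\pi(x)=F_x(B_\pi(x))$ almost everywhere, and $(x,z)\mapsto F_x(z)$ is jointly Borel (it is a kernel integral of a continuous function), $V_\pi$ is Borel after modification on a null set. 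Alternatively, $V_\pi$ is the infimum over a countable dense set of $z$, by continuity of $F_x$.

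Second, the moment bound for $B_\pi$. For $x$ in the good set, write $b=B_\pi(x)$. For every $y$, the triangle inequality gives $d(o,b)\le d(o,y)+d(y,b)$, so $d(o,b)^2\le 2d(o,y)^2+2d(b,y)^2$. I will average this over $y\sim\pi(\cdot\mid x)$:
\[
d(o,B_\pi(x))^2\le 2\int d(o,y)^2\,d\pi(y\mid x)+4F_x(B_\pi(x))= 2\int d(o,y)^2\,d\pi(y\mid x)+4V_\pi(x).
\]
Integrating in $\mu$ and using that the second marginal of $\pi$ is $\nu$ gives
\[
\int d(o,z)^2\,d(B_\pi)_\#\mu(z)=\int d(o,B_\pi(x))^2\,d\mu(x)\le 2\int d(o,y)^2\,d\nu+4V(\pi)<\infty.
\]
Hence $(B_\pi)_\#\mu\in\mathcal P_2(\sM)$. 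It is a Borel probability measure because $B_\pi$ is Borel by \Cref{prop:borel_selection}.

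There is no real obstacle here. The only delicate points are bookkeeping: everything must be restricted to the $\mu$-full set where both $F_x$ is finite and $B_\pi(x)$ is a genuine minimizer, and $V_\pi$ must be measurable so that $V(\pi)$ is well defined. I would handle the latter via the countable-dense-infimum argument using continuity of $F_x$ from \Cref{prop:existence_compactness}.
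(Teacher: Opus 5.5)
Your proof is correct and follows essentially the same route as the paper: bounding the second moment via $d(o,B_\pi(x))^2\le 2d(o,y)^2+2d(B_\pi(x),y)^2$, integrated against $\pi$ to get $4V(\pi)+2\int d(o,y)^2\,d\nu(y)$, is exactly the paper's argument. The only difference is cosmetic. The paper bounds $V(\pi)$ by comparing with the fixed point $z=o$, which gives $V(\pi)\le\frac12\int d(o,y)^2\,d\nu(y)$ and uses only the second moment of $\nu$, whereas your comparison with $z=x$ bounds $V(\pi)$ by the transport cost $\frac12\int d(x,y)^2\,d\pi(x,y)$ and uses both marginals, which is equally valid.
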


\subsection{Variational optimality and Monge defect}

Let $\mathcal T_\mu$ be the class of Borel maps $T:\sM\to\sM$, identified up to $\mu$-almost everywhere equality. The loss $\mathcal E_\pi(T)$ is allowed to take the value $+\infty$. With the measurability and integrability issues settled, the deterministic representation problem separates cleanly across source points: for each $x$, the best value of $T(x)$ is precisely a minimizer of $F_x$. Integrating this pointwise statement gives the main theorem.

\begin{theorem}[Best deterministic representative]
\label{thm:best_deterministic}
For any $\pi\in\Pi(\mu,\nu)$,
\[
\inf_{T\in\mathcal T_\mu}\mathcal E_\pi(T)=V(\pi).
\]
Moreover, a map $T\in\mathcal T_\mu$ attains this infimum if and only if
\[
T(x)\in\mathcal B_\pi(x)
\quad\text{for }\mu\text{-almost every }x.
\]
In particular, every intrinsic barycentric projection $B_\pi$ minimizes $\mathcal E_\pi$.
\end{theorem}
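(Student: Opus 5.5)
The plan is to reduce the global problem to the pointwise one through the disintegration identity $\mathcal E_\pi(T)=\int_{\sM}F_x(T(x))\,d\mu(x)$, which holds for every Borel $T$ with values in $[0,\infty]$. I would justify it by Tonelli's theorem applied to the nonnegative Borel function $(x,y)\mapsto \tfrac12 d(T(x),y)^2$ and the disintegration $\pi(dx,dy)=\mu(dx)\,d\pi(y\mid x)$. A technical point should be recorded first: $x\mapsto F_x(T(x))$ is Borel. This follows because $x\mapsto \int h(x,y)\,d\pi(y\mid x)$ is Borel for every nonnegative jointly Borel $h$. Both $x\mapsto V_\pi(x)$ and $V(\pi)$ also need to make sense. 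On the full-measure set $\sX_0$ of \Cref{prop:existence_compactness}, $F_x$ is continuous, so $V_\pi(x)=\inf_{z\in D}F_x(z)$ for a countable dense $D\subset\sM$. Each $x\mapsto F_x(z)$ is Borel, so $V_\pi$ is Borel on $\sX_0$, and $V(\pi)<\infty$ by \Cref{lem:second_moment_B}.

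The lower bound is then immediate. For every $T\in\mathcal T_\mu$ and every $x\in\sX_0$ we have $F_x(T(x))\ge V_\pi(x)$. Integrating gives $\mathcal E_\pi(T)\ge V(\pi)$. For the upper bound, I take the Borel selector $B_\pi$ from \Cref{prop:borel_selection}. It satisfies $F_x(B_\pi(x))=V_\pi(x)$ for $\mu$-a.e.\ $x$, hence $\mathcal E_\pi(B_\pi)=V(\pi)$. This proves the equality of the infimum with $V(\pi)$, shows that the infimum is attained, and gives the final sentence of the theorem.

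For the characterization, the ``if'' direction is the same computation with $T$ in place of $B_\pi$. For ``only if'', suppose $\mathcal E_\pi(T)=V(\pi)$. Then the function $g(x):=F_x(T(x))-V_\pi(x)$ is nonnegative on $\sX_0$. Its integral equals $\mathcal E_\pi(T)-V(\pi)=0$, and the subtraction is legitimate precisely because $V(\pi)<\infty$. Hence $g=0$ $\mu$-a.e., which means $T(x)\in\argmin F_x=\mathcal B_\pi(x)$ for $\mu$-a.e.\ $x$.

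I expect the only real obstacle to be the measurability and finiteness bookkeeping, not the inequality itself. Three things must be checked: that $x\mapsto F_x(T(x))$ and $V_\pi$ are Borel, that the disintegration formula for $\mathcal E_\pi$ is valid for arbitrary Borel $T$ (Tonelli on nonnegative integrands), and that $V(\pi)$ is finite so that the excess $\Delta_\pi(T)$ is a well-defined integral of a nonnegative function. I would handle the first via the monotone-class fact for kernels together with the countable-dense-set reduction. That reduction uses the continuity of $F_x$ from \Cref{prop:existence_compactness}.
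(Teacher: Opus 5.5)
Your proposal is correct and follows the paper's proof. Both integrate the pointwise bound $F_x(T(x))\ge V_\pi(x)$, attain the infimum with the Borel selector of \Cref{prop:borel_selection}, and characterize minimizers by the vanishing of a nonnegative integrand. Your added bookkeeping makes explicit what the paper leaves implicit: the Tonelli/kernel justification of $\mathcal E_\pi(T)=\int F_x(T(x))\,d\mu(x)$, Borel measurability of $V_\pi$, and the finiteness of $V(\pi)$ from \Cref{lem:second_moment_B} that legitimizes the subtraction.
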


This theorem is the intrinsic analogue of the Euclidean barycentric projection identity. Its simplicity is part of the point. Once the correct conditional objective is chosen, the global plan-to-map problem is exactly the integral of pointwise Fr\'echet mean problems.

When $\sM=\R^d$, the minimizer of $z\mapsto \frac12\int\|z-y\|^2\,d\pi(y\mid x)$ is the conditional expectation $\int y\,d\pi(y\mid x)$. In that case \Cref{thm:best_deterministic} is the usual $L^2$ optimality of conditional expectation. The contribution here is not to rediscover this Euclidean fact, but to formulate its manifold analogue with Fr\'echet means, measurable selections, curvature-dependent uniqueness, and a geometric variance interpretation.

\begin{definition}[Conditional-variance Monge defect]
\label{def:monge_defect}
The quantity
\[
V_\pi(x)=\inf_{z\in\sM}\frac12\int_{\sM} d(z,y)^2\,d\pi(y\mid x)
\]
is called the {conditional Fr\'echet variance} of $\pi$ at $x$. The integrated value
\[
V(\pi)=\int_{\sM} V_\pi(x)\,d\mu(x)
\]
is called the {conditional-variance Monge defect} of $\pi$.
\end{definition}

By \Cref{thm:best_deterministic}, this defect is exactly the irreducible squared-geodesic error incurred when a possibly mass-splitting coupling is forced to be represented by a deterministic map. The next two consequences check that this quantity has the desirable behavior in the deterministic limit. First, if the coupling was already induced by a map, barycentric projection recovers that map. Second, zero defect characterizes precisely this situation.

\begin{corollary}[Monge consistency]
\label{cor:monge_consistency}
If $\pi=(\id,T)_\#\mu$ for some Borel map $T:\sM\to\sM$, then
\[
\mathcal B_\pi(x)=\{T(x)\}
\quad\text{for }\mu\text{-almost every }x.
\]
Consequently, every intrinsic barycentric projection satisfies $B_\pi=T$ $\mu$-almost everywhere.
\end{corollary}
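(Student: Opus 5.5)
The plan is to identify the conditional destination law of a map-induced coupling explicitly and then read off the Fr\'echet minimizer directly. First, I will show that for $\pi=(\id,T)_\#\mu$ the kernel $x\mapsto\delta_{T(x)}$ is a disintegration of $\pi$ with respect to $\mu$. For Borel $A,C\subset\sM$,
\[
\pi(A\times C)=\mu\bigl(A\cap T^{-1}(C)\bigr)=\int_A \delta_{T(x)}(C)\,d\mu(x).
\]
The map $x\mapsto\delta_{T(x)}(C)=\mathbf 1_{T^{-1}(C)}(x)$ is Borel because $T$ is Borel, so this is a Borel probability kernel. Rectangles form a $\pi$-system generating the product $\sigma$-algebra, so the identity extends to all Borel subsets of $\sM\times\sM$. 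By the $\mu$-a.e.\ uniqueness of disintegrations recorded in the setup, $\pi(\cdot\mid x)=\delta_{T(x)}$ for $\mu$-almost every $x$.

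Second, on this full-measure set the conditional Fr\'echet functional becomes
\[
F_x(z)=\tfrac12\int d(z,y)^2\,d\delta_{T(x)}(y)=\tfrac12 d(z,T(x))^2 .
\]
This is nonnegative and vanishes only at $z=T(x)$, because $d$ is a genuine metric. Hence $\mathcal B_\pi(x)=\{T(x)\}$ and $V_\pi(x)=0$. This needs no curvature hypothesis, so neither regime \textbf{(H)} nor \textbf{(L)} is invoked.

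Third, if $B_\pi$ is any intrinsic barycentric projection, then $B_\pi(x)\in\mathcal B_\pi(x)=\{T(x)\}$ for $\mu$-a.e.\ $x$. This uses the intersection of two full-measure sets: the one where the selection property holds and the one where the disintegration identification holds. It gives $B_\pi=T$ $\mu$-a.e. As a consistency check, \Cref{thm:best_deterministic} then gives $V(\pi)=0=\mathcal E_\pi(T)$.

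The only step that needs care is the first, namely making the disintegration identification rigorous through the uniqueness of the kernel. It is routine once the $\pi$-system argument is in place, so I expect no real obstacle.
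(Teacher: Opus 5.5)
Your proposal is correct and follows the same route as the paper: identify the conditional destination law as $\delta_{T(x)}$, reduce $F_x$ to $\frac12 d(z,T(x))^2$, and read off the unique minimizer $T(x)$. Your $\pi$-system argument with uniqueness of disintegrations simply writes out the kernel-identification step that the paper states without justification.
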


\begin{theorem}[Characterization of map-induced plans]
\label{thm:monge_characterization}
For $\pi\in\Pi(\mu,\nu)$, the following are equivalent:
\begin{enumerate}[label=(\roman*)]
    \item $V(\pi)=0$.
    \item $V_\pi(x)=0$ for $\mu$-almost every $x$.
    \item There exists a Borel map $T:\sM\to\sM$ such that
    \[
    d\pi(y\mid x)=\delta_{T(x)}(dy)
    \quad\text{for }\mu\text{-almost every }x.
    \]
    \item $\pi=(\id,T)_\#\mu$ for some Borel map $T:\sM\to\sM$.
\end{enumerate}
\end{theorem}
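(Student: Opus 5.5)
We prove the cycle (i)$\Rightarrow$(ii)$\Rightarrow$(iii)$\Rightarrow$(iv)$\Rightarrow$(i).

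\textbf{(i)$\Rightarrow$(ii).} Since $V_\pi(x)\ge 0$ for every $x$ and $V(\pi)=\int V_\pi\,d\mu=0$, we get $V_\pi=0$ $\mu$-almost everywhere. This uses only that $V_\pi$ is $\mu$-measurable, which we take from \Cref{thm:best_deterministic}: it gives $V(\pi)=\mathcal E_\pi(B_\pi)$ with $x\mapsto F_x(B_\pi(x))$ measurable and equal to $V_\pi(x)$ on the full-measure set $\sX_0$.

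\textbf{(ii)$\Rightarrow$(iii).} This is the substantive step. Let $B_\pi$ be the Borel selector from \Cref{prop:borel_selection}. For $\mu$-a.e.\ $x$ we have $x\in\sX_0$, $B_\pi(x)\in\mathcal B_\pi(x)$ and $V_\pi(x)=0$, so
\[
\frac12\int d(B_\pi(x),y)^2\,d\pi(y\mid x)=0 .
\]
The integrand is nonnegative and vanishes only at $y=B_\pi(x)$. Hence $\pi(\cdot\mid x)$ gives full mass to $\{B_\pi(x)\}$, that is, $\pi(\cdot\mid x)=\delta_{B_\pi(x)}$. Taking $T=B_\pi$, which is Borel, gives (iii). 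The only delicate point is having a Borel $T$ rather than an arbitrary pointwise choice, and \Cref{prop:borel_selection} supplies exactly this. Alternatively, one can define $T(x)=\int y\,d\pi(y\mid x)$ in a coordinate embedding, but the selector route is cleaner.

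\textbf{(iii)$\Rightarrow$(iv).} Test against Borel sets $A\times C$:
\[
\pi(A\times C)=\int_A \pi(C\mid x)\,d\mu(x)=\int_A \1_C(T(x))\,d\mu(x)=(\id,T)_\#\mu(A\times C).
\]
Rectangles form a $\pi$-system generating the Borel $\sigma$-algebra of $\sM\times\sM$, because $\sM$ is Polish and hence second countable. By uniqueness of measures agreeing on a $\pi$-system, $\pi=(\id,T)_\#\mu$.

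\textbf{(iv)$\Rightarrow$(i).} By essential uniqueness of disintegrations, $\pi(\cdot\mid x)=\delta_{T(x)}$ for $\mu$-a.e.\ $x$; this is also the content of \Cref{cor:monge_consistency}. Then $F_x(T(x))=0$, so $V_\pi(x)=0$ almost everywhere and $V(\pi)=0$. Equivalently, $\mathcal E_\pi(T)=\frac12\int d(T(x),y)^2\,d\pi=0$, so \Cref{thm:best_deterministic} gives $V(\pi)\le 0$, and nonnegativity gives equality.

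The main obstacle is only bookkeeping of null sets and measurability in (ii)$\Rightarrow$(iii), handled by intersecting the finitely many full-measure sets involved.
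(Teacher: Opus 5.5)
Your proposal is correct and follows the paper's proof essentially step by step. You use nonnegativity of $V_\pi$ for (i)$\Leftrightarrow$(ii), the Borel selector $B_\pi$ with $\frac12\int d(B_\pi(x),y)^2\,d\pi(y\mid x)=0$ forcing $\pi(\cdot\mid x)=\delta_{B_\pi(x)}$ for (ii)$\Rightarrow$(iii), the disintegration formula for (iii)$\Rightarrow$(iv), and Monge consistency (essential uniqueness of the disintegration) for (iv)$\Rightarrow$(i). The only cosmetic difference is that you identify $\pi$ with $(\id,T)_\#\mu$ by checking rectangles and a $\pi$-system argument, whereas the paper tests against all bounded Borel functions on $\sM\times\sM$.
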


The preceding results measure fidelity to the original coupling. A second question is whether the deterministic representative still sends the source distribution near the target distribution. The next bound will answer this directly, which states that the deterministic pushforward cannot be farther from the target, in quadratic Wasserstein cost, than the conditional variance  lost by collapsing the plan to a map.

\begin{proposition}[Target mismatch bound]
\label{prop:target_mismatch}
Let $B_\pi$ be an intrinsic barycentric projection. Then
\[
\frac12 W_2^2((B_\pi)_\#\mu,\nu)\le V(\pi).
\]
\end{proposition}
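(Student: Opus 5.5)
The plan is to exhibit one explicit coupling between $(B_\pi)_\#\mu$ and $\nu$ whose cost equals $\mathcal E_\pi(B_\pi)$, and then invoke \Cref{thm:best_deterministic}. The natural candidate is the pushforward of $\pi$ under the map $(x,y)\mapsto (B_\pi(x),y)$:
\[
\sigma := (B_\pi\circ \mathrm{pr}_1,\ \mathrm{pr}_2)_\#\pi \in \mathcal P(\sM\times\sM).
\]
Since $B_\pi$ is Borel by \Cref{prop:borel_selection}, this map is Borel and $\sigma$ is a well-defined Borel probability measure.

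Next we check the marginals. The first marginal of $\sigma$ is $(B_\pi\circ\mathrm{pr}_1)_\#\pi=(B_\pi)_\#(\mathrm{pr}_1)_\#\pi=(B_\pi)_\#\mu$, and the second marginal is $(\mathrm{pr}_2)_\#\pi=\nu$. Hence $\sigma\in\Pi((B_\pi)_\#\mu,\nu)$. By \Cref{lem:second_moment_B}, $(B_\pi)_\#\mu\in\mathcal P_2(\sM)$, so $W_2((B_\pi)_\#\mu,\nu)$ is defined and finite.

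Then we bound $W_2$ by the cost of this particular coupling and apply the change-of-variables formula for pushforwards:
\[
\frac12 W_2^2((B_\pi)_\#\mu,\nu)\le \frac12\int d(z,y)^2\,d\sigma(z,y)=\frac12\int d(B_\pi(x),y)^2\,d\pi(x,y)=\mathcal E_\pi(B_\pi).
\]
By \Cref{thm:best_deterministic}, $\mathcal E_\pi(B_\pi)=V(\pi)$, which gives the claim. Finiteness of $V(\pi)$ also comes from \Cref{lem:second_moment_B}.

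No step is really an obstacle. The only point needing care is that the $W_2$ infimum ranges over couplings of $(B_\pi)_\#\mu$ and $\nu$, and $\sigma$ qualifies. This rests only on Borel measurability of $B_\pi$ and the marginal computation above, which is why \Cref{prop:borel_selection} and \Cref{lem:second_moment_B} were established first.
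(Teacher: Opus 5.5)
Your proof is correct and matches the paper's argument essentially line for line. The paper also pushes $\pi$ forward under $(B_\pi\circ\mathrm{pr}_1,\mathrm{pr}_2)$, uses \Cref{lem:second_moment_B} to make the left-hand side well defined, bounds $W_2^2$ by the cost of this coupling, and concludes with $\mathcal E_\pi(B_\pi)=V(\pi)$ from \Cref{thm:best_deterministic} (Borel measurability of $B_\pi$ is already part of the definition of an intrinsic barycentric projection, so citing \Cref{prop:borel_selection} for it is not needed).
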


\subsection{Hadamard strengthening}

The results so far use completeness, finite second moments, and measurable selection. Yet they do not use curvature except through possible uniqueness assumptions. Nonpositive curvature adds a genuinely stronger statement. In the Hadamard regime, barycenters satisfy a variance inequality, and after conditioning this becomes a quantitative energy gap for the whole plan-to-map problem \citep{sturm_2003_ProbabilityMeasuresMetric}. 

\begin{theorem}[Hadamard energy gap]
\label{thm:hadamard_gap}
Assume regime \textbf{(H)}. Then for every $T\in\mathcal T_\mu$,
\[
\mathcal E_\pi(T)-V(\pi)
\ge
\frac12\int_{\sM} d(T(x),B_\pi(x))^2\,d\mu(x).
\]
In particular, $B_\pi$ is the unique minimizer of $\mathcal E_\pi$ up to $\mu$-almost everywhere equality.
\end{theorem}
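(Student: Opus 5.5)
The plan is to prove a pointwise variance inequality for each conditional destination law and then integrate it over $\mu$, using \Cref{thm:best_deterministic} to identify the integrated value with $V(\pi)$. The key tool is the strong convexity of squared distance in a Hadamard manifold. For fixed $y\in\sM$ and any geodesic $\gamma:[0,1]\to\sM$, the CAT(0) comparison (Toponogov, or equivalently the Hessian bound $\nabla^2 \tfrac12 d(\cdot,y)^2\succeq g$ in nonpositive curvature) gives
\[
\tfrac12 d(\gamma_t,y)^2\le (1-t)\tfrac12 d(\gamma_0,y)^2+t\,\tfrac12 d(\gamma_1,y)^2-\tfrac{t(1-t)}{2}d(\gamma_0,\gamma_1)^2 .
\]

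\textbf{Pointwise inequality.} Fix $x\in\sX_0$ from \Cref{prop:existence_compactness}, so that $F_x$ is finite. Integrating the displayed inequality against $\pi(\cdot\mid x)$ shows that $F_x$ is $1$-strongly geodesically convex. Let $b=B_\pi(x)$ be a minimizer and $z$ any point, and join $b$ to $z$ by the unique geodesic $\gamma$ with $\gamma_0=b$, $\gamma_1=z$. Minimality gives $F_x(b)\le F_x(\gamma_t)$. Combining this with strong convexity,
\[
F_x(b)\le (1-t)F_x(b)+tF_x(z)-\tfrac{t(1-t)}2 d(b,z)^2 .
\]
Subtract $(1-t)F_x(b)$, divide by $t>0$ and let $t\downarrow0$. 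This yields Sturm's variance inequality
\[
F_x(z)-V_\pi(x)\ge \tfrac12 d(z,b)^2 .
\]
Applied to two minimizers, the inequality forces them to coincide, so $\mathcal B_\pi(x)$ is a singleton. This matches the uniqueness clause of \Cref{prop:borel_selection}.

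\textbf{Integration.} Take $z=T(x)$ and integrate over $x$ with respect to $\mu$. The left side integrates to $\mathcal E_\pi(T)-V(\pi)$, using $\mathcal E_\pi(T)=\int F_x(T(x))\,d\mu$ and $V(\pi)<\infty$ from \Cref{lem:second_moment_B}. If $\mathcal E_\pi(T)=+\infty$ the inequality holds trivially, so assume it is finite. Measurability of $x\mapsto d(T(x),B_\pi(x))^2$ follows because $T$ and $B_\pi$ are Borel and $d$ is continuous. Measurability of $x\mapsto F_x(T(x))$ follows from the Borel kernel property of the disintegration, through a standard monotone class argument on $(x,z)\mapsto F_x(z)$. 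This gives the claimed gap.

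\textbf{Uniqueness.} If $T$ attains the infimum, then by \Cref{thm:best_deterministic} the left side is zero. Hence $d(T,B_\pi)=0$ $\mu$-a.e., that is, $T=B_\pi$ $\mu$-almost everywhere.

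\textbf{Main obstacle.} The main difficulty is justifying the strong convexity comparison inequality globally, which requires the Hadamard assumptions. Completeness and simple connectivity give unique geodesics and a global diffeomorphic $\exp$. Nonpositive curvature gives the CAT(0) comparison with Euclidean triangles, which is exactly the displayed inequality, or I would cite \citet{sturm_2003_ProbabilityMeasuresMetric} for it. One also has to handle the case $t\downarrow 0$ carefully when $F_x(z)$ is finite; this holds since $F_x$ is finite on $\sX_0$.
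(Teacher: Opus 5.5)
Your proposal is correct and follows the paper's proof: it establishes the pointwise variance inequality $F_x(z)-V_\pi(x)\ge\frac12 d(z,B_\pi(x))^2$, sets $z=T(x)$, integrates against $\mu$, and reads off uniqueness from the zero-gap case. The only difference is that you derive this variance inequality yourself, from the CAT(0) strong convexity of $\frac12 d(\cdot,y)^2$ (the same inequality used in the proof of \Cref{prop:borel_selection}) via a first-order argument along the geodesic from $B_\pi(x)$ to $z$, whereas the paper quotes it directly as the standard Hadamard barycenter variance inequality; your handling of the case $\mathcal E_\pi(T)=+\infty$ and of measurability fills in details the paper leaves implicit.
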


The energy gap immediately turns variational near-optimality into metric closeness to the intrinsic projection. A notable consequence is as follows. Under regime \textbf{(H)}, if $T_n\in\mathcal T_\mu$ satisfies $\mathcal E_\pi(T_n)\to V(\pi)$, then 
\begin{equation*}
    \int_{\sM} d(T_n(x),B_\pi(x))^2\,d\mu(x)\to 0.
\end{equation*}
This establishes stability of approximate deterministic minimizers, which is immediate from \Cref{thm:hadamard_gap}.

\section{Tangential projection as a displacement surrogate}
\label{sec:tangent}

The intrinsic projection is the variational solution of the plan-to-map problem. The tangential projection has a different role: it averages displacement coordinates at the source point. This section makes that role precise. The results below do not claim that the tangential projection is generated by an optimal potential for a mass-splitting plan. Rather, they show that it is compatible with the Brenier--McCann displacement representation in the Monge case and is a first-order surrogate for the intrinsic Fr\'echet objective.

\subsection{Tangent-space barycenter and measurability}

For the intrinsic projection, measurability was the main technical issue. For the tangential projection, the first issue is domain: the logarithm map may be multivalued or undefined on the cut locus. In the Hadamard regime this obstruction disappears, which gives the cleanest setting in which to treat the tangential projection as an honest map.

\begin{proposition}[Well-posedness in the Hadamard regime]
\label{prop:tangent_hadamard}
Assume regime \textbf{(H)}. Then $\mathrm{Dom}_2(\widetilde B_\pi)$ has full $\mu$-measure.
\end{proposition}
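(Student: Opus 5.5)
In the Hadamard regime the cut locus is empty, so the logarithm is defined everywhere and the domain condition reduces to a second-moment condition. The plan has two steps: dispose of single-valuedness using the Cartan--Hadamard theorem, then show $\int\|\log_x(y)\|_x^2\,d\pi(y\mid x)<\infty$ for $\mu$-almost every $x$ by integrating against $\mu$ and using the finite second moments of $\mu$ and $\nu$.

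\emph{Single-valuedness.} Under \textbf{(H)}, the Cartan--Hadamard theorem says that for every $x\in\sM$ the map $\exp_x:T_x\sM\to\sM$ is a diffeomorphism. Hence $\Cut(x)=\emptyset$, and every $y$ is joined to $x$ by a unique minimizing geodesic. So $y\mapsto\log_x(y)=\exp_x^{-1}(y)$ is single-valued and continuous on all of $\sM$. In particular it is $\pi(\cdot\mid x)$-almost surely single-valued for every $x$, not merely almost every $x$. Continuity in $y$ also gives Borel measurability of the integrand, so the conditional integral makes sense.

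\emph{Square-integrability.} Since $\log_x(y)$ is the initial velocity of a minimizing geodesic parametrized on $[0,1]$, we have
\[
\|\log_x(y)\|_x=d(x,y),
\qquad\text{so}\qquad
\int_{\sM}\|\log_x(y)\|_x^2\,d\pi(y\mid x)=2F_x(x).
\]
By \Cref{prop:existence_compactness}, there is a Borel set $\sX_0$ of full $\mu$-measure on which $F_x$ is finite-valued everywhere, and in particular $F_x(x)<\infty$. Alternatively, directly: fix $o\in\sM$. Using disintegration,
\[
\int_{\sM}\int_{\sM}d(x,y)^2\,d\pi(y\mid x)\,d\mu(x)=\int d(x,y)^2\,d\pi\le 2\int d(x,o)^2\,d\mu+2\int d(o,y)^2\,d\nu<\infty.
\]
Thus the inner integral is finite for $\mu$-almost every $x$. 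Measurability of $x\mapsto\int d(x,y)^2\,d\pi(y\mid x)$ follows from the Borel kernel property, via Tonelli for nonnegative Borel functions on $\sM\times\sM$.

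Combining the two steps, $\mathrm{Dom}_2(\widetilde B_\pi)\supseteq\sX_0$, which has full $\mu$-measure. The only delicate point is whether $\mathrm{Dom}_2(\widetilde B_\pi)$ is itself measurable. Under \textbf{(H)} the single-valuedness clause is vacuous, so the domain equals $\{x:\int d(x,y)^2\,d\pi(y\mid x)<\infty\}$. This set is Borel by the measurability just noted, so full measure is meant in the honest sense rather than as an outer-measure statement.
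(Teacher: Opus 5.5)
Your proof is correct and follows the paper's own argument. Cartan--Hadamard makes $\log_x$ globally single-valued with $\|\log_x(y)\|_x=d(x,y)$, and the integral $\int d(x,y)^2\,d\pi<\infty$, which follows from the two marginal second moments, is disintegrated over $\mu$ to give conditional square-integrability for $\mu$-almost every $x$. Your extra remarks are valid refinements the paper leaves implicit: the route through $\sX_0$ needs only $\nu\in\mathcal P_2(\sM)$, and under \textbf{(H)} the domain is itself a Borel set.
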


Full-measure well-posedness is still not enough if we want to insert $\widetilde B_\pi$ into the same losses used for deterministic maps. The following lemma records the corresponding Borel measurability statement in the global nonpositive-curvature setting.

\begin{lemma}[Measurability of the tangential projection in the Hadamard regime]
\label{lem:tangent_measurable}
Assume regime \textbf{(H)}. Then $\mathrm{Dom}_2(\widetilde B_\pi)$ contains a Borel full-$\mu$-measure set on which $x\mapsto m_\pi(x)$ is a Borel section of the tangent bundle $T\sM$, and $x\mapsto\widetilde B_\pi(x)$ is Borel. After arbitrary extension outside this set, $\widetilde B_\pi$ may be treated as an element of $\mathcal T_\mu$.
\end{lemma}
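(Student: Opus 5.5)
In the Hadamard regime the map $\exp$ is a global diffeomorphism at every base point, so we will work with one globally defined smooth map. Define
\[
L:\sM\times\sM\to T\sM,\qquad L(x,y):=\log_x(y)=\big(\exp_x\big)^{-1}(y).
\]
By Cartan--Hadamard, $\mathrm{Cut}(x)=\emptyset$ for every $x$. The map $E:T\sM\to\sM\times\sM$, $E(x,v)=(x,\exp_x v)$, is then a smooth bijection whose differential is everywhere invertible, since there are no conjugate points. Hence $E$ is a diffeomorphism and $L=E^{-1}$ composed with the bundle identification is smooth, in particular Borel.

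The set we want is the $\sX_0$ of \Cref{prop:existence_compactness}, intersected with the Borel set
\[
\sX_1:=\Big\{x:\textstyle\int \|L(x,y)\|_x^2\,d\pi(y\mid x)<\infty\Big\}.
\]
Since $\|L(x,y)\|_x^2=d(x,y)^2$ is a nonnegative Borel function on $\sM\times\sM$, the map $x\mapsto\int d(x,y)^2\,d\pi(y\mid x)$ is Borel. This is the standard property of Borel probability kernels, proved by a monotone class argument starting from product indicators. Its integral against $\mu$ equals $\int d^2\,d\pi<\infty$, so $\sX_1$ is Borel and of full measure; this also recovers \Cref{prop:tangent_hadamard}.

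Next, $m_\pi$ is a Borel section on $\sX_1$. To reduce vector-valued integration to scalars, we trivialize $T\sM$. Cover $\sM$ by countably many coordinate charts $U_k$ with smooth local frames $(e^k_1,\dots,e^k_n)$, and let $\{A_k\}$ be the Borel partition obtained by taking $A_k:=U_k\setminus\bigcup_{j<k}U_j$. On $A_k$, the frame coefficients $c^k_i(x,y)$ of $L(x,y)$ are continuous functions on $U_k\times\sM$. Each $|c^k_i|$ is bounded by $C_k(x)\,d(x,y)$, with $C_k$ locally bounded, because the frame is comparable to an orthonormal frame on compact subsets. Thus the $c^k_i(x,\cdot)$ are $\pi(\cdot\mid x)$-integrable for $x\in\sX_1$. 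Splitting $c^k_i$ into positive and negative parts and applying the same kernel-measurability fact shows that $x\mapsto\int c^k_i(x,y)\,d\pi(y\mid x)$ is Borel on $A_k\cap\sX_1$. Therefore
\[
m_\pi(x)=\sum_i\Big(\int c^k_i(x,y)\,d\pi(y\mid x)\Big)e^k_i(x)
\]
is a Borel section on each piece, hence on $\sX_1$.

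Finally, $\exp:T\sM\to\sM$ is continuous, so $\widetilde B_\pi=\exp\circ m_\pi$ is Borel on $\sX_1$. Extending by, say, $\widetilde B_\pi(x)=x$ off $\sX_1$ gives an element of $\mathcal T_\mu$.

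The main obstacle is the careful justification that parameter-dependent integrals of a vector bundle section against a kernel are Borel, since the integrand lives in a fiber that varies with $x$. The local trivialization with a countable Borel partition handles this. The only delicate points are the integrability bound on the frame coefficients and the measurability of the positive and negative parts, both of which the kernel lemma covers.
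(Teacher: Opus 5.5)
Your proof is correct and follows essentially the same route as the paper. Both arguments use:
\begin{enumerate}
  \item regularity of $(x,y)\mapsto\log_x(y)$ on a Hadamard manifold (you prove smoothness via the diffeomorphism $(x,v)\mapsto(x,\exp_x v)$, while the paper only asserts continuity);
  \item Borel measurability and full measure of $\sX_1$ via the kernel property;
  \item componentwise integration in local trivializations over a countable atlas;
  \item continuity of $\exp$.
\end{enumerate}
The only technical variation is how you handle the unbounded frame coefficients. You split them into positive and negative parts and use the pointwise bound $|c^k_i|\le C_k(x)\,d(x,y)$ to get finiteness on $\sX_1$. The paper instead truncates at level $R$ and passes to the limit by dominated convergence. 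Both work, and your version avoids needing the truncated components to be jointly bounded.
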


Once the logarithms are well-defined, the tangential construction has a very simple meaning as an ordinary Hilbert-space barycenter, but in the tangent space at the source point. This is the next proposition.

\begin{proposition}[Tangential variational characterization]
\label{prop:tangent_variational}
Let $x\in\mathrm{Dom}_2(\widetilde B_\pi)$, and define
\[
G_x(v):=\frac12\int_{\sM}\|v-\log_x(y)\|_x^2\,d\pi(y\mid x),
\qquad v\in T_x\sM.
\]
Then $G_x$ is strictly convex and has the unique minimizer $m_\pi(x)$. More precisely,
\[
G_x(v)=G_x(m_\pi(x))+\frac12\|v-m_\pi(x)\|_x^2.
\]
\end{proposition}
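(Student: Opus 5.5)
The plan is to reduce everything to the bias--variance identity in the finite-dimensional inner-product space $(T_x\sM,\langle\cdot,\cdot\rangle_x)$. Fix $x\in\mathrm{Dom}_2(\widetilde B_\pi)$ and write $\ell(y):=\log_x(y)$. This is defined for $\pi(\cdot\mid x)$-almost every $y$. The first step is to check that the Bochner integral $m_\pi(x)=\int \ell(y)\,d\pi(y\mid x)$ is well defined. Square integrability of $\|\ell\|_x$ is part of the definition of the domain, and by Cauchy--Schwarz it gives $\int\|\ell\|_x\,d\pi(y\mid x)<\infty$ because $\pi(\cdot\mid x)$ is a probability measure. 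Measurability of $y\mapsto\ell(y)$ on the complement of the cut locus is standard: $\log_x$ is continuous, indeed smooth, on $\sM\setminus\Cut(x)$, and $\Cut(x)$ is closed. Since $T_x\sM$ is finite dimensional, one may simply integrate coordinates in an orthonormal basis. In particular $G_x(v)<\infty$ for every $v$, since $\|v-\ell\|_x^2\le 2\|v\|_x^2+2\|\ell\|_x^2$.

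The second step is the expansion. For $v\in T_x\sM$, write $v-\ell(y)=(v-m)+(m-\ell(y))$ with $m:=m_\pi(x)$. Then
\[
\tfrac12\|v-\ell(y)\|_x^2=\tfrac12\|v-m\|_x^2+\langle v-m,\,m-\ell(y)\rangle_x+\tfrac12\|m-\ell(y)\|_x^2 .
\]
Each term is integrable, so we can integrate against $\pi(\cdot\mid x)$. The cross term is linear in $\ell$, so by linearity of the Bochner integral (or coordinatewise) it integrates to $\langle v-m,\,m-\int\ell\,d\pi(y\mid x)\rangle_x=0$. What remains is exactly $G_x(v)=G_x(m)+\tfrac12\|v-m\|_x^2$.

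The conclusions then follow directly. The identity shows that $G_x(v)>G_x(m)$ whenever $v\neq m$, so $m_\pi(x)$ is the unique minimizer. Strict convexity holds because $G_x$ is a finite constant plus the strictly convex quadratic $\tfrac12\|v-m\|_x^2$.

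The only point needing care is the integrability and measurability bookkeeping for $\ell$ in the first step, which justifies splitting the integral and killing the cross term. I expect no genuine obstacle, since all the geometry is frozen at the single tangent space $T_x\sM$.
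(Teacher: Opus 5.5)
Your proof is correct and takes the same route as the paper: you split $v-\log_x(y)=(v-m_\pi(x))+(m_\pi(x)-\log_x(y))$, expand the square in $T_x\sM$, and the cross term vanishes because $\log_x$ integrates to $m_\pi(x)$. The extra bookkeeping you add makes explicit the justification for splitting the integral, which the paper leaves implicit: a finite first moment via Cauchy--Schwarz, Borel measurability of $\log_x$ off the closed set $\Cut(x)$, and integrability of each term.
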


Thus the tangential projection is the Euclidean barycenter of the log-mapped conditional destination law in $T_x\sM$, pushed back to the manifold by $\exp_x$. This description also explains why there is no distinction between the intrinsic and tangential projections in flat space.

\begin{corollary}[Euclidean exactness]
\label{cor:euclidean_exactness}
If $\sM=\R^d$ with its Euclidean metric, then
\[
\widetilde B_\pi(x)=B_\pi(x)
\quad\text{for }\mu\text{-almost every }x.
\]
More explicitly,
\[
\widetilde B_\pi(x)=\int_{\R^d}y\,d\pi(y\mid x).
\]
\end{corollary}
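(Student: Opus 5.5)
The approach is to specialize everything to flat space, where the logarithm and exponential maps are affine, and then invoke \Cref{prop:tangent_variational} and \Cref{thm:best_deterministic}. The first step is to note that $\R^d$ with the Euclidean metric is complete, simply connected and flat, so it lies in regime \textbf{(H)}. Moreover, $\Cut(x)=\emptyset$ for every $x$, and we have the explicit formulas
\[
\log_x(y)=y-x,\qquad \exp_x(v)=x+v,
\]
where every tangent space is identified with $\R^d$. By \Cref{prop:existence_compactness}, for $\mu$-almost every $x$ the conditional law has finite second moment: finiteness of $F_x(x)$ is exactly $\int\|y-x\|^2\,d\pi(y\mid x)<\infty$. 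Hence $x\in\mathrm{Dom}_2(\widetilde B_\pi)$ for $\mu$-a.e.\ $x$; alternatively, this follows directly from \Cref{prop:tangent_hadamard}.

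The second step computes the tangential projection. For such $x$ we have
\[
m_\pi(x)=\int (y-x)\,d\pi(y\mid x)=\int y\,d\pi(y\mid x)-x,
\]
and therefore $\widetilde B_\pi(x)=x+m_\pi(x)=\int y\,d\pi(y\mid x)$, which is the explicit formula in the statement.

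The third step identifies the intrinsic barycentric set. Write $\bar y(x):=\int y\,d\pi(y\mid x)$. The change of variables $v=z-x$ turns $F_x(z)$ into $G_x(z-x)$, so \Cref{prop:tangent_variational} gives the bias--variance identity
\[
F_x(z)=F_x(\bar y(x))+\tfrac12\|z-\bar y(x)\|^2 .
\]
This can also be checked directly by expanding the square, since the cross term vanishes by the definition of $\bar y(x)$. Consequently $\mathcal B_\pi(x)=\{\bar y(x)\}$ is a singleton for $\mu$-a.e.\ $x$.

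Finally, any intrinsic barycentric projection $B_\pi$ satisfies $B_\pi(x)\in\mathcal B_\pi(x)$ $\mu$-a.e., so $B_\pi(x)=\bar y(x)=\widetilde B_\pi(x)$ $\mu$-a.e. This is consistent with the a.e.\ uniqueness in \Cref{prop:borel_selection}. There is no real obstacle here. The only point requiring care is bookkeeping of null sets: the statement should be read on the intersection of the full-measure set $\sX_0$ from \Cref{prop:existence_compactness} with the full-measure set on which $B_\pi(x)\in\mathcal B_\pi(x)$.
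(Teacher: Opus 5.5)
Your proof is correct and follows the same route as the paper: you use $\log_x(y)=y-x$ and $\exp_x(v)=x+v$ to get $\widetilde B_\pi(x)=\int y\,d\pi(y\mid x)$, and then identify this point as the unique minimizer of $F_x$. You also make explicit two things the paper leaves implicit: the domain and null-set bookkeeping, and the identity $F_x(z)=F_x(\bar y(x))+\tfrac12\|z-\bar y(x)\|^2$ (obtained via \Cref{prop:tangent_variational} or by expanding the square), which justifies the uniqueness claim.
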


\subsection{Displacement compatibility and gradient update}

Euclidean exactness is a flat-space sanity check. The next result shows that
the tangential projection also matches the displacement representation of
quadratic optimal maps on manifolds. In the Monge case, it recovers the usual
Brenier--McCann displacement field. For mass-splitting plans, it averages the
corresponding logarithmic displacement coordinates row by row.

\begin{proposition}[Monge compatibility of the tangential projection]
\label{prop:tangent_monge}
Assume $\pi=(\id,T)_\#\mu$ for some Borel map $T:\sM\to\sM$, and assume
\[
T(x)\notin\Cut(x)
\quad\text{for }\mu\text{-almost every }x.
\]
Then
\[
m_\pi(x)=\log_x(T(x))
\quad\text{and}\quad
\widetilde B_\pi(x)=T(x)
\]
for $\mu$-almost every $x$. If, in addition,
\[
T(x)=\exp_x(-\nabla\psi(x))
\]
for a differentiable potential $\psi$ at $x$, then
\[
m_\pi(x)=-\nabla\psi(x)
\]
for $\mu$-almost every such $x$.
\end{proposition}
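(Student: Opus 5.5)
The plan is to reduce everything to the identification of the conditional destination law in the Monge case, and then read off the two identities pointwise. First, since $\pi=(\id,T)_\#\mu$, the kernel $x\mapsto\delta_{T(x)}$ is a Borel probability kernel satisfying $\pi(dx,dy)=\mu(dx)\,\delta_{T(x)}(dy)$. This is verified by testing against bounded Borel $f(x,y)$: both sides give $\int f(x,T(x))\,d\mu(x)$. By the $\mu$-a.e. uniqueness of disintegrations, it follows that $d\pi(y\mid x)=\delta_{T(x)}(dy)$ for $\mu$-almost every $x$. This is also the content of (iii) in \Cref{thm:monge_characterization}, which may be cited directly. I fix a Borel set $\sX_1$ of full $\mu$-measure on which this identity holds and on which $T(x)\notin\Cut(x)$.

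Next, I check that $\sX_1\subset\mathrm{Dom}_2(\widetilde B_\pi)$. For $x\in\sX_1$ the conditional law is a Dirac mass at a point outside $\Cut(x)$. Hence $y\mapsto\log_x(y)$ is single-valued $\pi(\cdot\mid x)$-almost surely, since the only relevant point is $y=T(x)$. The second-moment condition reduces to $\|\log_x(T(x))\|_x^2=d(x,T(x))^2<\infty$. The integral defining $m_\pi(x)$ is then the integral of a $T_x\sM$-valued function against $\delta_{T(x)}$, which gives $m_\pi(x)=\log_x(T(x))$. Applying $\exp_x$ and using that $\log_x(T(x))$ is the initial velocity of the unique minimizing geodesic from $x$ to $T(x)$, so that $\exp_x(\log_x(T(x)))=T(x)$, yields $\widetilde B_\pi(x)=T(x)$.

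For the potential statement, take $x\in\sX_1$ at which $\psi$ is differentiable and $T(x)=\exp_x(-\nabla\psi(x))$. I then need $\log_x(T(x))=-\nabla\psi(x)$, which requires that $t\mapsto\exp_x(-t\nabla\psi(x))$, $t\in[0,1]$, is the unique minimizing geodesic to $T(x)$. This is the main obstacle. Since $T(x)\notin\Cut(x)$, there is a unique minimizing geodesic, but $\exp_x$ is not injective globally, so $-\nabla\psi(x)$ might a priori be a non-minimizing preimage.

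To resolve this, I would first use McCann's theory. In the setting of \citet{mccann_2001_PolarFactorizationMaps}, the potential is $c$-concave, and at points of differentiability the vector $-\nabla\psi(x)$ is the initial velocity of a minimizing geodesic from $x$ to $T(x)$. Indeed, $y=T(x)$ satisfies $\psi(x)+\psi^c(y)=c(x,y)$ with $\psi(z)+\psi^c(y)\le c(z,y)$ for all $z$. Hence $c(\cdot,y)-\psi$ is minimized at $x$, so $-\nabla\psi(x)$ lies in the superdifferential of $-c(\cdot,y)$ at $x$. For $y\notin\Cut(x)$, $c(\cdot,y)$ is differentiable at $x$ with gradient $-\log_x(y)$, which forces $-\nabla\psi(x)=\log_x(y)$.

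If the proof is meant to avoid invoking $c$-concavity, I would instead read the hypothesis as asserting that $-\nabla\psi(x)$ is the minimizing velocity, that is, $\|\nabla\psi(x)\|_x=d(x,T(x))$. Uniqueness of the minimizing geodesic off the cut locus then gives the identity. Either way, combining this with $m_\pi(x)=\log_x(T(x))$ from the previous step yields $m_\pi(x)=-\nabla\psi(x)$ for $\mu$-almost every such $x$.
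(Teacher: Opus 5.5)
Your first two steps follow the paper's proof. You identify $d\pi(y\mid x)=\delta_{T(x)}(dy)$, integrate $\log_x$ against the Dirac mass, and use $\exp_x(\log_x(T(x)))=T(x)$ off the cut locus. Your explicit check that these $x$ lie in $\mathrm{Dom}_2(\widetilde B_\pi)$ is something the paper leaves implicit.

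The routes diverge on the potential clause. The paper only says that $T(x)=\exp_x(-\nabla\psi(x))$ gives $\log_x(T(x))=-\nabla\psi(x)$ ``again away from the cut locus.'' Your objection to this is correct. The condition $T(x)\notin\Cut(x)$ makes $\log_x(T(x))$ well defined, but it does not force $-\nabla\psi(x)$ to be that particular $\exp_x$-preimage. Here is a concrete counterexample:
\begin{itemize}
\item Take the unit circle, $\mu$ uniform, and $T$ the rotation by a small angle $0<\theta<\pi$.
\item Let $\psi$ be smooth with derivative $-(\theta+2\pi)$ on an arc of positive length, and positive elsewhere so that $\psi$ is globally defined.
\item On that arc, $T(x)=\exp_x(-\nabla\psi(x))$ and $T(x)\notin\Cut(x)$.
\item Yet, identifying $T_x\sM$ with $\R$, you get $m_\pi(x)=\theta\neq\theta+2\pi=-\nabla\psi(x)$.
\end{itemize}
So the clause is false as literally stated.

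Your $c$-concavity argument supplies exactly the missing ingredient. In McCann's setting, $\psi(x)+\psi^c(T(x))=c(x,T(x))$ for $\mu$-a.e.\ $x$, so $c(\cdot,T(x))-\psi$ is minimized at $x$. Hence $-\nabla\psi(x)$ is a supergradient of $-c(\cdot,T(x))$. That function is differentiable at $x$ with gradient $\log_x(T(x))$, since $T(x)\notin\Cut(x)$ and the cut locus is symmetric. This is precisely the Brenier--McCann situation the proposition is meant to address.

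Your alternative reading, $\|\nabla\psi(x)\|_x=d(x,T(x))$, also works. It makes $t\mapsto\exp_x(-t\nabla\psi(x))$ a minimizing geodesic, and minimizing geodesics are unique off the cut locus.

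Your proof is therefore correct and more complete than the paper's. The cost is that you state explicitly a hypothesis the paper's statement leaves implicit.
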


This is the precise sense in which the tangential projection is compatible with the Brenier--McCann displacement picture: for non-Monge plans it averages the logarithmic displacement vectors that collapse to the optimal displacement field when the conditional law is Dirac. It is not claimed to be a potential-generated optimal map for a mass-splitting plan.

The previous results identify the tangential projection as a displacement average. We now connect it back to the intrinsic objective. The connection is first-order where evaluating the gradient of the conditional Fr\'echet functional at the source point gives exactly the negative of the averaged log displacement. Thus the tangential projection is the first unit Riemannian gradient update for the intrinsic conditional Fr\'echet functional. This is an identity, not a global descent guarantee.

\begin{theorem}[Tangential projection as a unit gradient update]
\label{thm:tangent_gradient}
Let $x\in\sM$ satisfy
\[
\int_{\sM} d(x,y)^2\,d\pi(y\mid x)<\infty
\quad\text{and}\quad
\pi(\Cut(x)\mid x)=0.
\]
Then $F_x$ is differentiable at the point $x$, and
\[
\nabla F_x(x)=-\int_{\sM}\log_x(y)\,d\pi(y\mid x)=-m_\pi(x).
\]
Consequently,
\[
\widetilde B_\pi(x)=\exp_x(-\nabla F_x(x)).
\]
\end{theorem}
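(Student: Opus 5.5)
The plan is to differentiate under the integral sign, using the classical first-variation formula for the squared distance function away from the cut locus. Write $f_y(z):=\tfrac12 d(z,y)^2$, so that $F_x(z)=\int f_y(z)\,d\pi(y\mid x)$.

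First I would recall the standard facts about $f_y$.
\begin{itemize}
\item If $y\notin\Cut(x)$ (equivalently $x\notin\Cut(y)$), then $f_y$ is smooth near $x$ and $\nabla f_y(x)=-\log_x(y)$. This is the Gauss lemma and first-variation formula.
\item The function $f_y$ is always locally Lipschitz. For $z,z'$ in a fixed ball $\mathbb B(x,1)$,
\[
|f_y(z)-f_y(z')|\le \tfrac12\bigl(d(z,y)+d(z',y)\bigr)\,d(z,z')\le \bigl(d(x,y)+1\bigr)\,d(z,z').
\]
\end{itemize}
By hypothesis $d(x,\cdot)^2$ is $\pi(\cdot\mid x)$-integrable, hence so is $d(x,\cdot)$, so $y\mapsto d(x,y)+1$ is an integrable dominating constant. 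Also $\|\log_x(y)\|_x=d(x,y)$ for $y\notin\Cut(x)$, so $m_\pi(x)$ is a well-defined Bochner integral in the finite-dimensional space $T_x\sM$. Measurability of $y\mapsto\log_x(y)$ on $\sM\setminus\Cut(x)$ follows from its continuity there, since $\Cut(x)$ is closed.

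The main step is the differentiation itself. Work in normal coordinates: set $\phi(v):=F_x(\exp_x v)$ for small $v\in T_x\sM$, and show
\[
\phi(v)=\phi(0)-\langle m_\pi(x),v\rangle_x+o(\|v\|_x).
\]
For each $y\notin\Cut(x)$ (a $\pi(\cdot\mid x)$-full set by assumption), the pointwise differentiability of $f_y$ gives
\[
r_y(v):=\frac{f_y(\exp_x v)-f_y(x)+\langle\log_x(y),v\rangle_x}{\|v\|_x}\to 0 \quad\text{as } v\to 0.
\]
The Lipschitz bound together with $d(x,\exp_x v)\le\|v\|_x$ gives
\[
|r_y(v)|\le 2\bigl(d(x,y)+1\bigr),
\]
which is integrable. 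Take any sequence $v_n\to0$ and apply dominated convergence to get $\int r_y(v_n)\,d\pi(y\mid x)\to0$. Since the sequence was arbitrary, $\phi$ is differentiable at $0$ with differential $-\langle m_\pi(x),\cdot\rangle_x$. Because $d\exp_x$ is the identity at $0$, this is the same as saying $F_x$ is differentiable at $x$ with $\nabla F_x(x)=-m_\pi(x)$.

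The identity $\widetilde B_\pi(x)=\exp_x(m_\pi(x))=\exp_x(-\nabla F_x(x))$ then follows directly from the definition of the tangential projection. Note that the hypotheses place $x$ in $\mathrm{Dom}_2(\widetilde B_\pi)$, since $\log_x$ is a.s. single-valued and square-integrable.

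The main obstacle is the uniform domination needed to pass the limit inside the integral. Pointwise differentiability holds only off the cut locus, and near-cut-locus destinations could destroy uniformity. This is why I use the global Lipschitz bound on $f_y$, rather than any second-order (Hessian) estimate: the Lipschitz bound needs no curvature assumption and survives the null set $\Cut(x)$. A smaller point to check is the symmetry $y\notin\Cut(x)\iff x\notin\Cut(y)$. I will cite this rather than prove it, since it is what guarantees smoothness of $f_y$ at $x$.
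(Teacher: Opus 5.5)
Your proposal is correct and follows essentially the same route as the paper. Both use pointwise first-variation differentiability of $v\mapsto\tfrac12 d(\exp_x v,y)^2$ at $v=0$ for $y\notin\Cut(x)$. Both then dominate the normalized remainder for $\|v\|_x\le 1$ with a triangle-inequality bound of the form $C\bigl(d(x,y)+1\bigr)$, and conclude by dominated convergence. Your extra checks are details the paper leaves implicit: measurability of $\log_x$ off the closed cut locus, and the fact that the hypotheses place $x$ in $\mathrm{Dom}_2(\widetilde B_\pi)$.
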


\begin{corollary}[Barycentric equation]
\label{cor:barycentric_equation}
Let $x\in\sM$, and suppose $z\in\sM$ satisfies
\[
\int_{\sM} d(z,y)^2\,d\pi(y\mid x)<\infty,
\qquad
\pi(\Cut(z)\mid x)=0.
\]
If $z\in\mathcal B_\pi(x)$ and $F_x$ is differentiable at $z$, then
\[
\int_{\sM}\log_z(y)\,d\pi(y\mid x)=0.
\]
In the Hadamard regime, the converse holds: if the above integral vanishes, then $z=B_\pi(x)$.
\end{corollary}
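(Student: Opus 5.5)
The forward direction applies \Cref{thm:tangent_gradient} with the base point $x$ replaced by $z$. That theorem is stated for the functional $F_x$ evaluated at the source point, but its proof uses only two facts about the base point: square-integrability of $d(\cdot,y)^2$ there, and $\pi(\Cut(\cdot)\mid x)=0$. Our hypotheses on $z$ are exactly these. Repeating the same argument at $z$ gives
\[
\nabla F_x(z)=-\int_{\sM}\log_z(y)\,d\pi(y\mid x).
\]
Concretely, the argument rests on three ingredients:
\begin{enumerate}[label=(\roman*)]
\item for each $y\notin\Cut(z)$, the function $w\mapsto\frac12 d(w,y)^2$ is differentiable at $z$ with gradient $-\log_z(y)$;
\item the difference quotients along geodesics $t\mapsto\exp_z(tv)$ are bounded by $\|v\|_z\,(d(z,y)+t\|v\|_z)$, by the triangle inequality;
\item this bound is $\pi(\cdot\mid x)$-integrable, so dominated convergence applies.
\end{enumerate}
Since $z$ is a global minimizer of $F_x$ and $F_x$ is differentiable at $z$ by assumption, first-order optimality gives $\nabla F_x(z)=0$. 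Hence $\int\log_z(y)\,d\pi(y\mid x)=0$.

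For the converse under regime \textbf{(H)}, I would use geodesic convexity. On a Hadamard manifold the cut locus is empty, so the hypotheses hold automatically. For each $y$, the function $w\mapsto\frac12 d(w,y)^2$ is smooth and $1$-strongly geodesically convex; this is the comparison inequality $d(\exp_z v,y)^2\ge d(z,y)^2-2\langle v,\log_z y\rangle_z+\|v\|_z^2$ from nonpositive curvature. Integrating this inequality against $\pi(\cdot\mid x)$ yields
\[
F_x(\exp_z v)\ge F_x(z)-\Big\langle v,\int\log_z(y)\,d\pi(y\mid x)\Big\rangle_z+\tfrac12\|v\|_z^2 .
\]
Since $\exp_z$ is surjective, every $w\in\sM$ has the form $\exp_z v$. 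If the integral vanishes, the inequality becomes $F_x(w)\ge F_x(z)+\frac12 d(z,w)^2$ for all $w$. So $z$ is the unique global minimizer, and $z$ is the unique element of $\mathcal B_\pi(x)$, i.e. $z=B_\pi(x)$. This is the same variance inequality that underlies \Cref{thm:hadamard_gap}, so I could alternatively cite Sturm's inequality directly.

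The main obstacle is the forward step: transferring differentiability and the gradient formula from the source point to a general point $z$. This requires checking that the dominated-convergence argument of \Cref{thm:tangent_gradient} never uses the fact that the base point equals the conditioning index $x$. It does not, but this should be stated explicitly.

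One subtlety remains. Differentiability of $F_x$ at $z$ is assumed, so we only need the value of the gradient. The one-sided directional derivatives computed via dominated convergence already give that value, and the almost sure condition $y\notin\Cut(z)$ is what makes these directional derivatives linear in $v$.
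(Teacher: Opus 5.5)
Your proof is correct and matches the paper's argument: the forward direction reruns \Cref{thm:tangent_gradient} at base point $z$ and then uses first-order optimality, and your converse makes explicit the paper's appeal to geodesic convexity of $F_x$ in regime \textbf{(H)} by integrating the comparison inequality to obtain $F_x(w)\ge F_x(z)+\frac12 d(z,w)^2$. The only slip is your closing aside: Sturm's variance inequality is stated at the barycenter, so citing it directly would presuppose $z=B_\pi(x)$ rather than prove it, and it is your first-order inequality at $z$ that actually establishes the converse.
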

\Cref{thm:tangent_gradient} implies first unit \Frechet gradient update. Consider the Riemannian gradient update for $F_x$
\[
z_{k+1}=\exp_{z_k}(-\nabla F_x(z_k)),
\]
whenever the right-hand side is well defined. If $z_0=x$, then
$
z_1=\widetilde B_\pi(x)
$. Here the word ``update'' is rather intentional. A unit step need not decrease \(F_x\) without additional smoothness and step-size assumptions. This observation says that $\widetilde B_\pi(x)$ is the first-order log--exp surrogate for $B_\pi(x)$, not that it globally solves the intrinsic Fr\'echet problem.

We close this section with a failure case for the tangential projection even when the intrinsic projection is perfectly well-defined. Let $\sM=S^n$ be the unit sphere with the standard round metric, fix $x\in S^n$, and suppose
\[
d\pi(y\mid x)=\delta_{-x}(dy).
\]
Then
\[
\mathcal B_\pi(x)=\{-x\},
\]
but $\widetilde B_\pi(x)$ is not defined because $-x\in\Cut(x)$ and $\log_x(-x)$ is multivalued. This antipodal example is the main reason the tangential projection should not be presented as the global manifold analogue of barycentric projection. The intrinsic projection is the variational object whereas the tangential projection is a displacement-based surrogate that is exact in flat and Monge settings but local on general manifolds.

\section{Row-wise algorithms for discrete transport couplings}
\label{sec:algorithms}

This section translates the theory into algorithms to convert discrete couplings into deterministic representatives on the manifold. We assume that a coupling matrix has already been computed, for example by a linear programming or min-cost flow solver. The key point is that no global optimization over maps remains after the coupling is fixed and everything decomposes over rows of the coupling matrix.

\subsection{Discrete plans and row-wise construction}

Suppose we have two empirical measures
\[
\mu=\sum_{i=1}^n a_i\delta_{x_i},
\qquad
\nu=\sum_{j=1}^m b_j\delta_{y_j},
\]
where $x_i,y_j\in\sM$, $a_i>0$, $b_j>0$, and $\sum_i a_i=\sum_j b_j=1$. Let $\va=(a_i)_{i=1}^n$ and $\vb=(b_j)_{j=1}^m$. Let
\[
\Pi(\va,\vb):=\{\mLambda\in\R_+^{n\times m}:\mLambda\vone_m=\va,\ \mLambda^\top\vone_n=\vb\}.
\]
An exact discrete optimal transport plan is any matrix
\[
\mGamma=(\gamma_{ij})\in\argmin_{\mLambda\in\Pi(\va,\vb)}
\frac12\sum_{i=1}^n\sum_{j=1}^m\mLambda_{ij}d(x_i,y_j)^2.
\]
More generally, the row-wise formulas below apply to any $\mGamma\in\Pi(\va,\vb)$. The associated coupling is
\[
\pi_{\mGamma}=\sum_{i=1}^n\sum_{j=1}^m\gamma_{ij}\delta_{(x_i,y_j)}.
\]
For each source atom define row weights
\[
w_{ij}:=\frac{\gamma_{ij}}{a_i},
\qquad
\sum_{j=1}^m w_{ij}=1.
\]
The conditional destination law at $x_i$ is
\[
\pi_{\mGamma}(\cdot\mid x_i)=\sum_{j=1}^m w_{ij}\delta_{y_j}.
\]
Define the row-wise Fr\'echet objective
\[
\Phi_i(z):=\frac12\sum_{j=1}^m w_{ij}d(z,y_j)^2.
\]
This is the discrete version of the conditional Fr\'echet functional. The following proposition is simply the disintegration formula written row by row, but it is the algorithmic reason the framework is practical for finite couplings.

\begin{proposition}[Row-wise decomposition]
\label{prop:rowwise_decomposition}
For the discrete coupling $\pi_{\mGamma}$,
\[
F_{x_i}(z)=\Phi_i(z)
\quad\text{for all }z\in\sM.
\]
Moreover, for any Borel map $T:\sM\to\sM$,
\[
\mathcal E_{\pi_{\mGamma}}(T)=\sum_{i=1}^n a_i\Phi_i(T(x_i)).
\]
Consequently,
\[
B_{\pi_{\mGamma}}(x_i)\in\argmin_{z\in\sM}\Phi_i(z),
\]
and, whenever the relevant logarithms are defined,
\[
\widetilde B_{\pi_{\mGamma}}(x_i)
=
\exp_{x_i}\left(\sum_{j=1}^m w_{ij}\log_{x_i}(y_j)\right).
\]
\end{proposition}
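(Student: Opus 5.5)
The plan is to identify the disintegration kernel of $\pi_{\mGamma}$ explicitly and then substitute it into the definitions of $F_x$, $\mathcal E_\pi$, $\mathcal B_\pi$ and $\widetilde B_\pi$. First, observe that the first marginal of $\pi_{\mGamma}$ is $\sum_i(\sum_j\gamma_{ij})\delta_{x_i}=\sum_i a_i\delta_{x_i}=\mu$, using $\mGamma\vone_m=\va$. Next, define the candidate kernel
\[
k(\cdot\mid x):=\sum_{j=1}^m w_{ij}\delta_{y_j}\ \text{ if } x=x_i,\qquad k(\cdot\mid x):=\nu\ \text{ otherwise}.
\]
This is a Borel kernel, because it is piecewise constant on a finite partition of $\sM$ into Borel sets.

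Then verify the disintegration identity $\pi_{\mGamma}(A\times C)=\int_A k(C\mid x)\,d\mu(x)$ for Borel rectangles. The right-hand side equals $\sum_{i:x_i\in A}a_i\sum_j w_{ij}\1_C(y_j)=\sum_{i,j}\gamma_{ij}\1_A(x_i)\1_C(y_j)$, which is the left-hand side. Rectangles generate the product $\sigma$-algebra and form a $\pi$-system, so the two measures agree.

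One point needs care. Uniqueness of disintegration holds only $\mu$-a.e., but every atom has $a_i>0$, so $\mu(\{x_i\})>0$. Hence the conditional law at each $x_i$ is pinned down, and $\pi_{\mGamma}(\cdot\mid x_i)=\sum_j w_{ij}\delta_{y_j}$ holds without ambiguity. This is the only step I expect to require any real argument. If the $x_i$ are not distinct, first merge coincident atoms: the weights combine, and the formulas hold with the merged row. Alternatively, assume distinct support points as the notation implicitly does.

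With the kernel in hand, the remaining claims are substitutions:
\begin{itemize}
\item $F_{x_i}(z)=\frac12\int d(z,y)^2\,d\pi(y\mid x_i)=\frac12\sum_j w_{ij}d(z,y_j)^2=\Phi_i(z)$.
\item Integrating the identity $\mathcal E_\pi(T)=\int F_x(T(x))\,d\mu(x)$ against $\mu=\sum_i a_i\delta_{x_i}$ gives $\sum_i a_i\Phi_i(T(x_i))$. Equivalently, evaluate $\frac12\int d(T(x),y)^2\,d\pi_{\mGamma}$ directly as $\frac12\sum_{i,j}\gamma_{ij}d(T(x_i),y_j)^2$.
\item By definition, $B_\pi(x)\in\mathcal B_\pi(x)$ for $\mu$-a.e. $x$. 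Since each $x_i$ has positive mass, this holds at every $x_i$, giving $B_{\pi_{\mGamma}}(x_i)\in\argmin\Phi_i$. Nonemptiness of these sets is supplied by \Cref{prop:existence_compactness}, or directly from the fact that $\Phi_i$ is continuous and coercive.
\item Finally, when each $y_j$ with $w_{ij}>0$ lies outside $\Cut(x_i)$, the integrability condition in $\mathrm{Dom}_2$ is trivial for a finite sum. Therefore $m_\pi(x_i)=\sum_j w_{ij}\log_{x_i}(y_j)$, and applying $\exp_{x_i}$ yields the stated formula for $\widetilde B_{\pi_{\mGamma}}(x_i)$.
\end{itemize}
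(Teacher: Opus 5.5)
Your proposal is correct and takes the same route as the paper: identify the conditional destination law at $x_i$ as $\sum_j w_{ij}\delta_{y_j}$, then substitute it into the definitions of $F_{x_i}$, $\mathcal E_{\pi_{\mGamma}}$, $\mathcal B_{\pi_{\mGamma}}$, and $\widetilde B_{\pi_{\mGamma}}$. Three of your details are left unstated in the paper's one-line proof, and all are sound: checking the disintegration on rectangles, observing that $a_i>0$ pins down the kernel at every atom (so the $\mu$-a.e.\ statements hold at each $x_i$), and noting that the source points must be distinct or merged.
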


\subsection{Algorithms}

After \Cref{prop:rowwise_decomposition}, computing the intrinsic projection means solving one weighted Fr\'echet mean problem for each source atom. These problems are independent and can be solved in parallel. For each row $i$, the intrinsic projection solves
\[
B_i\in\argmin_{z\in\sM}\Phi_i(z),
\qquad
\Phi_i(z)=\frac12\sum_{j=1}^m w_{ij}d(z,y_j)^2.
\]
Whenever $y_j\notin\Cut(z)$ for every active target $j$ with $w_{ij}>0$, the gradient is
\[
\nabla\Phi_i(z)=-\sum_{j=1}^m w_{ij}\log_z(y_j).
\]
Thus a basic Riemannian gradient method has the form
\[
z_i^{(k+1)}
=
\exp_{z_i^{(k)}}\left(-\eta_{i,k} \nabla\Phi_i\left(z_i^{(k)}\right)\right)
=
\exp_{z_i^{(k)}}\left(\eta_{i,k}\sum_{j=1}^m w_{ij}\log_{z_i^{(k)}}(y_j)\right),
\]
with a step size $\eta_{i,k}>0$. In the Hadamard regime the row objective is globally geodesically convex, and standard results on first-order methods for geodesically convex optimization apply under the usual step-size and regularity assumptions \citep{zhang_2016_FirstorderMethodsGeodesically}. In the local regime, convergence is only a local statement. The solver should be initialized inside the relevant convexity neighborhood and use a step-size or projection strategy that keeps iterates in that region. A full numerical convergence theory in the local positive-curvature regime is outside the scope of this paper.

The tangential projection avoids the row-wise minimization and instead evaluates the first log--exp update directly at the source atom. This gives a simple companion to the intrinsic solver. The tangential projection is obtained by one log--exp average. For every row $i$ such that $y_j\notin\Cut(x_i)$ for all active $j$, compute
\[
v_i:=\sum_{j=1}^m w_{ij}\log_{x_i}(y_j)
\]
and output
\[
\widetilde B_i:=\exp_{x_i}(v_i).
\]

The tangential update is exactly the first unit gradient update for the row objective initialized at the source atom.

\begin{corollary}[Tangential update as first intrinsic update]
\label{cor:tangent_first_discrete}
Consider the gradient update for $\Phi_i$ with initialization $z_i^{(0)}=x_i$ and first step size $\eta_{i,0}=1$. If the tangential projection is defined at row $i$, then
\[
z_i^{(1)}=\widetilde B_{\pi_{\mGamma}}(x_i).
\]
\end{corollary}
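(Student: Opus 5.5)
The plan is to compute the first iterate of the row-wise Riemannian gradient method explicitly and compare it with the row-wise tangential formula in \Cref{prop:rowwise_decomposition}. The statement is essentially the discrete specialization of \Cref{thm:tangent_gradient}, so the main task is to check that the hypotheses of that theorem hold at the source atom $x_i$ whenever the tangential projection is defined at row $i$.

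\textbf{Step 1: the gradient exists at the source atom.} "Defined at row $i$" means $y_j\notin\Cut(x_i)$ for every active $j$, i.e.\ every $j$ with $w_{ij}>0$. Under this condition the gradient formula stated in the algorithm section applies at $z=x_i$:
\[
\nabla\Phi_i(x_i)=-\sum_{j=1}^m w_{ij}\log_{x_i}(y_j).
\]
To justify it, I would invoke \Cref{thm:tangent_gradient} with the discrete conditional law $\pi_{\mGamma}(\cdot\mid x_i)=\sum_j w_{ij}\delta_{y_j}$, using $F_{x_i}=\Phi_i$ from \Cref{prop:rowwise_decomposition}. Its hypotheses hold for the following reasons.
\begin{enumerate}
\item The second-moment condition $\sum_j w_{ij}\,d(x_i,y_j)^2<\infty$ holds trivially, since the sum is finite.
\item The cut-locus condition $\pi_{\mGamma}(\Cut(x_i)\mid x_i)=0$ holds because only active atoms carry mass, and none of them lies in $\Cut(x_i)$.
\end{enumerate}
Alternatively, one can argue directly. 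For $y\notin\Cut(x)$, the function $\tfrac12 d(\cdot,y)^2$ is smooth near $x$ with gradient $-\log_x(y)$. Inactive terms have weight zero, so they contribute nothing even if $y_j\in\Cut(x_i)$.

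\textbf{Step 2: the first update.} Substituting $z_i^{(0)}=x_i$ and $\eta_{i,0}=1$ into the update rule gives
\[
z_i^{(1)}=\exp_{x_i}\Bigl(\sum_j w_{ij}\log_{x_i}(y_j)\Bigr)=\exp_{x_i}(v_i).
\]
By the last display of \Cref{prop:rowwise_decomposition}, this equals $\widetilde B_{\pi_{\mGamma}}(x_i)$.

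\textbf{Main obstacle.} The only subtle point is bookkeeping about inactive targets that happen to lie in $\Cut(x_i)$. Their logarithms may be multivalued, but they appear with weight zero, so the sums should be read over active indices only. This is consistent with the $\pi$-almost-sure single-valuedness in the definition of $\mathrm{Dom}_2(\widetilde B_\pi)$. Beyond this, the argument is a direct substitution.
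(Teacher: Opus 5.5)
Your proposal is correct and matches the paper's proof: the paper likewise computes $\nabla\Phi_i(x_i)=-\sum_j w_{ij}\log_{x_i}(y_j)$, substitutes the unit step at $z_i^{(0)}=x_i$, and reads off the row-wise tangential formula. Your extra steps, checking the hypotheses of \Cref{thm:tangent_gradient} for the discrete conditional law and noting that zero-weight atoms in $\Cut(x_i)$ drop out, make explicit what the paper's one-line argument takes for granted.
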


\begin{figure}[t]
\centering

\begin{minipage}[t]{0.48\linewidth}
\begin{algorithm}[H]
\caption{Intrinsic projection from a discrete coupling}
\label{alg:intrinsic_discrete}
\begin{algorithmic}[1]
\Require Coupling matrix $\mGamma$, source weights $(a_i)_{i=1}^n$,
target points $(y_j)_{j=1}^m$
\Ensure Intrinsic representatives $(B_i)_{i=1}^n$
\For{$i=1,\ldots,n$}
    \State $J_i \gets \{j:\gamma_{ij}>0\}$
    \State $w_{ij} \gets \gamma_{ij}/a_i$ for $j\in J_i$
    \State Choose an initialization $z\gets z_i^{(0)}$
    \Repeat
        \State $u \gets \displaystyle\sum_{j\in J_i} w_{ij}\log_z(y_j)$
        \State Choose a step size $\eta>0$
        \State $z \gets \exp_z(\eta u)$
    \Until{converged}
    \State $B_i \gets z$
\EndFor
\State \Return $(B_i)_{i=1}^n$
\end{algorithmic}
\end{algorithm}
\end{minipage}
\hfill
\begin{minipage}[t]{0.48\linewidth}
\begin{algorithm}[H]
\caption{Tangential projection from a discrete coupling}
\label{alg:tangential_discrete}
\begin{algorithmic}[1]
\Require Coupling matrix $\mGamma$, source weights $(a_i)_{i=1}^n$,
source points $(x_i)_{i=1}^n$, target points $(y_j)_{j=1}^m$
\Ensure Tangential representatives $(\widetilde B_i)_{i=1}^n$ where defined
\For{$i=1,\ldots,n$}
    \State $J_i \gets \{j:\gamma_{ij}>0\}$
    \State $w_{ij} \gets \gamma_{ij}/a_i$ for $j\in J_i$
    \If{$y_j\notin\Cut(x_i)$ for all $j\in J_i$}
        \State $v_i \gets \displaystyle\sum_{j\in J_i} w_{ij}\log_{x_i}(y_j)$
        \State $\widetilde B_i \gets \exp_{x_i}(v_i)$
    \Else
        \State Mark $\widetilde B_i$ as undefined
    \EndIf
\EndFor
\State \Return $(\widetilde B_i)_{i=1}^n$
\end{algorithmic}
\end{algorithm}
\end{minipage}
\end{figure}

Algorithms for the intrinsic projection and tangential projection are summarized in \Cref{alg:intrinsic_discrete} and \Cref{alg:tangential_discrete}, respectively. A practical hybrid approach is to use the tangential projection as a high-fidelity initialization for the intrinsic solver. First, compute $\widetilde B_i$ by \Cref{alg:tangential_discrete} on rows where it is defined. Then run \Cref{alg:intrinsic_discrete} initialized at $z_i^{(0)}=\widetilde B_i$. This preserves the intrinsic objective while using the tangential point as an initialization. Whether this reduces runtime or iteration count is an empirical question left to \Cref{sec:experiments}. If a row is deterministic, i.e. if $\gamma_{ij(i)}=a_i$ for a unique $j(i)$, then the intrinsic projection returns $y_{j(i)}$. The tangential projection also returns $y_{j(i)}$ whenever $y_{j(i)}\notin\Cut(x_i)$. Otherwise, the tangential projection is not defined at that row.

\subsection{Stability with respect to the coupling matrix}

The final result in this section records a basic robustness property. If the support points are fixed and only the coupling weights are perturbed, then the row-wise objectives vary continuously. Under uniqueness of the limiting row mean, this continuity passes to the intrinsic projection. The tangential formula is continuous whenever the relevant logarithms are fixed and single-valued.

\begin{proposition}[Discrete stability]
\label{prop:discrete_stability}
Fix support points $x_1,\dots,x_n$ and $y_1,\dots,y_m$, and fix positive source weights $a_i$. Let $\mGamma^{(r)}\to\mGamma$ entrywise, with all matrices belonging to $\Pi(\va,\vb)$. For a fixed row $i$, define
\[
w_{ij}^{(r)}:=\frac{\gamma_{ij}^{(r)}}{a_i},
\qquad
w_{ij}:=\frac{\gamma_{ij}}{a_i},
\]
and row objectives
\[
\Phi_i^{(r)}(z):=\frac12\sum_j w_{ij}^{(r)}d(z,y_j)^2,
\qquad
\Phi_i(z):=\frac12\sum_j w_{ij}d(z,y_j)^2.
\]
If $\Phi_i$ has a unique minimizer $B_i$, and $B_i^{(r)}$ is any minimizer of $\Phi_i^{(r)}$, then
\[
B_i^{(r)}\to B_i.
\]
If, in addition, $y_j\notin\Cut(x_i)$ for every $j$, then
\[
\widetilde B_i^{(r)}\to \widetilde B_i.
\]
\end{proposition}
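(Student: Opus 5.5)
The plan is to prove the intrinsic statement with a standard compactness-plus-uniqueness argument (in the spirit of the argmin stability principle), and the tangential statement by direct continuity of a finite sum.

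\emph{Uniform convergence.} Since the support points are fixed, set $D_z:=\max_j d(z,y_j)$. Then
\[
|\Phi_i^{(r)}(z)-\Phi_i(z)|\le \tfrac12 D_z^2\sum_j|w_{ij}^{(r)}-w_{ij}|.
\]
On any compact set $K\subset\sM$ the quantity $D_z$ is bounded, so $\Phi_i^{(r)}\to\Phi_i$ uniformly on $K$.

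\emph{Uniform localization of the minimizers.} The row sums satisfy $\sum_j w^{(r)}_{ij}=1$. Choose the closed ball $K=\bar B(y_1,R)$ with $R$ large relative to $\max_{j,k}d(y_j,y_k)$. Any minimizer $z$ of $\Phi_i^{(r)}$ satisfies $\Phi_i^{(r)}(z)\le \Phi_i^{(r)}(y_1)\le\frac12\max_{j}d(y_1,y_j)^2$. Conversely, if $d(z,y_1)>R$, the triangle inequality gives $d(z,y_j)\ge R-\max_k d(y_1,y_k)$ for every $j$, so $\Phi_i^{(r)}(z)\ge\frac12(R-\max_k d(y_1,y_k))^2$. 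This exceeds the previous bound once $R>2\max_k d(y_1,y_k)$, uniformly in $r$. Hence all $B_i^{(r)}$, and also $B_i$, lie in $K$. The ball $K$ is compact by Hopf--Rinow, since $\sM$ is complete and finite-dimensional.

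\emph{Subsequence argument.} Suppose $B_i^{(r)}\not\to B_i$. Then some subsequence stays at distance at least $\varepsilon$ from $B_i$. By compactness of $K$, extract a further subsequence converging to some $z^*$ with $d(z^*,B_i)\ge\varepsilon$. For any $z\in\sM$ we have
\[
\Phi_i(z^*)=\lim\Phi_i^{(r)}(B_i^{(r)})\le\lim\Phi_i^{(r)}(z)=\Phi_i(z).
\]
The first equality uses uniform convergence on $K$ together with continuity of $\Phi_i$; the inequality holds for each $z$, including points outside $K$, by pointwise convergence. Thus $z^*$ is a minimizer of $\Phi_i$, and uniqueness forces $z^*=B_i$, a contradiction. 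The only delicate point is the uniform confinement of the minimizers, which the triangle-inequality bound above handles.

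\emph{Tangential part.} If $y_j\notin\Cut(x_i)$ for all $j$, the vectors $\log_{x_i}(y_j)\in T_{x_i}\sM$ are fixed and single-valued. They do not depend on $r$, and their definition does not depend on which $w_{ij}$ are positive. Therefore
\[
v_i^{(r)}=\sum_j w^{(r)}_{ij}\log_{x_i}(y_j)\to v_i
\]
in the finite-dimensional space $T_{x_i}\sM$. By continuity of $\exp_{x_i}$ on the complete manifold, $\widetilde B_i^{(r)}=\exp_{x_i}(v_i^{(r)})\to\exp_{x_i}(v_i)=\widetilde B_i$.
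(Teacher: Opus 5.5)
Your proof is correct and follows essentially the same route as the paper: uniform convergence of $\Phi_i^{(r)}$ on compact sets, a coercivity bound uniform in $r$ that confines all minimizers to a common compact set, a subsequence-plus-uniqueness argument, and continuity of $\exp_{x_i}$ applied to fixed logarithm vectors for the tangential part. The only cosmetic difference is that you build the confining ball around $y_1$ using that the row weights sum to one, whereas the paper reuses the coercivity bound from \Cref{prop:existence_compactness}.
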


After the coupling is known, the row problems are independent and can be solved in parallel. The tangential method requires one logarithm evaluation per active coupling entry and one exponential map evaluation per source atom. The intrinsic method requires analogous logarithm and exponential evaluations at each Riemannian optimization iteration. If $K_i$ iterations are used for row $i$, then the total number of row-wise logarithm evaluations scales as
\[
\sum_{i=1}^n K_i\,\#\{j:\gamma_{ij}>0\}.
\]
This is an operation count, not an empirical runtime claim. Runtime depends on the manifold, the implementation of exponential and logarithm maps, the optimization method, and the sparsity of the coupling.

\section{Experiments}
\label{sec:experiments}

The theory identifies several quantities that should be visible in finite-sample computations. Some are theorem-level sanity checks that the intrinsic projection should have zero excess plan-fit energy, the deterministic target mismatch should be bounded by the Monge defect, and the Hadamard energy-gap ratio should not exceed one on nonpositively curved targets. Other quantities are empirical, including the size of the tangential approximation error, the practical benefit of tangential initialization, and the effect of the plan-to-map rule on a downstream manifold-valued learning pipeline. The experiments below separate these two roles. We use exact discrete OT throughout, keep the coupling fixed, and vary only the deterministic representative extracted from that coupling.

In every experiment, the source and target empirical measures have unequal support sizes and uniform weights, with the target support three times larger than the source support. 
Thus each source row of the exact plan must split its mass across target atoms that in all runs the average row split score is \(2/3\) and the average effective row
support is \(3\). We compare the intrinsic projection $T_{\mathrm{int}}$, the tangential projection $T_{\mathrm{tan}}$, a maximum-row-mass map $T_{\mathrm{MAP}}$, the identity map $T_{\mathrm{id}}$, and an ambient or log-Euclidean baseline when available. We report means $\pm$ standard errors over independent configurations or subject/split pairs.

\subsection{Sphere-valued data}

The first experiment uses empirical measures on \(S^2\) with \(32\) source atoms
and \(96\) target atoms. We vary the separation between the source and target
caps, using target separations \(0.5,1.2,2.4,2.9\) radians and cap radii
\(0.15\) and \(0.35\), over four random seeds. The purpose is not to benchmark
a downstream task, but to isolate the geometric difference between intrinsic
and tangential averaging on a positively curved space.

\begin{figure}[t]
\centering
\resizebox{0.98\linewidth}{!}{%
	\input{report/eeg_spd/eeg_geometric_metrics.pgf}
}
\caption{Sphere experiment. Normalized excess plan-fit energy as a function of
target separation for two cap radii. Panel (A) uses cap radius \(0.15\), and
panel (B) uses cap radius \(0.35\). The vertical axis is shown on a symmetric
logarithmic scale so that near-zero excess values and large identity-map
errors are visible in the same plot.}
\label{fig:sphere-excess}
\end{figure}

Across the \(32\) sphere configurations, the intrinsic projection has
plan-fit energy equal to the Monge defect, with
\[
V(\pi)=8.14\times 10^{-4}\pm 1.10\times 10^{-4},
\]
where we report mean \(\pm\) standard error over configurations. The ambient
normalized mean is essentially identical to the intrinsic projection in this
particular cap construction, with normalized excess
\(1.47\times 10^{-7}\pm 4.34\times 10^{-8}\). This should be read as a
property of the controlled spherical sampling scheme, not as a general
equivalence between ambient and intrinsic barycenters. The MAP rule has
normalized excess \(0.904\pm0.023\), and the identity map has much larger
normalized excess, \(3.99\times 10^3\pm8.21\times10^2\); see
\Cref{fig:sphere-excess}.

The tangential projection is defined in all sphere runs, but its approximation
quality is sensitive to configurations near the edge of the local linearization
regime. Averaged over all configurations, its normalized excess is
\(1.79\pm1.75\), with the large standard error driven by the highest-separation
cases. This is consistent with the role of \(T_{\mathrm{tan}}\) as a local
displacement surrogate rather than a globally canonical barycenter.

\subsection{Synthetic SPD data}
The second experiment uses $\mathrm{SPD}(3)$ with the affine-invariant metric \citep{pennec_2006_RiemannianFrameworkTensor}, again with $32$ source atoms and $96$ target atoms. We vary a source--target displacement parameter $\beta\in\{0.3,0.8,1.5\}$ and a dispersion parameter in $\{0.15,0.45\}$ over four random seeds, for $24$ configurations. This is the cleanest numerical setting for the Hadamard part of the theory. Since the affine-invariant SPD manifold is nonpositively curved, the intrinsic projection is unique and the energy-gap theorem applies globally.

\begin{figure}[t]
\centering
\resizebox{0.98\linewidth}{!}{%
	\input{report/spd_synthetic/spd_excess_vs_separation.pgf}
}
\caption{Synthetic SPD experiment. Normalized excess plan-fit energy for $\mathrm{SPD}(3)$ under the affine-invariant metric.}
\label{fig:spd-excess}
\end{figure}

The intrinsic projection again has zero excess energy by definition and attains $V(\pi)=1.017\times10^{-2}\pm2.076\times10^{-3}$. In contrast to the sphere stress case, the tangential approximation is extremely close to intrinsic: its normalized excess is $4.36\times10^{-5}\pm1.93\times10^{-5}$, comparable to the log-Euclidean baseline at $4.92\times10^{-5}\pm2.06\times10^{-5}$. The MAP rule has normalized excess $0.919\pm0.012$, and the identity map has normalized excess $185.3\pm70.9$. Thus, in this Hadamard setting with moderate row dispersion, the log--exp surrogate captures almost all of the intrinsic plan-fit benefit while remaining conceptually distinct from the intrinsic Fr\'echet optimizer.

\begin{figure}[t]
\centering
\resizebox{0.98\linewidth}{!}{%
	\input{report/spd_synthetic/spd_hadamard_gap.pgf}
}
\caption{Hadamard energy-gap sanity check on synthetic SPD data. The left panel plots the excess energy against the right-hand side of the Hadamard gap inequality. The right panel shows the ratio of the right-hand side to the excess energy for non-intrinsic methods.}
\label{fig:spd-gap}
\end{figure}

The Hadamard energy-gap inequality is also visible numerically. For each non-intrinsic representative $T$, the ratio between the gap lower bound and the measured excess energy is below one. It is $0.998\pm0.001$ for the tangential and log-Euclidean surrogates, $1.000\pm0.0001$ for MAP up to numerical precision, and $0.999\pm0.0001$ for identity. The near-tightness in \Cref{fig:spd-gap} reflects the fact that the synthetic rows are generated in localized regions where the affine-invariant geometry is close to its tangent approximation.

\begin{figure}[t]
\centering
\resizebox{0.58\linewidth}{!}{%
	\input{report/spd_synthetic/spd_intrinsic_iterations.pgf}
}
\caption{Intrinsic Fr\'echet solver iterations on synthetic SPD data. }
\label{fig:spd-iterations}
\end{figure}

The row-wise optimization diagnostic supports the algorithmic interpretation of the tangential map as a first-order initialization. Starting the intrinsic Fr\'echet solver at the tangential projection requires $3.06\pm0.09$ iterations on average, compared with $4.06\pm0.09$ iterations from the source point, a reduction of about $25\%$ in this experiment as shown in \Cref{fig:spd-iterations}. Although it should be warned that this is not a universal convergence theorem, it empirically supports the proposed hybrid use of the tangential projection as a warm start.

\subsection{Real EEG covariance data}

The real-data experiment uses \textsf{BNCI2014\_001}, the four-class motor-imagery
dataset originally released as data set 2a of BCI Competition IV
\citep{brunner_2008_BCICompetition2008,tangermann_2012_ReviewBCICompetition,jayaram_2018_MOABBTrustworthyAlgorithm}. The
dataset contains EEG recordings from nine subjects performing cue-based motor
imagery of the left hand, right hand, both feet, and tongue. For each subject,
two sessions were recorded on different days. Each session consists of six runs
of 48 trials, with 12 trials per class per run, giving 288 trials per session.
The recordings contain 22 EEG channels and 3 EOG channels. We use only the 22
EEG channels, so each trial is represented by a regularized covariance matrix
in \(\mathrm{SPD}(22)\). Concretely, if \(E_i\in \R^{22\times T}\) denotes the preprocessed EEG epoch for trial \(i\), we estimate its covariance using the OAS estimator \citep{chen_2010_ShrinkageAlgorithmsMMSE} and then add a ridge term \(10^{-6}I_{22}\), followed by eigenvalue flooring at \(10^{-6}\). The resulting matrix \(X_i\in\mathrm{SPD}(22)\) is used with the affine-invariant distance for the OT cost and all Riemannian barycentric computations.

For each of the nine subjects, we perform three balanced splits. In each split,
\(64\) labeled source covariances are transported to an unlabeled target
adaptation pool of \(192\) covariances, while a disjoint held-out target test
set of \(96\) covariances is used only for downstream evaluation. The OT stage
is label-agnostic: the cost matrix, exact coupling, and plan-to-map
representatives use only covariance matrices and weights. Labels are used only
for balanced sampling, for training the downstream nearest-centroid classifier after source
mapping, and for held-out evaluation. These values are summarized visually in \Cref{fig:eeg-geometric} and numerically in \Cref{tab:eeg-results}.

\begin{figure}[t]
\centering
\resizebox{0.98\linewidth}{!}{%
	\input{report/eeg_spd/eeg_geometric_metrics.pgf}
}
\caption{Geometric metrics on real EEG covariance data. The intrinsic projection has zero normalized excess by definition. The tangential projection is close to intrinsic, while the hard MAP and identity maps incur substantially larger excess and target mismatch.}
\label{fig:eeg-geometric}
\end{figure}

\begin{table}[t]
\caption{Real EEG covariance results on $\mathrm{SPD}(22)$. Values are mean $\pm$ standard error over $27$ subject/split pairs. $R_{\mathcal E}$ is normalized excess plan-fit energy, $\mathcal M_\pi$ is target mismatch, and smaller class-prototype alignment is better.}
\label{tab:eeg-results}
\centering
\small
\begin{tabular}{lcccc}
\toprule
Method & $R_{\mathcal E}$ & $\mathcal M_\pi$ & Balanced acc. & Class align. \\
\midrule
Intrinsic & $0.0000\pm0.0000$ & $1.288\pm0.060$ & $0.552\pm0.034$ & $0.601\pm0.023$ \\
Tangential & $0.0018\pm0.0002$ & $1.291\pm0.060$ & $0.550\pm0.034$ & $0.604\pm0.023$ \\
Log-Euclidean & $0.0119\pm0.0008$ & $1.305\pm0.061$ & $0.555\pm0.033$ & $0.613\pm0.023$ \\
MAP & $1.006\pm0.005$ & $2.171\pm0.103$ & $0.540\pm0.032$ & $0.700\pm0.023$ \\
Identity & $2.951\pm0.207$ & $4.905\pm0.181$ & $0.540\pm0.026$ & $1.785\pm0.078$ \\
\bottomrule
\end{tabular}
\end{table}

\begin{figure}[t]
\centering
\resizebox{0.98\linewidth}{!}{%
	\input{report/eeg_spd/eeg_hadamard_gap.pgf}
}
\caption{Hadamard energy-gap sanity check on EEG covariance data. The gap lower bound stays below the measured excess energy. The inequality is theorem-guaranteed on the affine-invariant SPD manifold, so this figure should be read as an implementation-level sanity check rather than an empirical discovery.}
\label{fig:eeg-gap}
\end{figure}

The Hadamard gap check on EEG data is shown in \Cref{fig:eeg-gap}. The ratio between the right-hand side of the gap inequality and the measured excess energy remains below one for all non-intrinsic methods: $0.942\pm0.003$ for tangential, $0.940\pm0.003$ for log-Euclidean, $0.991\pm0.001$ for MAP, and $0.983\pm0.001$ for identity. The row-level diagnostics also show that the tangential approximation error grows with conditional spread: the Pearson correlation between the row Monge defect $V_i$ and $d(B_i,\widetilde B_i)^2$ is $0.84$ over the EEG rows. Since the split score is fixed by the $1{:}3$ source--target support design, the row-split diagnostic is primarily a sanity check that mass splitting was indeed enforced.

\begin{figure}[t]
\centering
\resizebox{0.98\linewidth}{!}{%
	\input{report/eeg_spd/eeg_downstream_metrics.pgf}
}
\caption{Downstream EEG covariance results. Balanced accuracy is similar across the manifold-aware representatives, while geometric class-prototype alignment improves substantially over the identity baseline. The downstream task is included to test whether the plan-to-map choice affects a real manifold-valued pipeline, not to claim state-of-the-art BCI performance.}
\label{fig:eeg-downstream}
\end{figure}

Finally, the downstream EEG evaluation gives a more cautious but useful picture. Balanced accuracy is similar across the three soft barycentric representatives: $0.552\pm0.034$ for intrinsic, $0.550\pm0.034$ for tangential, and $0.555\pm0.033$ for log-Euclidean. These differences are small relative to subject-level variation, and we do not claim a classification advantage for the intrinsic map. The geometric class-prototype alignment tells a clearer story: intrinsic, tangential, and log-Euclidean representatives all substantially improve alignment relative to identity, with intrinsic at $0.601\pm0.023$ versus $1.785\pm0.078$ for identity. MAP is intermediate at $0.700\pm0.023$. Thus, on real SPD-valued EEG data, the deterministic representative matters geometrically even when downstream classification accuracy is governed by additional factors such as class overlap and subject variability.

Taken together, the experiments support the paper's division of roles. The intrinsic projection is the accuracy-first deterministic representative of a coupling: it exactly minimizes the plan-fit objective and realizes the Monge defect. The tangential projection is a useful local surrogate: it is nearly intrinsic on the Hadamard SPD experiments and real EEG data, but its behavior is more delicate on the spherical stress test. Hard row assignment discards conditional information and consistently incurs larger geometric error. Downstream EEG results should be interpreted modestly: they show that the proposed maps are usable in a real manifold-valued learning pipeline and can improve geometric alignment, but they are not intended as a claim of state-of-the-art EEG classification performance.

\section{Conclusion}
\label{sec:conclusion}

We developed a plan-to-map framework for transport couplings on Riemannian
manifolds. The intrinsic barycentric projection maps each source point to the
conditional Fr\'echet mean of its destination law and is the best deterministic
representative of the coupling under squared geodesic loss. The minimum value
of this representation problem is the integrated conditional Fr\'echet
variance, which vanishes exactly for map-induced couplings and therefore gives
a conditional-variance Monge defect. This identifies, in intrinsic geometric
terms, the irreducible cost of replacing a possibly mass-splitting coupling by
a deterministic map.

The tangential projection plays a complementary role. It averages logarithmic
displacements in the tangent space at the source point and pushes the result
back to the manifold. This construction agrees with the intrinsic projection in
Euclidean space, is compatible with the Brenier--McCann displacement
representation in the Monge case, and equals the first unit Riemannian gradient
update for the intrinsic conditional Fr\'echet objective. Thus the intrinsic
projection is the variational representative, while the tangential
projection is a first-order displacement surrogate. For discrete couplings, both
constructions decompose row-wise, yielding direct algorithms based on weighted
Fr\'echet means and log--exp averages.

The experiments support this division of roles. Across the synthetic and EEG
settings, the intrinsic projection realizes the plan-fit optimum, the tangential
projection is close in local or Hadamard regimes but less reliable in the
spherical stress test, and hard row assignment consistently loses geometric
information. On EEG covariance data, the plan-to-map choice affects geometric
class alignment even when classification accuracy is dominated by subject
variability and class overlap.

The scope of the paper is intentionally focused. The theory applies to
finite-cost couplings, and the computational development emphasizes exact
discrete couplings processed row by row. Exact OT solvers are not as scalable
as entropic solvers for very large empirical measures. Extending the framework
to entropic plans is a natural next step. The intrinsic projection requires
solving one Fr\'echet mean problem per source atom, which is parallelizable but
iterative. On general manifolds, uniqueness of intrinsic barycenters is local
unless additional global curvature assumptions are imposed, and the tangential
projection can fail at the cut locus. The paper also leaves open several
directions such as out-of-sample extension, regularity of the map \(B_\pi\),
statistical rates for empirical plans, and quantitative curvature-controlled
bounds between \(B_\pi\) and \(\widetilde B_\pi\). These questions would further
connect the geometric plan-to-map viewpoint developed here with statistical
estimation and scalable manifold-valued learning.



\bibliography{references}
\bibliographystyle{tmlr}

\appendix
\section{Proofs of theoretical results}
\subsection{Proof of \Cref{prop:existence_compactness}}
\begin{proof}
Fix a reference point $o\in\sM$. Since $\nu\in\mathcal P_2(\sM)$,
\[
\int_{\sM}\left(\int_{\sM} d(o,y)^2\,d\pi(y\mid x)\right)d\mu(x)
=
\int_{\sM} d(o,y)^2\,d\nu(y)<\infty.
\]
Let
\[
\sX_0:=\left\{x\in\sM:\int_{\sM} d(o,y)^2\,d\pi(y\mid x)<\infty\right\}.
\]
The map inside the braces is Borel because $x\mapsto\pi(\cdot\mid x)$ is a Borel probability kernel. Hence $\sX_0$ is Borel and has full $\mu$-measure.

Fix $x\in \sX_0$. For any $z\in\sM$,
\[
d(z,y)^2\le 2d(z,o)^2+2d(o,y)^2,
\]
so $F_x(z)<\infty$. If $z_n\to z$, then for all sufficiently large $n$, $d(z_n,z)\le 1$, and therefore
\[
d(z_n,y)^2
\le 2d(z_n,o)^2+2d(o,y)^2
\le 2(d(z,o)+1)^2+2d(o,y)^2.
\]
The right-hand side is integrable with respect to $d\pi(y\mid x)$, and $d(z_n,y)^2\to d(z,y)^2$ pointwise. Dominated convergence gives $F_x(z_n)\to F_x(z)$. Thus $F_x$ is continuous.

For coercivity, the triangle inequality gives
\[
d(z,y)\ge d(z,o)-d(o,y),
\]
and hence
\[
d(z,y)^2\ge \frac12 d(z,o)^2-d(o,y)^2.
\]
Integrating yields
\[
F_x(z)\ge \frac14 d(z,o)^2-\frac12\int_{\sM} d(o,y)^2\,d\pi(y\mid x).
\]
Thus $F_x(z)\to\infty$ as $d(z,o)\to\infty$. Since $\sM$ is complete and finite-dimensional, Hopf--Rinow implies that closed bounded subsets of $\sM$ are compact. Therefore a continuous coercive function attains its minimum, and its minimizer set is compact.
\end{proof}
\subsection{Proof of \Cref{prop:borel_selection}}
\begin{proof}
Let $\sX_0$ be the full-measure Borel set from \Cref{prop:existence_compactness}. Define
\[
F:\sX_0\times\sM\to\R,
\qquad
F(x,z):=F_x(z).
\]
For each fixed $z$, $x\mapsto F(x,z)$ is Borel by the kernel property. For each fixed $x\in \sX_0$, $z\mapsto F(x,z)$ is continuous by \Cref{prop:existence_compactness}. Thus $F$ is a Carath\'eodory function and is jointly Borel.

Let $\sD\subset\sM$ be countable and dense. Since $F(x,\cdot)$ is continuous,
\[
V_\pi(x):=\inf_{z\in\sM}F(x,z)=\inf_{z\in\sD}F(x,z),
\]
so $V_\pi$ is Borel on $\sX_0$. The argmin graph is
\[
\Gamma_\pi:=\{(x,z)\in \sX_0\times\sM:F(x,z)=V_\pi(x)\}.
\]
It is Borel, and by \Cref{prop:existence_compactness} its sections
\[
(\Gamma_\pi)_x=\mathcal B_\pi(x)
\]
are nonempty compact subsets of $\sM$. Hence the argmin correspondence is weakly measurable: for every open set $\sO\subset\sM$, the set
\[
\{x\in \sX_0:\mathcal B_\pi(x)\cap \sO\neq\emptyset\}
\]
is the projection of the Borel set $\Gamma_\pi\cap (\sX_0\times \sO)$ and is Borel for compact-valued Borel correspondences on Polish spaces. Equivalently, one may apply the measurable maximum theorem. The Kuratowski--Ryll-Nardzewski selection theorem then yields a Borel selector on $\sX_0$. Extending the selector arbitrarily on $\sM\setminus \sX_0$ gives a Borel map on $\sM$.

In regime \textbf{(H)}, the conditional Fr\'echet functional is strictly geodesically convex. Indeed, in a Hadamard manifold the squared distance satisfies the CAT(0) convexity inequality along geodesics:
\[
d(\gamma_t,y)^2
\le (1-t)d(\gamma_0,y)^2+t d(\gamma_1,y)^2-t(1-t)d(\gamma_0,\gamma_1)^2.
\]
After integration against $d\pi(y\mid x)$, this implies strict geodesic convexity of $F_x$. Hence the minimizer is unique. In regime \textbf{(L)}, uniqueness is part of the standing local Fr\'echet uniqueness hypothesis.
\end{proof}

\subsection{Proof of \Cref{lem:second_moment_B}}
\begin{proof}
Let $o\in\sM$. Since $B_\pi(x)$ minimizes $F_x$,
\[
V_\pi(x)=F_x(B_\pi(x))\le F_x(o)=\frac12\int_{\sM} d(o,y)^2\,d\pi(y\mid x).
\]
Integrating in $x$ gives
\[
V(\pi)\le \frac12\int_{\sM} d(o,y)^2\,d\nu(y)<\infty.
\]
Next, for every $(x,y)$,
\[
d(o,B_\pi(x))^2\le 2d(B_\pi(x),y)^2+2d(o,y)^2.
\]
Integrating with respect to $d\pi(x,y)$ yields
\begin{align*}
\int_{\sM} d(o,B_\pi(x))^2\,d\mu(x)
&\le 2\int_{\sM\times\sM}d(B_\pi(x),y)^2\,d\pi(x,y)
   +2\int_{\sM} d(o,y)^2\,d\nu(y)\\
&=4\mathcal E_\pi(B_\pi)+2\int_{\sM} d(o,y)^2\,d\nu(y)\\
&=4V(\pi)+2\int_{\sM} d(o,y)^2\,d\nu(y)<\infty.
\end{align*}
Thus $(B_\pi)_\#\mu$ has finite second moment.
\end{proof}

\subsection{Proof of \Cref{thm:best_deterministic}}
\begin{proof}
For every $T\in\mathcal T_\mu$,
\[
F_x(T(x))\ge V_\pi(x)
\]
for $\mu$-almost every $x$. Integrating gives
\[
\mathcal E_\pi(T)=\int_{\sM} F_x(T(x))\,d\mu(x)
\ge \int_{\sM} V_\pi(x)\,d\mu(x)=V(\pi).
\]
Thus $\inf_T\mathcal E_\pi(T)\ge V(\pi)$. Conversely, if $B_\pi$ is a measurable selector from \Cref{prop:borel_selection}, then
\[
\mathcal E_\pi(B_\pi)=\int_{\sM} F_x(B_\pi(x))\,d\mu(x)=\int_{\sM} V_\pi(x)\,d\mu(x)=V(\pi),
\]
so equality holds.

Finally, $T$ attains the infimum if and only if
\[
0=\mathcal E_\pi(T)-V(\pi)
=\int_{\sM}\bigl(F_x(T(x))-V_\pi(x)\bigr)\,d\mu(x).
\]
The integrand is nonnegative. Hence the integral vanishes if and only if $F_x(T(x))=V_\pi(x)$ for $\mu$-almost every $x$, which is equivalent to $T(x)\in\mathcal B_\pi(x)$ almost everywhere.
\end{proof}

\subsection{Proof of \Cref{cor:monge_consistency}}
\begin{proof}
If $\pi=(\id,T)_\#\mu$, then $d\pi(y\mid x)=\delta_{T(x)}(dy)$ for $\mu$-almost every $x$. Therefore
\[
F_x(z)=\frac12 d(z,T(x))^2,
\]
whose unique minimizer is $T(x)$.
\end{proof}

\subsection{Proof of \Cref{thm:monge_characterization}}
\begin{proof}
Since $V_\pi(x)\ge 0$, (i) and (ii) are equivalent. Assume (ii), and let $B_\pi$ be a Borel intrinsic barycentric projection. Then
\[
0=V_\pi(x)=F_x(B_\pi(x))
=\frac12\int_{\sM} d(B_\pi(x),y)^2\,d\pi(y\mid x)
\]
for $\mu$-almost every $x$. The integrand is nonnegative, so $d(B_\pi(x),y)=0$ for $\pi(\cdot\mid x)$-almost every $y$. Hence
\[
d\pi(y\mid x)=\delta_{B_\pi(x)}(dy)
\]
for $\mu$-almost every $x$, proving (iii) with $T=B_\pi$.

If (iii) holds, then for every bounded Borel test function $\varphi:\sM\times\sM\to\R$,
\[
\int\varphi(x,y)\,d\pi(x,y)
=\int_{\sM}\int_{\sM}\varphi(x,y)\,d\pi(y\mid x)\,d\mu(x)
=\int_{\sM}\varphi(x,T(x))\,d\mu(x),
\]
which is exactly $\pi=(\id,T)_\#\mu$. Thus (iii) implies (iv). Finally, (iv) implies (i) by \Cref{cor:monge_consistency}.
\end{proof}

\subsection{Proof of \Cref{prop:target_mismatch}}
\begin{proof}
By \Cref{lem:second_moment_B}, $(B_\pi)_\#\mu\in\mathcal P_2(\sM)$, so the left-hand side is well defined. Define
\[
\widehat\pi:=(B_\pi\circ\mathrm{pr}_1,\mathrm{pr}_2)_\#\pi.
\]
Then $\widehat\pi\in\Pi((B_\pi)_\#\mu,\nu)$. Therefore
\begin{align*}
\frac12 W_2^2((B_\pi)_\#\mu,\nu)
&\le \frac12\int_{\sM\times\sM}d(z,y)^2\,d\widehat\pi(z,y)\\
&=\frac12\int_{\sM\times\sM}d(B_\pi(x),y)^2\,d\pi(x,y)\\
&=\mathcal E_\pi(B_\pi)=V(\pi),
\end{align*}
where the last equality follows from \Cref{thm:best_deterministic}.
\end{proof}

\subsection{Proof of \Cref{thm:hadamard_gap}}
\begin{proof}
Fix $x$ in the full-measure set where $B_\pi(x)$ is defined. In a Hadamard manifold, the barycenter variance inequality gives, for every $z\in\sM$,
\[
\int_{\sM} d(z,y)^2\,d\pi(y\mid x)
\ge
\int_{\sM} d(B_\pi(x),y)^2\,d\pi(y\mid x)
+d(z,B_\pi(x))^2.
\]
Dividing by $2$ and setting $z=T(x)$ yields
\[
F_x(T(x))-V_\pi(x)
\ge
\frac12 d(T(x),B_\pi(x))^2.
\]
Integrating over $x$ proves the inequality. If $\mathcal E_\pi(T)=V(\pi)$, then the right-hand side is zero, so $T=B_\pi$ $\mu$-almost everywhere.
\end{proof}

\subsection{Proof of \Cref{prop:tangent_hadamard}}
\begin{proof}
In a Hadamard manifold, $\log_x(y)$ is globally single-valued for every $x,y\in\sM$. Moreover, $\|\log_x(y)\|_x=d(x,y)$. Since $\pi\in\Pi(\mu,\nu)$ and both marginals have finite second moment, $\int d(x,y)^2\,d\pi(x,y)<\infty$. Therefore
\[
\int_{\sM} d(x,y)^2\,d\pi(y\mid x)<\infty
\]
for $\mu$-almost every $x$. This is exactly the square-integrability condition in $\mathrm{Dom}_2(\widetilde B_\pi)$.
\end{proof}

\subsection{Proof of \Cref{lem:tangent_measurable}}
\begin{proof}
In a Hadamard manifold, the map
\[
L:\sM\times\sM\to T\sM,\qquad L(x,y)=\log_x(y),
\]
is continuous when $T\sM$ is equipped with its usual Borel structure. Let
\[
\sX_1:=\left\{x\in\sM:\int_{\sM} d(x,y)^2\,d\pi(y\mid x)<\infty\right\}.
\]
The function $x\mapsto\int d(x,y)^2\,d\pi(y\mid x)$ is Borel by the kernel property and monotone approximation, hence $\sX_1$ is Borel and has full $\mu$-measure by \Cref{prop:tangent_hadamard}.

It remains to show that $m_\pi$ is Borel on $\sX_1$. Work in a local trivialization of $T\sM$ over a coordinate chart $U\subset\sM$. In this chart, the components of $L(x,y)$ are Borel functions of $(x,y)$. For $R>0$, the truncated components
\[
L_R(x,y):=L(x,y)\1_{\{\|L(x,y)\|_x\le R\}}
\]
are bounded Borel functions in the local trivialization, so
\[
x\mapsto\int_{\sM} L_R(x,y)\,d\pi(y\mid x)
\]
is Borel. As $R\to\infty$, these integrals converge pointwise on $\sX_1$ to $m_\pi(x)$, because $\|L(x,y)\|_x=d(x,y)$ has finite first moment by Cauchy--Schwarz. Thus the local coordinate components of $m_\pi$ are Borel. Since this holds in each chart of a countable atlas, $m_\pi$ is a Borel section of $T\sM$ on $\sX_1$.

Finally, the exponential map $\exp:T\sM\to\sM$ is continuous, so $\widetilde B_\pi(x)=\exp_x(m_\pi(x))$ is Borel on $\sX_1$. Extending arbitrarily outside $\sX_1$ gives a Borel map on $\sM$.
\end{proof}

\subsection{Proof of \Cref{prop:tangent_variational}}
\begin{proof}
Let $a(y):=\log_x(y)$. Since $m_\pi(x)=\int a(y)\,d\pi(y\mid x)$,
\[
v-a(y)=(v-m_\pi(x))+(m_\pi(x)-a(y)).
\]
Expanding the squared norm in the Hilbert space $T_x\sM$ and integrating, the cross term vanishes because
\[
\int_{\sM}(m_\pi(x)-a(y))\,d\pi(y\mid x)=0.
\]
This gives the stated identity. Strict convexity and uniqueness follow immediately.
\end{proof}
\subsection{Proof of \Cref{cor:euclidean_exactness}}
\begin{proof}
In Euclidean space, $\log_x(y)=y-x$ and $\exp_x(v)=x+v$. Hence
\[
\widetilde B_\pi(x)=x+\int(y-x)\,d\pi(y\mid x)=\int y\,d\pi(y\mid x),
\]
which is the unique Euclidean minimizer of $z\mapsto\frac12\int\|z-y\|^2\,d\pi(y\mid x)$.
\end{proof}

\subsection{Proof of \Cref{prop:tangent_monge}}
\begin{proof}
If $\pi=(\id,T)_\#\mu$, then $d\pi(y\mid x)=\delta_{T(x)}(dy)$ for $\mu$-almost every $x$. Hence
\[
m_\pi(x)=\int_{\sM}\log_x(y)\,d\pi(y\mid x)=\log_x(T(x)),
\]
and
\[
\widetilde B_\pi(x)=\exp_x(\log_x(T(x)))=T(x),
\]
where the cut-locus assumption guarantees single-valuedness. If $T(x)=\exp_x(-\nabla\psi(x))$, then, again away from the cut locus, $\log_x(T(x))=-\nabla\psi(x)$.
\end{proof}

\subsection{Proof of \Cref{thm:tangent_gradient}}
\begin{proof}
Write $z_v:=\exp_x(v)$ for $v\in T_x\sM$ sufficiently small. For fixed $y\notin\Cut(x)$, the first-variation formula gives differentiability at $v=0$ of
\[
h_y(v):=\frac12 d(z_v,y)^2
\]
with differential
\[
Dh_y(0)[v]=-\langle v,\log_x(y)\rangle_x.
\]
See \citet{karcher_1977_RiemannianCenterMass} for more details. Thus, for $y\notin\Cut(x)$,
\[
\frac{h_y(v)-h_y(0)+\langle v,\log_x(y)\rangle_x}{\|v\|_x}\to 0
\qquad\text{as }v\to 0.
\]
To pass this first-order expansion through the integral, use the bound
\begin{align*}
\frac{|h_y(v)-h_y(0)|}{\|v\|_x}
&\le \frac{1}{2\|v\|_x}|d(z_v,y)-d(x,y)|\bigl(d(z_v,y)+d(x,y)\bigr)\\
&\le d(x,y)+\frac12\|v\|_x,
\end{align*}
where $|d(z_v,y)-d(x,y)|\le d(z_v,x)=\|v\|_x$ and $d(z_v,y)\le d(x,y)+\|v\|_x$. Also,
\[
\frac{|\langle v,\log_x(y)\rangle_x|}{\|v\|_x}\le d(x,y).
\]
For $\|v\|_x\le 1$, the normalized remainder is therefore bounded by $2d(x,y)+1/2$, which is integrable because the conditional second moment implies the conditional first moment. Since $\pi(\Cut(x)\mid x)=0$, dominated convergence gives
\[
F_x(\exp_x(v))-F_x(x)
+
\left\langle v,\int_{\sM}\log_x(y)\,d\pi(y\mid x)\right\rangle_x
=o(\|v\|_x).
\]
Thus $F_x$ is differentiable at $x$ and
\[
\nabla F_x(x)=-\int_{\sM}\log_x(y)\,d\pi(y\mid x).
\]
The final identity follows from the definition of $\widetilde B_\pi$.
\end{proof}

\subsection{Proof of \Cref{cor:barycentric_equation}}
\begin{proof}
If $z$ is a differentiable minimizer of $F_x$, then $\nabla F_x(z)=0$. Applying the gradient formula at base point $z$ gives the stated barycentric equation. In the Hadamard regime, $F_x$ is strictly geodesically convex, so every critical point is the unique global minimizer.
\end{proof}

\subsection{Proof of \Cref{cor:tangent_first_discrete}}
\begin{proof}
The first gradient is
\[
\nabla\Phi_i(x_i)=-\sum_jw_{ij}\log_{x_i}(y_j).
\]
Therefore
\[
z_i^{(1)}=\exp_{x_i}(-\nabla\Phi_i(x_i))
=\exp_{x_i}\left(\sum_jw_{ij}\log_{x_i}(y_j)\right)
=\widetilde B_{\pi_{\mGamma}}(x_i).
\]
\end{proof}

\subsection{Proof of \Cref{prop:rowwise_decomposition}}
\begin{proof}
The conditional law at $x_i$ is $\sum_jw_{ij}\delta_{y_j}$. Substituting this into the definitions of $F_{x_i}$, $\mathcal E_{\pi_{\mGamma}}$, $B_{\pi_{\mGamma}}$, and $\widetilde B_{\pi_{\mGamma}}$ proves all identities.
\end{proof}

\subsection{Proof of \Cref{prop:discrete_stability}}
\begin{proof}
The convergence of weights implies local uniform convergence $\Phi_i^{(r)}\to\Phi_i$, since on any compact set $\sK\subset\sM$,
\[
\sup_{z\in\sK}|\Phi_i^{(r)}(z)-\Phi_i(z)|
\le \frac12\sum_j|w_{ij}^{(r)}-w_{ij}|\sup_{z\in\sK}d(z,y_j)^2\to0.
\]
The minimizers are uniformly bounded. Indeed, for a reference point $o$, the same coercivity bound as in \Cref{prop:existence_compactness} gives
\[
\Phi_i^{(r)}(z)\ge \frac14d(z,o)^2-C,
\]
with $C$ independent of $r$, because the target support is finite. Since $\Phi_i^{(r)}(B_i^{(r)})\le\Phi_i^{(r)}(o)$, the sequence $B_i^{(r)}$ lies in a common compact set. Every convergent subsequence has a limit that minimizes $\Phi_i$, by local uniform convergence. Since $\Phi_i$ has a unique minimizer, the whole sequence converges to $B_i$.

For the tangential statement, the vectors $\log_{x_i}(y_j)$ are fixed and well-defined. Therefore
\[
\sum_jw_{ij}^{(r)}\log_{x_i}(y_j)\to \sum_jw_{ij}\log_{x_i}(y_j)
\]
in $T_{x_i}\sM$, and continuity of $\exp_{x_i}$ gives the claim.
\end{proof}
\end{document}

%% file: math_commands.tex
\usepackage{amsmath,amsfonts,bm}

\def\eqref#1{equation~\ref{#1}}

\def\1{\bm{1}}

\def\rvx{{\mathbf{x}}}
\def\rvy{{\mathbf{y}}}

\def\vone{{\bm{1}}}

\def\va{{\bm{a}}}
\def\vb{{\bm{b}}}

\def\mLambda{{\bm{\Lambda}}}

\DeclareMathAlphabet{\mathsfit}{\encodingdefault}{\sfdefault}{m}{sl}
\SetMathAlphabet{\mathsfit}{bold}{\encodingdefault}{\sfdefault}{bx}{n}

\def\sD{{\mathbb{D}}}

\def\sK{{\mathbb{K}}}

\def\sM{{\mathbb{M}}}

\def\sO{{\mathbb{O}}}

\def\sX{{\mathbb{X}}}
\def\sY{{\mathbb{Y}}}

\newcommand{\E}{\mathbb{E}}

\newcommand{\R}{\mathbb{R}}

\DeclareMathOperator*{\argmin}{arg\,min}

%% file: figures/tikz_snippet.tex
\begin{tikzpicture}[
  >=Latex,
  manifold/.style={fill=blue!8, draw=blue!35!black, line width=0.75pt},
  gridline/.style={draw=blue!25!black, line width=0.35pt, opacity=0.55},
  geodesic/.style={draw=black!62, line width=0.85pt, dashed},
  meanconnect/.style={draw=black!62, line width=0.85pt, dashed},
  logvec/.style={-Latex, draw=orange!80!black, line width=0.85pt},
  meanvec/.style={-Latex, draw=purple!85!black, line width=1.25pt},
  expmap/.style={-Latex, draw=purple!85!black, line width=1.15pt, densely dashed},
  point/.style={circle, inner sep=1.7pt, fill=black},
  dest/.style={circle, inner sep=1.6pt, fill=red!75!black},
  bary/.style={circle, inner sep=2.0pt, fill=green!45!black},
  tang/.style={circle, inner sep=2.0pt, fill=purple!80!black},
  label/.style={font=\scriptsize, inner sep=1pt},  panel/.style={font=\bfseries\small}
]

\def\ManifoldPatch{%
  (-3.15,-0.86) .. controls (-2.42,0.98) and (-0.72,1.48) .. (0.80,1.20)
  .. controls (2.48,0.90) and (3.38,-0.18) .. (2.78,-1.02)
  .. controls (1.70,-1.72) and (-1.92,-1.62) .. (-3.15,-0.86) -- cycle}

\begin{scope}[shift={(0,0)}]
  \draw[manifold] \ManifoldPatch;

  \begin{scope}
    \clip \ManifoldPatch;
    \draw[gridline] (-2.65,-0.56) .. controls (-1.55,0.20) and (0.50,0.34) .. (2.55,-0.28);
    \draw[gridline] (-2.18,-1.04) .. controls (-0.90,-0.42) and (0.90,-0.38) .. (2.55,-0.80);
    \draw[gridline] (-2.02,0.67) .. controls (-0.55,0.98) and (1.13,0.77) .. (2.55,0.27);
    \draw[gridline] (-1.95,-1.18) .. controls (-1.66,-0.18) and (-1.63,0.55) .. (-1.33,1.10);
    \draw[gridline] (0.05,-1.34) .. controls (0.23,-0.35) and (0.23,0.50) .. (0.40,1.28);
    \draw[gridline] (1.92,-1.14) .. controls (1.58,-0.24) and (1.40,0.44) .. (1.19,1.02);

    \coordinate (BAclip) at (0.78,0.02);
    \coordinate (yAoneclip) at (0.36,0.80);
    \coordinate (yAtwoclip) at (1.65,0.10);
    \coordinate (yAthreeclip) at (0.18,-0.74);
    \draw[geodesic] (BAclip) .. controls (0.60,0.30) and (0.42,0.55) .. (yAoneclip);
    \draw[geodesic] (BAclip) .. controls (1.12,0.15) and (1.38,0.11) .. (yAtwoclip);
    \draw[geodesic] (BAclip) .. controls (0.54,-0.28) and (0.32,-0.54) .. (yAthreeclip);
  \end{scope}

  \coordinate (xA) at (-2.28,-0.24);
  \coordinate (yAone) at (0.36,0.80);
  \coordinate (yAtwo) at (1.65,0.10);
  \coordinate (yAthree) at (0.18,-0.74);
  \coordinate (BA) at (0.78,0.02);

  \node[point] at (xA) {};
  \node[dest] at (yAone) {};
  \node[dest] at (yAtwo) {};
  \node[dest] at (yAthree) {};
  \node[bary] at (BA) {};

  \node[label, anchor=east] at ($(xA)+(-0.08,0.08)$) {$x$};
  \node[label, anchor=west] at ($(yAone)+(0.09,0.05)$) {$y_1$};
  \node[label, anchor=west] at ($(yAtwo)+(0.07,0.02)$) {$y_2$};
  \node[label, anchor=north] at ($(yAthree)+(0.14,-0.10)$) {$y_3$};
  \node[label, anchor=south west] at ($(BA)+(-0.02,-0.4)$) {$B_\pi(x)$};
\end{scope}

\begin{scope}[shift={(8.35,0)}]
  \draw[manifold] \ManifoldPatch;

  \begin{scope}
    \clip \ManifoldPatch;
    \draw[gridline] (-2.65,-0.56) .. controls (-1.55,0.20) and (0.50,0.34) .. (2.55,-0.28);
    \draw[gridline] (-2.18,-1.04) .. controls (-0.90,-0.42) and (0.90,-0.38) .. (2.55,-0.80);
    \draw[gridline] (-2.02,0.67) .. controls (-0.55,0.98) and (1.13,0.77) .. (2.55,0.27);
    \draw[gridline] (-1.95,-1.18) .. controls (-1.66,-0.18) and (-1.63,0.55) .. (-1.33,1.10);
    \draw[gridline] (0.05,-1.34) .. controls (0.23,-0.35) and (0.23,0.50) .. (0.40,1.28);
    \draw[gridline] (1.92,-1.14) .. controls (1.58,-0.24) and (1.40,0.44) .. (1.19,1.02);

    \coordinate (xBclip) at (-2.28,-0.24);
    \coordinate (yBoneclip) at (0.36,0.80);
    \coordinate (yBtwoclip) at (1.65,0.10);
    \coordinate (yBthreeclip) at (0.18,-0.74);
    \draw[geodesic] (xBclip) .. controls (-1.55,0.50) and (-0.50,0.87) .. (yBoneclip);
    \draw[geodesic] (xBclip) .. controls (-0.95,0.05) and (0.35,0.17) .. (yBtwoclip);
    \draw[geodesic] (xBclip) .. controls (-1.10,-0.65) and (-0.32,-0.83) .. (yBthreeclip);
  \end{scope}

  \coordinate (xB) at (-2.28,-0.24);
  \coordinate (yBone) at (0.36,0.80);
  \coordinate (yBtwo) at (1.65,0.10);
  \coordinate (yBthree) at (0.18,-0.74);
  \coordinate (TB) at (0.32,0.08);

  \coordinate (p0) at (-2.28,-2.28);
  \coordinate (p1) at (-3.74,-3.05);
  \coordinate (p2) at (2.40,-3.05);
  \coordinate (p3) at (2.84,-1.74);
  \coordinate (p4) at (-3.28,-1.74);
  \draw[fill=orange!10, draw=orange!55!black, line width=0.60pt, opacity=0.96]
    (p1) -- (p2) -- (p3) -- (p4) -- cycle;
  \draw[draw=black!35, densely dotted] (xB) -- (p0);
  \node[label, anchor=east, black!65] at ($(p0)+(-0.02,0.00)$) {$0_x$};

  \node[point] at (xB) {};
  \node[dest] at (yBone) {};
  \node[dest] at (yBtwo) {};
  \node[dest] at (yBthree) {};
  \node[tang] at (TB) {};

  \node[label, anchor=east] at ($(xB)+(-0.08,0.08)$) {$x$};
  \node[label, anchor=west] at ($(yBone)+(0.09,0.05)$) {$y_1$};
  \node[label, anchor=west] at ($(yBtwo)+(0.07,0.02)$) {$y_2$};
  \node[label, anchor=north] at ($(yBthree)+(0.14,-0.10)$) {$y_3$};
  \node[label, anchor=south west] at ($(TB)+(0.07,0.05)$) {$\widetilde B_\pi(x)$};

  \coordinate (vone) at (-0.78,-2.08);
  \coordinate (vtwo) at (0.60,-2.34);
  \coordinate (vthree) at (-0.98,-2.56);
  \coordinate (vmean) at (-0.09,-2.21);

  \draw[logvec] (p0) -- (vone);
  \draw[logvec] (p0) -- (vtwo);
  \draw[logvec] (p0) -- (vthree);
  \draw[meanvec] (p0) -- (vmean);

  \draw[meanconnect] (vone) -- (vmean);
  \draw[meanconnect] (vtwo) -- (vmean);
  \draw[meanconnect] (vthree) -- (vmean);

  \node[label, anchor=south east] at ($(vone)+(-0.2,0)$) {$\log_x(y_1)$};
  \node[label, anchor=west] at ($(vtwo)+(0.06,0.00)$) {$\log_x(y_2)$};
  \node[label, anchor=north east] at ($(vthree)+(-0.04,-0.03)$) {$\log_x(y_3)$};
  \node[label, anchor=south] at ($(vmean)+(-0.1,0.10)$) {$m_\pi(x)$};

  \draw[expmap] (vmean)
    .. controls (1.08,-1.58) and (1.30,-0.50) ..
    node[label, pos=0.54, anchor=west, xshift=0.14cm, yshift=0.01cm, purple!85!black] {$\exp_x$}
    (TB);
\end{scope}

\end{tikzpicture}

%% file: report/eeg_spd/eeg_geometric_metrics.pgf
\begingroup%
\makeatletter%
\begin{pgfpicture}%
\pgfpathrectangle{\pgfpointorigin}{\pgfqpoint{6.102918in}{2.166643in}}%
\pgfusepath{use as bounding box, clip}%
\begin{pgfscope}%
\pgfsetbuttcap%
\pgfsetmiterjoin%
\definecolor{currentfill}{rgb}{1.000000,1.000000,1.000000}%
\pgfsetfillcolor{currentfill}%
\pgfsetlinewidth{0.000000pt}%
\definecolor{currentstroke}{rgb}{1.000000,1.000000,1.000000}%
\pgfsetstrokecolor{currentstroke}%
\pgfsetdash{}{0pt}%
\pgfpathmoveto{\pgfqpoint{0.000000in}{0.000000in}}%
\pgfpathlineto{\pgfqpoint{6.102918in}{0.000000in}}%
\pgfpathlineto{\pgfqpoint{6.102918in}{2.166643in}}%
\pgfpathlineto{\pgfqpoint{0.000000in}{2.166643in}}%
\pgfpathlineto{\pgfqpoint{0.000000in}{0.000000in}}%
\pgfpathclose%
\pgfusepath{fill}%
\end{pgfscope}%
\begin{pgfscope}%
\pgfsetbuttcap%
\pgfsetmiterjoin%
\definecolor{currentfill}{rgb}{1.000000,1.000000,1.000000}%
\pgfsetfillcolor{currentfill}%
\pgfsetlinewidth{0.000000pt}%
\definecolor{currentstroke}{rgb}{0.000000,0.000000,0.000000}%
\pgfsetstrokecolor{currentstroke}%
\pgfsetstrokeopacity{0.000000}%
\pgfsetdash{}{0pt}%
\pgfpathmoveto{\pgfqpoint{0.352917in}{0.559404in}}%
\pgfpathlineto{\pgfqpoint{2.776646in}{0.559404in}}%
\pgfpathlineto{\pgfqpoint{2.776646in}{1.942004in}}%
\pgfpathlineto{\pgfqpoint{0.352917in}{1.942004in}}%
\pgfpathlineto{\pgfqpoint{0.352917in}{0.559404in}}%
\pgfpathclose%
\pgfusepath{fill}%
\end{pgfscope}%
\begin{pgfscope}%
\pgfpathrectangle{\pgfqpoint{0.352917in}{0.559404in}}{\pgfqpoint{2.423729in}{1.382600in}}%
\pgfusepath{clip}%
\pgfsetbuttcap%
\pgfsetmiterjoin%
\definecolor{currentfill}{rgb}{0.121569,0.466667,0.705882}%
\pgfsetfillcolor{currentfill}%
\pgfsetlinewidth{0.000000pt}%
\definecolor{currentstroke}{rgb}{0.000000,0.000000,0.000000}%
\pgfsetstrokecolor{currentstroke}%
\pgfsetstrokeopacity{0.000000}%
\pgfsetdash{}{0pt}%
\pgfpathmoveto{\pgfqpoint{0.463087in}{0.559404in}}%
\pgfpathlineto{\pgfqpoint{0.830319in}{0.559404in}}%
\pgfpathlineto{\pgfqpoint{0.830319in}{0.559404in}}%
\pgfpathlineto{\pgfqpoint{0.463087in}{0.559404in}}%
\pgfpathlineto{\pgfqpoint{0.463087in}{0.559404in}}%
\pgfpathclose%
\pgfusepath{fill}%
\end{pgfscope}%
\begin{pgfscope}%
\pgfpathrectangle{\pgfqpoint{0.352917in}{0.559404in}}{\pgfqpoint{2.423729in}{1.382600in}}%
\pgfusepath{clip}%
\pgfsetbuttcap%
\pgfsetmiterjoin%
\definecolor{currentfill}{rgb}{0.121569,0.466667,0.705882}%
\pgfsetfillcolor{currentfill}%
\pgfsetlinewidth{0.000000pt}%
\definecolor{currentstroke}{rgb}{0.000000,0.000000,0.000000}%
\pgfsetstrokecolor{currentstroke}%
\pgfsetstrokeopacity{0.000000}%
\pgfsetdash{}{0pt}%
\pgfpathmoveto{\pgfqpoint{0.922127in}{0.559404in}}%
\pgfpathlineto{\pgfqpoint{1.289358in}{0.559404in}}%
\pgfpathlineto{\pgfqpoint{1.289358in}{0.560159in}}%
\pgfpathlineto{\pgfqpoint{0.922127in}{0.560159in}}%
\pgfpathlineto{\pgfqpoint{0.922127in}{0.559404in}}%
\pgfpathclose%
\pgfusepath{fill}%
\end{pgfscope}%
\begin{pgfscope}%
\pgfpathrectangle{\pgfqpoint{0.352917in}{0.559404in}}{\pgfqpoint{2.423729in}{1.382600in}}%
\pgfusepath{clip}%
\pgfsetbuttcap%
\pgfsetmiterjoin%
\definecolor{currentfill}{rgb}{0.121569,0.466667,0.705882}%
\pgfsetfillcolor{currentfill}%
\pgfsetlinewidth{0.000000pt}%
\definecolor{currentstroke}{rgb}{0.000000,0.000000,0.000000}%
\pgfsetstrokecolor{currentstroke}%
\pgfsetstrokeopacity{0.000000}%
\pgfsetdash{}{0pt}%
\pgfpathmoveto{\pgfqpoint{1.381166in}{0.559404in}}%
\pgfpathlineto{\pgfqpoint{1.748398in}{0.559404in}}%
\pgfpathlineto{\pgfqpoint{1.748398in}{0.978944in}}%
\pgfpathlineto{\pgfqpoint{1.381166in}{0.978944in}}%
\pgfpathlineto{\pgfqpoint{1.381166in}{0.559404in}}%
\pgfpathclose%
\pgfusepath{fill}%
\end{pgfscope}%
\begin{pgfscope}%
\pgfpathrectangle{\pgfqpoint{0.352917in}{0.559404in}}{\pgfqpoint{2.423729in}{1.382600in}}%
\pgfusepath{clip}%
\pgfsetbuttcap%
\pgfsetmiterjoin%
\definecolor{currentfill}{rgb}{0.121569,0.466667,0.705882}%
\pgfsetfillcolor{currentfill}%
\pgfsetlinewidth{0.000000pt}%
\definecolor{currentstroke}{rgb}{0.000000,0.000000,0.000000}%
\pgfsetstrokecolor{currentstroke}%
\pgfsetstrokeopacity{0.000000}%
\pgfsetdash{}{0pt}%
\pgfpathmoveto{\pgfqpoint{1.840206in}{0.559404in}}%
\pgfpathlineto{\pgfqpoint{2.207437in}{0.559404in}}%
\pgfpathlineto{\pgfqpoint{2.207437in}{0.564377in}}%
\pgfpathlineto{\pgfqpoint{1.840206in}{0.564377in}}%
\pgfpathlineto{\pgfqpoint{1.840206in}{0.559404in}}%
\pgfpathclose%
\pgfusepath{fill}%
\end{pgfscope}%
\begin{pgfscope}%
\pgfpathrectangle{\pgfqpoint{0.352917in}{0.559404in}}{\pgfqpoint{2.423729in}{1.382600in}}%
\pgfusepath{clip}%
\pgfsetbuttcap%
\pgfsetmiterjoin%
\definecolor{currentfill}{rgb}{0.121569,0.466667,0.705882}%
\pgfsetfillcolor{currentfill}%
\pgfsetlinewidth{0.000000pt}%
\definecolor{currentstroke}{rgb}{0.000000,0.000000,0.000000}%
\pgfsetstrokecolor{currentstroke}%
\pgfsetstrokeopacity{0.000000}%
\pgfsetdash{}{0pt}%
\pgfpathmoveto{\pgfqpoint{2.299245in}{0.559404in}}%
\pgfpathlineto{\pgfqpoint{2.666477in}{0.559404in}}%
\pgfpathlineto{\pgfqpoint{2.666477in}{1.789713in}}%
\pgfpathlineto{\pgfqpoint{2.299245in}{1.789713in}}%
\pgfpathlineto{\pgfqpoint{2.299245in}{0.559404in}}%
\pgfpathclose%
\pgfusepath{fill}%
\end{pgfscope}%
\begin{pgfscope}%
\pgfsetbuttcap%
\pgfsetroundjoin%
\definecolor{currentfill}{rgb}{0.000000,0.000000,0.000000}%
\pgfsetfillcolor{currentfill}%
\pgfsetlinewidth{0.803000pt}%
\definecolor{currentstroke}{rgb}{0.000000,0.000000,0.000000}%
\pgfsetstrokecolor{currentstroke}%
\pgfsetdash{}{0pt}%
\pgfsys@defobject{currentmarker}{\pgfqpoint{0.000000in}{-0.048611in}}{\pgfqpoint{0.000000in}{0.000000in}}{%
\pgfpathmoveto{\pgfqpoint{0.000000in}{0.000000in}}%
\pgfpathlineto{\pgfqpoint{0.000000in}{-0.048611in}}%
\pgfusepath{stroke,fill}%
}%
\begin{pgfscope}%
\pgfsys@transformshift{0.646703in}{0.559404in}%
\pgfsys@useobject{currentmarker}{}%
\end{pgfscope}%
\end{pgfscope}%
\begin{pgfscope}%
\definecolor{textcolor}{rgb}{0.000000,0.000000,0.000000}%
\pgfsetstrokecolor{textcolor}%
\pgfsetfillcolor{textcolor}%
\pgftext[x=0.296669in, y=0.199504in, left, base,rotate=30.000000]{\color{textcolor}{\sffamily\fontsize{8.000000}{9.600000}\selectfont\catcode`\^=\active\def^{\ifmmode\sp\else\^{}\fi}\catcode`\%=\active\def
\end{pgfscope}%
\begin{pgfscope}%
\pgfsetbuttcap%
\pgfsetroundjoin%
\definecolor{currentfill}{rgb}{0.000000,0.000000,0.000000}%
\pgfsetfillcolor{currentfill}%
\pgfsetlinewidth{0.803000pt}%
\definecolor{currentstroke}{rgb}{0.000000,0.000000,0.000000}%
\pgfsetstrokecolor{currentstroke}%
\pgfsetdash{}{0pt}%
\pgfsys@defobject{currentmarker}{\pgfqpoint{0.000000in}{-0.048611in}}{\pgfqpoint{0.000000in}{0.000000in}}{%
\pgfpathmoveto{\pgfqpoint{0.000000in}{0.000000in}}%
\pgfpathlineto{\pgfqpoint{0.000000in}{-0.048611in}}%
\pgfusepath{stroke,fill}%
}%
\begin{pgfscope}%
\pgfsys@transformshift{1.105742in}{0.559404in}%
\pgfsys@useobject{currentmarker}{}%
\end{pgfscope}%
\end{pgfscope}%
\begin{pgfscope}%
\definecolor{textcolor}{rgb}{0.000000,0.000000,0.000000}%
\pgfsetstrokecolor{textcolor}%
\pgfsetfillcolor{textcolor}%
\pgftext[x=0.647554in, y=0.137061in, left, base,rotate=30.000000]{\color{textcolor}{\sffamily\fontsize{8.000000}{9.600000}\selectfont\catcode`\^=\active\def^{\ifmmode\sp\else\^{}\fi}\catcode`\%=\active\def
\end{pgfscope}%
\begin{pgfscope}%
\pgfsetbuttcap%
\pgfsetroundjoin%
\definecolor{currentfill}{rgb}{0.000000,0.000000,0.000000}%
\pgfsetfillcolor{currentfill}%
\pgfsetlinewidth{0.803000pt}%
\definecolor{currentstroke}{rgb}{0.000000,0.000000,0.000000}%
\pgfsetstrokecolor{currentstroke}%
\pgfsetdash{}{0pt}%
\pgfsys@defobject{currentmarker}{\pgfqpoint{0.000000in}{-0.048611in}}{\pgfqpoint{0.000000in}{0.000000in}}{%
\pgfpathmoveto{\pgfqpoint{0.000000in}{0.000000in}}%
\pgfpathlineto{\pgfqpoint{0.000000in}{-0.048611in}}%
\pgfusepath{stroke,fill}%
}%
\begin{pgfscope}%
\pgfsys@transformshift{1.564782in}{0.559404in}%
\pgfsys@useobject{currentmarker}{}%
\end{pgfscope}%
\end{pgfscope}%
\begin{pgfscope}%
\definecolor{textcolor}{rgb}{0.000000,0.000000,0.000000}%
\pgfsetstrokecolor{textcolor}%
\pgfsetfillcolor{textcolor}%
\pgftext[x=1.331289in, y=0.266788in, left, base,rotate=30.000000]{\color{textcolor}{\sffamily\fontsize{8.000000}{9.600000}\selectfont\catcode`\^=\active\def^{\ifmmode\sp\else\^{}\fi}\catcode`\%=\active\def
\end{pgfscope}%
\begin{pgfscope}%
\pgfsetbuttcap%
\pgfsetroundjoin%
\definecolor{currentfill}{rgb}{0.000000,0.000000,0.000000}%
\pgfsetfillcolor{currentfill}%
\pgfsetlinewidth{0.803000pt}%
\definecolor{currentstroke}{rgb}{0.000000,0.000000,0.000000}%
\pgfsetstrokecolor{currentstroke}%
\pgfsetdash{}{0pt}%
\pgfsys@defobject{currentmarker}{\pgfqpoint{0.000000in}{-0.048611in}}{\pgfqpoint{0.000000in}{0.000000in}}{%
\pgfpathmoveto{\pgfqpoint{0.000000in}{0.000000in}}%
\pgfpathlineto{\pgfqpoint{0.000000in}{-0.048611in}}%
\pgfusepath{stroke,fill}%
}%
\begin{pgfscope}%
\pgfsys@transformshift{2.023821in}{0.559404in}%
\pgfsys@useobject{currentmarker}{}%
\end{pgfscope}%
\end{pgfscope}%
\begin{pgfscope}%
\definecolor{textcolor}{rgb}{0.000000,0.000000,0.000000}%
\pgfsetstrokecolor{textcolor}%
\pgfsetfillcolor{textcolor}%
\pgftext[x=1.412606in, y=0.048710in, left, base,rotate=30.000000]{\color{textcolor}{\sffamily\fontsize{8.000000}{9.600000}\selectfont\catcode`\^=\active\def^{\ifmmode\sp\else\^{}\fi}\catcode`\%=\active\def
\end{pgfscope}%
\begin{pgfscope}%
\pgfsetbuttcap%
\pgfsetroundjoin%
\definecolor{currentfill}{rgb}{0.000000,0.000000,0.000000}%
\pgfsetfillcolor{currentfill}%
\pgfsetlinewidth{0.803000pt}%
\definecolor{currentstroke}{rgb}{0.000000,0.000000,0.000000}%
\pgfsetstrokecolor{currentstroke}%
\pgfsetdash{}{0pt}%
\pgfsys@defobject{currentmarker}{\pgfqpoint{0.000000in}{-0.048611in}}{\pgfqpoint{0.000000in}{0.000000in}}{%
\pgfpathmoveto{\pgfqpoint{0.000000in}{0.000000in}}%
\pgfpathlineto{\pgfqpoint{0.000000in}{-0.048611in}}%
\pgfusepath{stroke,fill}%
}%
\begin{pgfscope}%
\pgfsys@transformshift{2.482861in}{0.559404in}%
\pgfsys@useobject{currentmarker}{}%
\end{pgfscope}%
\end{pgfscope}%
\begin{pgfscope}%
\definecolor{textcolor}{rgb}{0.000000,0.000000,0.000000}%
\pgfsetstrokecolor{textcolor}%
\pgfsetfillcolor{textcolor}%
\pgftext[x=2.150135in, y=0.209496in, left, base,rotate=30.000000]{\color{textcolor}{\sffamily\fontsize{8.000000}{9.600000}\selectfont\catcode`\^=\active\def^{\ifmmode\sp\else\^{}\fi}\catcode`\%=\active\def
\end{pgfscope}%
\begin{pgfscope}%
\pgfpathrectangle{\pgfqpoint{0.352917in}{0.559404in}}{\pgfqpoint{2.423729in}{1.382600in}}%
\pgfusepath{clip}%
\pgfsetrectcap%
\pgfsetroundjoin%
\pgfsetlinewidth{0.401500pt}%
\definecolor{currentstroke}{rgb}{0.690196,0.690196,0.690196}%
\pgfsetstrokecolor{currentstroke}%
\pgfsetstrokeopacity{0.350000}%
\pgfsetdash{}{0pt}%
\pgfpathmoveto{\pgfqpoint{0.352917in}{0.559404in}}%
\pgfpathlineto{\pgfqpoint{2.776646in}{0.559404in}}%
\pgfusepath{stroke}%
\end{pgfscope}%
\begin{pgfscope}%
\pgfsetbuttcap%
\pgfsetroundjoin%
\definecolor{currentfill}{rgb}{0.000000,0.000000,0.000000}%
\pgfsetfillcolor{currentfill}%
\pgfsetlinewidth{0.803000pt}%
\definecolor{currentstroke}{rgb}{0.000000,0.000000,0.000000}%
\pgfsetstrokecolor{currentstroke}%
\pgfsetdash{}{0pt}%
\pgfsys@defobject{currentmarker}{\pgfqpoint{-0.048611in}{0.000000in}}{\pgfqpoint{-0.000000in}{0.000000in}}{%
\pgfpathmoveto{\pgfqpoint{-0.000000in}{0.000000in}}%
\pgfpathlineto{\pgfqpoint{-0.048611in}{0.000000in}}%
\pgfusepath{stroke,fill}%
}%
\begin{pgfscope}%
\pgfsys@transformshift{0.352917in}{0.559404in}%
\pgfsys@useobject{currentmarker}{}%
\end{pgfscope}%
\end{pgfscope}%
\begin{pgfscope}%
\definecolor{textcolor}{rgb}{0.000000,0.000000,0.000000}%
\pgfsetstrokecolor{textcolor}%
\pgfsetfillcolor{textcolor}%
\pgftext[x=0.196667in, y=0.520824in, left, base]{\color{textcolor}{\sffamily\fontsize{8.000000}{9.600000}\selectfont\catcode`\^=\active\def^{\ifmmode\sp\else\^{}\fi}\catcode`\%=\active\def
\end{pgfscope}%
\begin{pgfscope}%
\pgfpathrectangle{\pgfqpoint{0.352917in}{0.559404in}}{\pgfqpoint{2.423729in}{1.382600in}}%
\pgfusepath{clip}%
\pgfsetrectcap%
\pgfsetroundjoin%
\pgfsetlinewidth{0.401500pt}%
\definecolor{currentstroke}{rgb}{0.690196,0.690196,0.690196}%
\pgfsetstrokecolor{currentstroke}%
\pgfsetstrokeopacity{0.350000}%
\pgfsetdash{}{0pt}%
\pgfpathmoveto{\pgfqpoint{0.352917in}{0.976265in}}%
\pgfpathlineto{\pgfqpoint{2.776646in}{0.976265in}}%
\pgfusepath{stroke}%
\end{pgfscope}%
\begin{pgfscope}%
\pgfsetbuttcap%
\pgfsetroundjoin%
\definecolor{currentfill}{rgb}{0.000000,0.000000,0.000000}%
\pgfsetfillcolor{currentfill}%
\pgfsetlinewidth{0.803000pt}%
\definecolor{currentstroke}{rgb}{0.000000,0.000000,0.000000}%
\pgfsetstrokecolor{currentstroke}%
\pgfsetdash{}{0pt}%
\pgfsys@defobject{currentmarker}{\pgfqpoint{-0.048611in}{0.000000in}}{\pgfqpoint{-0.000000in}{0.000000in}}{%
\pgfpathmoveto{\pgfqpoint{-0.000000in}{0.000000in}}%
\pgfpathlineto{\pgfqpoint{-0.048611in}{0.000000in}}%
\pgfusepath{stroke,fill}%
}%
\begin{pgfscope}%
\pgfsys@transformshift{0.352917in}{0.976265in}%
\pgfsys@useobject{currentmarker}{}%
\end{pgfscope}%
\end{pgfscope}%
\begin{pgfscope}%
\definecolor{textcolor}{rgb}{0.000000,0.000000,0.000000}%
\pgfsetstrokecolor{textcolor}%
\pgfsetfillcolor{textcolor}%
\pgftext[x=0.196667in, y=0.937685in, left, base]{\color{textcolor}{\sffamily\fontsize{8.000000}{9.600000}\selectfont\catcode`\^=\active\def^{\ifmmode\sp\else\^{}\fi}\catcode`\%=\active\def
\end{pgfscope}%
\begin{pgfscope}%
\pgfpathrectangle{\pgfqpoint{0.352917in}{0.559404in}}{\pgfqpoint{2.423729in}{1.382600in}}%
\pgfusepath{clip}%
\pgfsetrectcap%
\pgfsetroundjoin%
\pgfsetlinewidth{0.401500pt}%
\definecolor{currentstroke}{rgb}{0.690196,0.690196,0.690196}%
\pgfsetstrokecolor{currentstroke}%
\pgfsetstrokeopacity{0.350000}%
\pgfsetdash{}{0pt}%
\pgfpathmoveto{\pgfqpoint{0.352917in}{1.393126in}}%
\pgfpathlineto{\pgfqpoint{2.776646in}{1.393126in}}%
\pgfusepath{stroke}%
\end{pgfscope}%
\begin{pgfscope}%
\pgfsetbuttcap%
\pgfsetroundjoin%
\definecolor{currentfill}{rgb}{0.000000,0.000000,0.000000}%
\pgfsetfillcolor{currentfill}%
\pgfsetlinewidth{0.803000pt}%
\definecolor{currentstroke}{rgb}{0.000000,0.000000,0.000000}%
\pgfsetstrokecolor{currentstroke}%
\pgfsetdash{}{0pt}%
\pgfsys@defobject{currentmarker}{\pgfqpoint{-0.048611in}{0.000000in}}{\pgfqpoint{-0.000000in}{0.000000in}}{%
\pgfpathmoveto{\pgfqpoint{-0.000000in}{0.000000in}}%
\pgfpathlineto{\pgfqpoint{-0.048611in}{0.000000in}}%
\pgfusepath{stroke,fill}%
}%
\begin{pgfscope}%
\pgfsys@transformshift{0.352917in}{1.393126in}%
\pgfsys@useobject{currentmarker}{}%
\end{pgfscope}%
\end{pgfscope}%
\begin{pgfscope}%
\definecolor{textcolor}{rgb}{0.000000,0.000000,0.000000}%
\pgfsetstrokecolor{textcolor}%
\pgfsetfillcolor{textcolor}%
\pgftext[x=0.196667in, y=1.354546in, left, base]{\color{textcolor}{\sffamily\fontsize{8.000000}{9.600000}\selectfont\catcode`\^=\active\def^{\ifmmode\sp\else\^{}\fi}\catcode`\%=\active\def
\end{pgfscope}%
\begin{pgfscope}%
\pgfpathrectangle{\pgfqpoint{0.352917in}{0.559404in}}{\pgfqpoint{2.423729in}{1.382600in}}%
\pgfusepath{clip}%
\pgfsetrectcap%
\pgfsetroundjoin%
\pgfsetlinewidth{0.401500pt}%
\definecolor{currentstroke}{rgb}{0.690196,0.690196,0.690196}%
\pgfsetstrokecolor{currentstroke}%
\pgfsetstrokeopacity{0.350000}%
\pgfsetdash{}{0pt}%
\pgfpathmoveto{\pgfqpoint{0.352917in}{1.809987in}}%
\pgfpathlineto{\pgfqpoint{2.776646in}{1.809987in}}%
\pgfusepath{stroke}%
\end{pgfscope}%
\begin{pgfscope}%
\pgfsetbuttcap%
\pgfsetroundjoin%
\definecolor{currentfill}{rgb}{0.000000,0.000000,0.000000}%
\pgfsetfillcolor{currentfill}%
\pgfsetlinewidth{0.803000pt}%
\definecolor{currentstroke}{rgb}{0.000000,0.000000,0.000000}%
\pgfsetstrokecolor{currentstroke}%
\pgfsetdash{}{0pt}%
\pgfsys@defobject{currentmarker}{\pgfqpoint{-0.048611in}{0.000000in}}{\pgfqpoint{-0.000000in}{0.000000in}}{%
\pgfpathmoveto{\pgfqpoint{-0.000000in}{0.000000in}}%
\pgfpathlineto{\pgfqpoint{-0.048611in}{0.000000in}}%
\pgfusepath{stroke,fill}%
}%
\begin{pgfscope}%
\pgfsys@transformshift{0.352917in}{1.809987in}%
\pgfsys@useobject{currentmarker}{}%
\end{pgfscope}%
\end{pgfscope}%
\begin{pgfscope}%
\definecolor{textcolor}{rgb}{0.000000,0.000000,0.000000}%
\pgfsetstrokecolor{textcolor}%
\pgfsetfillcolor{textcolor}%
\pgftext[x=0.196667in, y=1.771407in, left, base]{\color{textcolor}{\sffamily\fontsize{8.000000}{9.600000}\selectfont\catcode`\^=\active\def^{\ifmmode\sp\else\^{}\fi}\catcode`\%=\active\def
\end{pgfscope}%
\begin{pgfscope}%
\definecolor{textcolor}{rgb}{0.000000,0.000000,0.000000}%
\pgfsetstrokecolor{textcolor}%
\pgfsetfillcolor{textcolor}%
\pgftext[x=0.141111in,y=1.250704in,,bottom,rotate=90.000000]{\color{textcolor}{\sffamily\fontsize{9.000000}{10.800000}\selectfont\catcode`\^=\active\def^{\ifmmode\sp\else\^{}\fi}\catcode`\%=\active\def
\end{pgfscope}%
\begin{pgfscope}%
\pgfpathrectangle{\pgfqpoint{0.352917in}{0.559404in}}{\pgfqpoint{2.423729in}{1.382600in}}%
\pgfusepath{clip}%
\pgfsetbuttcap%
\pgfsetroundjoin%
\pgfsetlinewidth{1.405250pt}%
\definecolor{currentstroke}{rgb}{0.000000,0.000000,0.000000}%
\pgfsetstrokecolor{currentstroke}%
\pgfsetdash{}{0pt}%
\pgfpathmoveto{\pgfqpoint{0.646703in}{0.559404in}}%
\pgfpathlineto{\pgfqpoint{0.646703in}{0.559404in}}%
\pgfusepath{stroke}%
\end{pgfscope}%
\begin{pgfscope}%
\pgfpathrectangle{\pgfqpoint{0.352917in}{0.559404in}}{\pgfqpoint{2.423729in}{1.382600in}}%
\pgfusepath{clip}%
\pgfsetbuttcap%
\pgfsetroundjoin%
\pgfsetlinewidth{1.405250pt}%
\definecolor{currentstroke}{rgb}{0.000000,0.000000,0.000000}%
\pgfsetstrokecolor{currentstroke}%
\pgfsetdash{}{0pt}%
\pgfpathmoveto{\pgfqpoint{1.105742in}{0.560091in}}%
\pgfpathlineto{\pgfqpoint{1.105742in}{0.560227in}}%
\pgfusepath{stroke}%
\end{pgfscope}%
\begin{pgfscope}%
\pgfpathrectangle{\pgfqpoint{0.352917in}{0.559404in}}{\pgfqpoint{2.423729in}{1.382600in}}%
\pgfusepath{clip}%
\pgfsetbuttcap%
\pgfsetroundjoin%
\pgfsetlinewidth{1.405250pt}%
\definecolor{currentstroke}{rgb}{0.000000,0.000000,0.000000}%
\pgfsetstrokecolor{currentstroke}%
\pgfsetdash{}{0pt}%
\pgfpathmoveto{\pgfqpoint{1.564782in}{0.977037in}}%
\pgfpathlineto{\pgfqpoint{1.564782in}{0.980851in}}%
\pgfusepath{stroke}%
\end{pgfscope}%
\begin{pgfscope}%
\pgfpathrectangle{\pgfqpoint{0.352917in}{0.559404in}}{\pgfqpoint{2.423729in}{1.382600in}}%
\pgfusepath{clip}%
\pgfsetbuttcap%
\pgfsetroundjoin%
\pgfsetlinewidth{1.405250pt}%
\definecolor{currentstroke}{rgb}{0.000000,0.000000,0.000000}%
\pgfsetstrokecolor{currentstroke}%
\pgfsetdash{}{0pt}%
\pgfpathmoveto{\pgfqpoint{2.023821in}{0.564062in}}%
\pgfpathlineto{\pgfqpoint{2.023821in}{0.564692in}}%
\pgfusepath{stroke}%
\end{pgfscope}%
\begin{pgfscope}%
\pgfpathrectangle{\pgfqpoint{0.352917in}{0.559404in}}{\pgfqpoint{2.423729in}{1.382600in}}%
\pgfusepath{clip}%
\pgfsetbuttcap%
\pgfsetroundjoin%
\pgfsetlinewidth{1.405250pt}%
\definecolor{currentstroke}{rgb}{0.000000,0.000000,0.000000}%
\pgfsetstrokecolor{currentstroke}%
\pgfsetdash{}{0pt}%
\pgfpathmoveto{\pgfqpoint{2.482861in}{1.703260in}}%
\pgfpathlineto{\pgfqpoint{2.482861in}{1.876166in}}%
\pgfusepath{stroke}%
\end{pgfscope}%
\begin{pgfscope}%
\pgfpathrectangle{\pgfqpoint{0.352917in}{0.559404in}}{\pgfqpoint{2.423729in}{1.382600in}}%
\pgfusepath{clip}%
\pgfsetbuttcap%
\pgfsetroundjoin%
\definecolor{currentfill}{rgb}{0.121569,0.466667,0.705882}%
\pgfsetfillcolor{currentfill}%
\pgfsetlinewidth{1.003750pt}%
\definecolor{currentstroke}{rgb}{0.000000,0.000000,0.000000}%
\pgfsetstrokecolor{currentstroke}%
\pgfsetdash{}{0pt}%
\pgfsys@defobject{currentmarker}{\pgfqpoint{-0.027778in}{-0.000000in}}{\pgfqpoint{0.027778in}{0.000000in}}{%
\pgfpathmoveto{\pgfqpoint{0.027778in}{-0.000000in}}%
\pgfpathlineto{\pgfqpoint{-0.027778in}{0.000000in}}%
\pgfusepath{stroke,fill}%
}%
\begin{pgfscope}%
\pgfsys@transformshift{0.646703in}{0.559404in}%
\pgfsys@useobject{currentmarker}{}%
\end{pgfscope}%
\begin{pgfscope}%
\pgfsys@transformshift{1.105742in}{0.560091in}%
\pgfsys@useobject{currentmarker}{}%
\end{pgfscope}%
\begin{pgfscope}%
\pgfsys@transformshift{1.564782in}{0.977037in}%
\pgfsys@useobject{currentmarker}{}%
\end{pgfscope}%
\begin{pgfscope}%
\pgfsys@transformshift{2.023821in}{0.564062in}%
\pgfsys@useobject{currentmarker}{}%
\end{pgfscope}%
\begin{pgfscope}%
\pgfsys@transformshift{2.482861in}{1.703260in}%
\pgfsys@useobject{currentmarker}{}%
\end{pgfscope}%
\end{pgfscope}%
\begin{pgfscope}%
\pgfpathrectangle{\pgfqpoint{0.352917in}{0.559404in}}{\pgfqpoint{2.423729in}{1.382600in}}%
\pgfusepath{clip}%
\pgfsetbuttcap%
\pgfsetroundjoin%
\definecolor{currentfill}{rgb}{0.121569,0.466667,0.705882}%
\pgfsetfillcolor{currentfill}%
\pgfsetlinewidth{1.003750pt}%
\definecolor{currentstroke}{rgb}{0.000000,0.000000,0.000000}%
\pgfsetstrokecolor{currentstroke}%
\pgfsetdash{}{0pt}%
\pgfsys@defobject{currentmarker}{\pgfqpoint{-0.027778in}{-0.000000in}}{\pgfqpoint{0.027778in}{0.000000in}}{%
\pgfpathmoveto{\pgfqpoint{0.027778in}{-0.000000in}}%
\pgfpathlineto{\pgfqpoint{-0.027778in}{0.000000in}}%
\pgfusepath{stroke,fill}%
}%
\begin{pgfscope}%
\pgfsys@transformshift{0.646703in}{0.559404in}%
\pgfsys@useobject{currentmarker}{}%
\end{pgfscope}%
\begin{pgfscope}%
\pgfsys@transformshift{1.105742in}{0.560227in}%
\pgfsys@useobject{currentmarker}{}%
\end{pgfscope}%
\begin{pgfscope}%
\pgfsys@transformshift{1.564782in}{0.980851in}%
\pgfsys@useobject{currentmarker}{}%
\end{pgfscope}%
\begin{pgfscope}%
\pgfsys@transformshift{2.023821in}{0.564692in}%
\pgfsys@useobject{currentmarker}{}%
\end{pgfscope}%
\begin{pgfscope}%
\pgfsys@transformshift{2.482861in}{1.876166in}%
\pgfsys@useobject{currentmarker}{}%
\end{pgfscope}%
\end{pgfscope}%
\begin{pgfscope}%
\pgfsetrectcap%
\pgfsetmiterjoin%
\pgfsetlinewidth{0.803000pt}%
\definecolor{currentstroke}{rgb}{0.000000,0.000000,0.000000}%
\pgfsetstrokecolor{currentstroke}%
\pgfsetdash{}{0pt}%
\pgfpathmoveto{\pgfqpoint{0.352917in}{0.559404in}}%
\pgfpathlineto{\pgfqpoint{0.352917in}{1.942004in}}%
\pgfusepath{stroke}%
\end{pgfscope}%
\begin{pgfscope}%
\pgfsetrectcap%
\pgfsetmiterjoin%
\pgfsetlinewidth{0.803000pt}%
\definecolor{currentstroke}{rgb}{0.000000,0.000000,0.000000}%
\pgfsetstrokecolor{currentstroke}%
\pgfsetdash{}{0pt}%
\pgfpathmoveto{\pgfqpoint{0.352918in}{0.559404in}}%
\pgfpathlineto{\pgfqpoint{2.776646in}{0.559404in}}%
\pgfusepath{stroke}%
\end{pgfscope}%
\begin{pgfscope}%
\pgfsetbuttcap%
\pgfsetmiterjoin%
\definecolor{currentfill}{rgb}{1.000000,1.000000,1.000000}%
\pgfsetfillcolor{currentfill}%
\pgfsetlinewidth{0.000000pt}%
\definecolor{currentstroke}{rgb}{0.000000,0.000000,0.000000}%
\pgfsetstrokecolor{currentstroke}%
\pgfsetstrokeopacity{0.000000}%
\pgfsetdash{}{0pt}%
\pgfpathmoveto{\pgfqpoint{3.649189in}{0.559404in}}%
\pgfpathlineto{\pgfqpoint{6.072918in}{0.559404in}}%
\pgfpathlineto{\pgfqpoint{6.072918in}{1.942004in}}%
\pgfpathlineto{\pgfqpoint{3.649189in}{1.942004in}}%
\pgfpathlineto{\pgfqpoint{3.649189in}{0.559404in}}%
\pgfpathclose%
\pgfusepath{fill}%
\end{pgfscope}%
\begin{pgfscope}%
\pgfpathrectangle{\pgfqpoint{3.649189in}{0.559404in}}{\pgfqpoint{2.423729in}{1.382600in}}%
\pgfusepath{clip}%
\pgfsetbuttcap%
\pgfsetmiterjoin%
\definecolor{currentfill}{rgb}{0.121569,0.466667,0.705882}%
\pgfsetfillcolor{currentfill}%
\pgfsetlinewidth{0.000000pt}%
\definecolor{currentstroke}{rgb}{0.000000,0.000000,0.000000}%
\pgfsetstrokecolor{currentstroke}%
\pgfsetstrokeopacity{0.000000}%
\pgfsetdash{}{0pt}%
\pgfpathmoveto{\pgfqpoint{3.759358in}{0.559404in}}%
\pgfpathlineto{\pgfqpoint{4.126590in}{0.559404in}}%
\pgfpathlineto{\pgfqpoint{4.126590in}{0.893005in}}%
\pgfpathlineto{\pgfqpoint{3.759358in}{0.893005in}}%
\pgfpathlineto{\pgfqpoint{3.759358in}{0.559404in}}%
\pgfpathclose%
\pgfusepath{fill}%
\end{pgfscope}%
\begin{pgfscope}%
\pgfpathrectangle{\pgfqpoint{3.649189in}{0.559404in}}{\pgfqpoint{2.423729in}{1.382600in}}%
\pgfusepath{clip}%
\pgfsetbuttcap%
\pgfsetmiterjoin%
\definecolor{currentfill}{rgb}{0.121569,0.466667,0.705882}%
\pgfsetfillcolor{currentfill}%
\pgfsetlinewidth{0.000000pt}%
\definecolor{currentstroke}{rgb}{0.000000,0.000000,0.000000}%
\pgfsetstrokecolor{currentstroke}%
\pgfsetstrokeopacity{0.000000}%
\pgfsetdash{}{0pt}%
\pgfpathmoveto{\pgfqpoint{4.218398in}{0.559404in}}%
\pgfpathlineto{\pgfqpoint{4.585629in}{0.559404in}}%
\pgfpathlineto{\pgfqpoint{4.585629in}{0.893669in}}%
\pgfpathlineto{\pgfqpoint{4.218398in}{0.893669in}}%
\pgfpathlineto{\pgfqpoint{4.218398in}{0.559404in}}%
\pgfpathclose%
\pgfusepath{fill}%
\end{pgfscope}%
\begin{pgfscope}%
\pgfpathrectangle{\pgfqpoint{3.649189in}{0.559404in}}{\pgfqpoint{2.423729in}{1.382600in}}%
\pgfusepath{clip}%
\pgfsetbuttcap%
\pgfsetmiterjoin%
\definecolor{currentfill}{rgb}{0.121569,0.466667,0.705882}%
\pgfsetfillcolor{currentfill}%
\pgfsetlinewidth{0.000000pt}%
\definecolor{currentstroke}{rgb}{0.000000,0.000000,0.000000}%
\pgfsetstrokecolor{currentstroke}%
\pgfsetstrokeopacity{0.000000}%
\pgfsetdash{}{0pt}%
\pgfpathmoveto{\pgfqpoint{4.677437in}{0.559404in}}%
\pgfpathlineto{\pgfqpoint{5.044669in}{0.559404in}}%
\pgfpathlineto{\pgfqpoint{5.044669in}{1.121457in}}%
\pgfpathlineto{\pgfqpoint{4.677437in}{1.121457in}}%
\pgfpathlineto{\pgfqpoint{4.677437in}{0.559404in}}%
\pgfpathclose%
\pgfusepath{fill}%
\end{pgfscope}%
\begin{pgfscope}%
\pgfpathrectangle{\pgfqpoint{3.649189in}{0.559404in}}{\pgfqpoint{2.423729in}{1.382600in}}%
\pgfusepath{clip}%
\pgfsetbuttcap%
\pgfsetmiterjoin%
\definecolor{currentfill}{rgb}{0.121569,0.466667,0.705882}%
\pgfsetfillcolor{currentfill}%
\pgfsetlinewidth{0.000000pt}%
\definecolor{currentstroke}{rgb}{0.000000,0.000000,0.000000}%
\pgfsetstrokecolor{currentstroke}%
\pgfsetstrokeopacity{0.000000}%
\pgfsetdash{}{0pt}%
\pgfpathmoveto{\pgfqpoint{5.136477in}{0.559404in}}%
\pgfpathlineto{\pgfqpoint{5.503708in}{0.559404in}}%
\pgfpathlineto{\pgfqpoint{5.503708in}{0.897255in}}%
\pgfpathlineto{\pgfqpoint{5.136477in}{0.897255in}}%
\pgfpathlineto{\pgfqpoint{5.136477in}{0.559404in}}%
\pgfpathclose%
\pgfusepath{fill}%
\end{pgfscope}%
\begin{pgfscope}%
\pgfpathrectangle{\pgfqpoint{3.649189in}{0.559404in}}{\pgfqpoint{2.423729in}{1.382600in}}%
\pgfusepath{clip}%
\pgfsetbuttcap%
\pgfsetmiterjoin%
\definecolor{currentfill}{rgb}{0.121569,0.466667,0.705882}%
\pgfsetfillcolor{currentfill}%
\pgfsetlinewidth{0.000000pt}%
\definecolor{currentstroke}{rgb}{0.000000,0.000000,0.000000}%
\pgfsetstrokecolor{currentstroke}%
\pgfsetstrokeopacity{0.000000}%
\pgfsetdash{}{0pt}%
\pgfpathmoveto{\pgfqpoint{5.595516in}{0.559404in}}%
\pgfpathlineto{\pgfqpoint{5.962748in}{0.559404in}}%
\pgfpathlineto{\pgfqpoint{5.962748in}{1.829295in}}%
\pgfpathlineto{\pgfqpoint{5.595516in}{1.829295in}}%
\pgfpathlineto{\pgfqpoint{5.595516in}{0.559404in}}%
\pgfpathclose%
\pgfusepath{fill}%
\end{pgfscope}%
\begin{pgfscope}%
\pgfsetbuttcap%
\pgfsetroundjoin%
\definecolor{currentfill}{rgb}{0.000000,0.000000,0.000000}%
\pgfsetfillcolor{currentfill}%
\pgfsetlinewidth{0.803000pt}%
\definecolor{currentstroke}{rgb}{0.000000,0.000000,0.000000}%
\pgfsetstrokecolor{currentstroke}%
\pgfsetdash{}{0pt}%
\pgfsys@defobject{currentmarker}{\pgfqpoint{0.000000in}{-0.048611in}}{\pgfqpoint{0.000000in}{0.000000in}}{%
\pgfpathmoveto{\pgfqpoint{0.000000in}{0.000000in}}%
\pgfpathlineto{\pgfqpoint{0.000000in}{-0.048611in}}%
\pgfusepath{stroke,fill}%
}%
\begin{pgfscope}%
\pgfsys@transformshift{3.942974in}{0.559404in}%
\pgfsys@useobject{currentmarker}{}%
\end{pgfscope}%
\end{pgfscope}%
\begin{pgfscope}%
\definecolor{textcolor}{rgb}{0.000000,0.000000,0.000000}%
\pgfsetstrokecolor{textcolor}%
\pgfsetfillcolor{textcolor}%
\pgftext[x=3.592940in, y=0.199504in, left, base,rotate=30.000000]{\color{textcolor}{\sffamily\fontsize{8.000000}{9.600000}\selectfont\catcode`\^=\active\def^{\ifmmode\sp\else\^{}\fi}\catcode`\%=\active\def
\end{pgfscope}%
\begin{pgfscope}%
\pgfsetbuttcap%
\pgfsetroundjoin%
\definecolor{currentfill}{rgb}{0.000000,0.000000,0.000000}%
\pgfsetfillcolor{currentfill}%
\pgfsetlinewidth{0.803000pt}%
\definecolor{currentstroke}{rgb}{0.000000,0.000000,0.000000}%
\pgfsetstrokecolor{currentstroke}%
\pgfsetdash{}{0pt}%
\pgfsys@defobject{currentmarker}{\pgfqpoint{0.000000in}{-0.048611in}}{\pgfqpoint{0.000000in}{0.000000in}}{%
\pgfpathmoveto{\pgfqpoint{0.000000in}{0.000000in}}%
\pgfpathlineto{\pgfqpoint{0.000000in}{-0.048611in}}%
\pgfusepath{stroke,fill}%
}%
\begin{pgfscope}%
\pgfsys@transformshift{4.402014in}{0.559404in}%
\pgfsys@useobject{currentmarker}{}%
\end{pgfscope}%
\end{pgfscope}%
\begin{pgfscope}%
\definecolor{textcolor}{rgb}{0.000000,0.000000,0.000000}%
\pgfsetstrokecolor{textcolor}%
\pgfsetfillcolor{textcolor}%
\pgftext[x=3.943825in, y=0.137061in, left, base,rotate=30.000000]{\color{textcolor}{\sffamily\fontsize{8.000000}{9.600000}\selectfont\catcode`\^=\active\def^{\ifmmode\sp\else\^{}\fi}\catcode`\%=\active\def
\end{pgfscope}%
\begin{pgfscope}%
\pgfsetbuttcap%
\pgfsetroundjoin%
\definecolor{currentfill}{rgb}{0.000000,0.000000,0.000000}%
\pgfsetfillcolor{currentfill}%
\pgfsetlinewidth{0.803000pt}%
\definecolor{currentstroke}{rgb}{0.000000,0.000000,0.000000}%
\pgfsetstrokecolor{currentstroke}%
\pgfsetdash{}{0pt}%
\pgfsys@defobject{currentmarker}{\pgfqpoint{0.000000in}{-0.048611in}}{\pgfqpoint{0.000000in}{0.000000in}}{%
\pgfpathmoveto{\pgfqpoint{0.000000in}{0.000000in}}%
\pgfpathlineto{\pgfqpoint{0.000000in}{-0.048611in}}%
\pgfusepath{stroke,fill}%
}%
\begin{pgfscope}%
\pgfsys@transformshift{4.861053in}{0.559404in}%
\pgfsys@useobject{currentmarker}{}%
\end{pgfscope}%
\end{pgfscope}%
\begin{pgfscope}%
\definecolor{textcolor}{rgb}{0.000000,0.000000,0.000000}%
\pgfsetstrokecolor{textcolor}%
\pgfsetfillcolor{textcolor}%
\pgftext[x=4.627560in, y=0.266788in, left, base,rotate=30.000000]{\color{textcolor}{\sffamily\fontsize{8.000000}{9.600000}\selectfont\catcode`\^=\active\def^{\ifmmode\sp\else\^{}\fi}\catcode`\%=\active\def
\end{pgfscope}%
\begin{pgfscope}%
\pgfsetbuttcap%
\pgfsetroundjoin%
\definecolor{currentfill}{rgb}{0.000000,0.000000,0.000000}%
\pgfsetfillcolor{currentfill}%
\pgfsetlinewidth{0.803000pt}%
\definecolor{currentstroke}{rgb}{0.000000,0.000000,0.000000}%
\pgfsetstrokecolor{currentstroke}%
\pgfsetdash{}{0pt}%
\pgfsys@defobject{currentmarker}{\pgfqpoint{0.000000in}{-0.048611in}}{\pgfqpoint{0.000000in}{0.000000in}}{%
\pgfpathmoveto{\pgfqpoint{0.000000in}{0.000000in}}%
\pgfpathlineto{\pgfqpoint{0.000000in}{-0.048611in}}%
\pgfusepath{stroke,fill}%
}%
\begin{pgfscope}%
\pgfsys@transformshift{5.320093in}{0.559404in}%
\pgfsys@useobject{currentmarker}{}%
\end{pgfscope}%
\end{pgfscope}%
\begin{pgfscope}%
\definecolor{textcolor}{rgb}{0.000000,0.000000,0.000000}%
\pgfsetstrokecolor{textcolor}%
\pgfsetfillcolor{textcolor}%
\pgftext[x=4.708877in, y=0.048710in, left, base,rotate=30.000000]{\color{textcolor}{\sffamily\fontsize{8.000000}{9.600000}\selectfont\catcode`\^=\active\def^{\ifmmode\sp\else\^{}\fi}\catcode`\%=\active\def
\end{pgfscope}%
\begin{pgfscope}%
\pgfsetbuttcap%
\pgfsetroundjoin%
\definecolor{currentfill}{rgb}{0.000000,0.000000,0.000000}%
\pgfsetfillcolor{currentfill}%
\pgfsetlinewidth{0.803000pt}%
\definecolor{currentstroke}{rgb}{0.000000,0.000000,0.000000}%
\pgfsetstrokecolor{currentstroke}%
\pgfsetdash{}{0pt}%
\pgfsys@defobject{currentmarker}{\pgfqpoint{0.000000in}{-0.048611in}}{\pgfqpoint{0.000000in}{0.000000in}}{%
\pgfpathmoveto{\pgfqpoint{0.000000in}{0.000000in}}%
\pgfpathlineto{\pgfqpoint{0.000000in}{-0.048611in}}%
\pgfusepath{stroke,fill}%
}%
\begin{pgfscope}%
\pgfsys@transformshift{5.779132in}{0.559404in}%
\pgfsys@useobject{currentmarker}{}%
\end{pgfscope}%
\end{pgfscope}%
\begin{pgfscope}%
\definecolor{textcolor}{rgb}{0.000000,0.000000,0.000000}%
\pgfsetstrokecolor{textcolor}%
\pgfsetfillcolor{textcolor}%
\pgftext[x=5.446406in, y=0.209496in, left, base,rotate=30.000000]{\color{textcolor}{\sffamily\fontsize{8.000000}{9.600000}\selectfont\catcode`\^=\active\def^{\ifmmode\sp\else\^{}\fi}\catcode`\%=\active\def
\end{pgfscope}%
\begin{pgfscope}%
\pgfpathrectangle{\pgfqpoint{3.649189in}{0.559404in}}{\pgfqpoint{2.423729in}{1.382600in}}%
\pgfusepath{clip}%
\pgfsetrectcap%
\pgfsetroundjoin%
\pgfsetlinewidth{0.401500pt}%
\definecolor{currentstroke}{rgb}{0.690196,0.690196,0.690196}%
\pgfsetstrokecolor{currentstroke}%
\pgfsetstrokeopacity{0.350000}%
\pgfsetdash{}{0pt}%
\pgfpathmoveto{\pgfqpoint{3.649189in}{0.559404in}}%
\pgfpathlineto{\pgfqpoint{6.072918in}{0.559404in}}%
\pgfusepath{stroke}%
\end{pgfscope}%
\begin{pgfscope}%
\pgfsetbuttcap%
\pgfsetroundjoin%
\definecolor{currentfill}{rgb}{0.000000,0.000000,0.000000}%
\pgfsetfillcolor{currentfill}%
\pgfsetlinewidth{0.803000pt}%
\definecolor{currentstroke}{rgb}{0.000000,0.000000,0.000000}%
\pgfsetstrokecolor{currentstroke}%
\pgfsetdash{}{0pt}%
\pgfsys@defobject{currentmarker}{\pgfqpoint{-0.048611in}{0.000000in}}{\pgfqpoint{-0.000000in}{0.000000in}}{%
\pgfpathmoveto{\pgfqpoint{-0.000000in}{0.000000in}}%
\pgfpathlineto{\pgfqpoint{-0.048611in}{0.000000in}}%
\pgfusepath{stroke,fill}%
}%
\begin{pgfscope}%
\pgfsys@transformshift{3.649189in}{0.559404in}%
\pgfsys@useobject{currentmarker}{}%
\end{pgfscope}%
\end{pgfscope}%
\begin{pgfscope}%
\definecolor{textcolor}{rgb}{0.000000,0.000000,0.000000}%
\pgfsetstrokecolor{textcolor}%
\pgfsetfillcolor{textcolor}%
\pgftext[x=3.492938in, y=0.520824in, left, base]{\color{textcolor}{\sffamily\fontsize{8.000000}{9.600000}\selectfont\catcode`\^=\active\def^{\ifmmode\sp\else\^{}\fi}\catcode`\%=\active\def
\end{pgfscope}%
\begin{pgfscope}%
\pgfpathrectangle{\pgfqpoint{3.649189in}{0.559404in}}{\pgfqpoint{2.423729in}{1.382600in}}%
\pgfusepath{clip}%
\pgfsetrectcap%
\pgfsetroundjoin%
\pgfsetlinewidth{0.401500pt}%
\definecolor{currentstroke}{rgb}{0.690196,0.690196,0.690196}%
\pgfsetstrokecolor{currentstroke}%
\pgfsetstrokeopacity{0.350000}%
\pgfsetdash{}{0pt}%
\pgfpathmoveto{\pgfqpoint{3.649189in}{0.818321in}}%
\pgfpathlineto{\pgfqpoint{6.072918in}{0.818321in}}%
\pgfusepath{stroke}%
\end{pgfscope}%
\begin{pgfscope}%
\pgfsetbuttcap%
\pgfsetroundjoin%
\definecolor{currentfill}{rgb}{0.000000,0.000000,0.000000}%
\pgfsetfillcolor{currentfill}%
\pgfsetlinewidth{0.803000pt}%
\definecolor{currentstroke}{rgb}{0.000000,0.000000,0.000000}%
\pgfsetstrokecolor{currentstroke}%
\pgfsetdash{}{0pt}%
\pgfsys@defobject{currentmarker}{\pgfqpoint{-0.048611in}{0.000000in}}{\pgfqpoint{-0.000000in}{0.000000in}}{%
\pgfpathmoveto{\pgfqpoint{-0.000000in}{0.000000in}}%
\pgfpathlineto{\pgfqpoint{-0.048611in}{0.000000in}}%
\pgfusepath{stroke,fill}%
}%
\begin{pgfscope}%
\pgfsys@transformshift{3.649189in}{0.818321in}%
\pgfsys@useobject{currentmarker}{}%
\end{pgfscope}%
\end{pgfscope}%
\begin{pgfscope}%
\definecolor{textcolor}{rgb}{0.000000,0.000000,0.000000}%
\pgfsetstrokecolor{textcolor}%
\pgfsetfillcolor{textcolor}%
\pgftext[x=3.492938in, y=0.779741in, left, base]{\color{textcolor}{\sffamily\fontsize{8.000000}{9.600000}\selectfont\catcode`\^=\active\def^{\ifmmode\sp\else\^{}\fi}\catcode`\%=\active\def
\end{pgfscope}%
\begin{pgfscope}%
\pgfpathrectangle{\pgfqpoint{3.649189in}{0.559404in}}{\pgfqpoint{2.423729in}{1.382600in}}%
\pgfusepath{clip}%
\pgfsetrectcap%
\pgfsetroundjoin%
\pgfsetlinewidth{0.401500pt}%
\definecolor{currentstroke}{rgb}{0.690196,0.690196,0.690196}%
\pgfsetstrokecolor{currentstroke}%
\pgfsetstrokeopacity{0.350000}%
\pgfsetdash{}{0pt}%
\pgfpathmoveto{\pgfqpoint{3.649189in}{1.077238in}}%
\pgfpathlineto{\pgfqpoint{6.072918in}{1.077238in}}%
\pgfusepath{stroke}%
\end{pgfscope}%
\begin{pgfscope}%
\pgfsetbuttcap%
\pgfsetroundjoin%
\definecolor{currentfill}{rgb}{0.000000,0.000000,0.000000}%
\pgfsetfillcolor{currentfill}%
\pgfsetlinewidth{0.803000pt}%
\definecolor{currentstroke}{rgb}{0.000000,0.000000,0.000000}%
\pgfsetstrokecolor{currentstroke}%
\pgfsetdash{}{0pt}%
\pgfsys@defobject{currentmarker}{\pgfqpoint{-0.048611in}{0.000000in}}{\pgfqpoint{-0.000000in}{0.000000in}}{%
\pgfpathmoveto{\pgfqpoint{-0.000000in}{0.000000in}}%
\pgfpathlineto{\pgfqpoint{-0.048611in}{0.000000in}}%
\pgfusepath{stroke,fill}%
}%
\begin{pgfscope}%
\pgfsys@transformshift{3.649189in}{1.077238in}%
\pgfsys@useobject{currentmarker}{}%
\end{pgfscope}%
\end{pgfscope}%
\begin{pgfscope}%
\definecolor{textcolor}{rgb}{0.000000,0.000000,0.000000}%
\pgfsetstrokecolor{textcolor}%
\pgfsetfillcolor{textcolor}%
\pgftext[x=3.492938in, y=1.038658in, left, base]{\color{textcolor}{\sffamily\fontsize{8.000000}{9.600000}\selectfont\catcode`\^=\active\def^{\ifmmode\sp\else\^{}\fi}\catcode`\%=\active\def
\end{pgfscope}%
\begin{pgfscope}%
\pgfpathrectangle{\pgfqpoint{3.649189in}{0.559404in}}{\pgfqpoint{2.423729in}{1.382600in}}%
\pgfusepath{clip}%
\pgfsetrectcap%
\pgfsetroundjoin%
\pgfsetlinewidth{0.401500pt}%
\definecolor{currentstroke}{rgb}{0.690196,0.690196,0.690196}%
\pgfsetstrokecolor{currentstroke}%
\pgfsetstrokeopacity{0.350000}%
\pgfsetdash{}{0pt}%
\pgfpathmoveto{\pgfqpoint{3.649189in}{1.336155in}}%
\pgfpathlineto{\pgfqpoint{6.072918in}{1.336155in}}%
\pgfusepath{stroke}%
\end{pgfscope}%
\begin{pgfscope}%
\pgfsetbuttcap%
\pgfsetroundjoin%
\definecolor{currentfill}{rgb}{0.000000,0.000000,0.000000}%
\pgfsetfillcolor{currentfill}%
\pgfsetlinewidth{0.803000pt}%
\definecolor{currentstroke}{rgb}{0.000000,0.000000,0.000000}%
\pgfsetstrokecolor{currentstroke}%
\pgfsetdash{}{0pt}%
\pgfsys@defobject{currentmarker}{\pgfqpoint{-0.048611in}{0.000000in}}{\pgfqpoint{-0.000000in}{0.000000in}}{%
\pgfpathmoveto{\pgfqpoint{-0.000000in}{0.000000in}}%
\pgfpathlineto{\pgfqpoint{-0.048611in}{0.000000in}}%
\pgfusepath{stroke,fill}%
}%
\begin{pgfscope}%
\pgfsys@transformshift{3.649189in}{1.336155in}%
\pgfsys@useobject{currentmarker}{}%
\end{pgfscope}%
\end{pgfscope}%
\begin{pgfscope}%
\definecolor{textcolor}{rgb}{0.000000,0.000000,0.000000}%
\pgfsetstrokecolor{textcolor}%
\pgfsetfillcolor{textcolor}%
\pgftext[x=3.492938in, y=1.297575in, left, base]{\color{textcolor}{\sffamily\fontsize{8.000000}{9.600000}\selectfont\catcode`\^=\active\def^{\ifmmode\sp\else\^{}\fi}\catcode`\%=\active\def
\end{pgfscope}%
\begin{pgfscope}%
\pgfpathrectangle{\pgfqpoint{3.649189in}{0.559404in}}{\pgfqpoint{2.423729in}{1.382600in}}%
\pgfusepath{clip}%
\pgfsetrectcap%
\pgfsetroundjoin%
\pgfsetlinewidth{0.401500pt}%
\definecolor{currentstroke}{rgb}{0.690196,0.690196,0.690196}%
\pgfsetstrokecolor{currentstroke}%
\pgfsetstrokeopacity{0.350000}%
\pgfsetdash{}{0pt}%
\pgfpathmoveto{\pgfqpoint{3.649189in}{1.595072in}}%
\pgfpathlineto{\pgfqpoint{6.072918in}{1.595072in}}%
\pgfusepath{stroke}%
\end{pgfscope}%
\begin{pgfscope}%
\pgfsetbuttcap%
\pgfsetroundjoin%
\definecolor{currentfill}{rgb}{0.000000,0.000000,0.000000}%
\pgfsetfillcolor{currentfill}%
\pgfsetlinewidth{0.803000pt}%
\definecolor{currentstroke}{rgb}{0.000000,0.000000,0.000000}%
\pgfsetstrokecolor{currentstroke}%
\pgfsetdash{}{0pt}%
\pgfsys@defobject{currentmarker}{\pgfqpoint{-0.048611in}{0.000000in}}{\pgfqpoint{-0.000000in}{0.000000in}}{%
\pgfpathmoveto{\pgfqpoint{-0.000000in}{0.000000in}}%
\pgfpathlineto{\pgfqpoint{-0.048611in}{0.000000in}}%
\pgfusepath{stroke,fill}%
}%
\begin{pgfscope}%
\pgfsys@transformshift{3.649189in}{1.595072in}%
\pgfsys@useobject{currentmarker}{}%
\end{pgfscope}%
\end{pgfscope}%
\begin{pgfscope}%
\definecolor{textcolor}{rgb}{0.000000,0.000000,0.000000}%
\pgfsetstrokecolor{textcolor}%
\pgfsetfillcolor{textcolor}%
\pgftext[x=3.492938in, y=1.556492in, left, base]{\color{textcolor}{\sffamily\fontsize{8.000000}{9.600000}\selectfont\catcode`\^=\active\def^{\ifmmode\sp\else\^{}\fi}\catcode`\%=\active\def
\end{pgfscope}%
\begin{pgfscope}%
\pgfpathrectangle{\pgfqpoint{3.649189in}{0.559404in}}{\pgfqpoint{2.423729in}{1.382600in}}%
\pgfusepath{clip}%
\pgfsetrectcap%
\pgfsetroundjoin%
\pgfsetlinewidth{0.401500pt}%
\definecolor{currentstroke}{rgb}{0.690196,0.690196,0.690196}%
\pgfsetstrokecolor{currentstroke}%
\pgfsetstrokeopacity{0.350000}%
\pgfsetdash{}{0pt}%
\pgfpathmoveto{\pgfqpoint{3.649189in}{1.853989in}}%
\pgfpathlineto{\pgfqpoint{6.072918in}{1.853989in}}%
\pgfusepath{stroke}%
\end{pgfscope}%
\begin{pgfscope}%
\pgfsetbuttcap%
\pgfsetroundjoin%
\definecolor{currentfill}{rgb}{0.000000,0.000000,0.000000}%
\pgfsetfillcolor{currentfill}%
\pgfsetlinewidth{0.803000pt}%
\definecolor{currentstroke}{rgb}{0.000000,0.000000,0.000000}%
\pgfsetstrokecolor{currentstroke}%
\pgfsetdash{}{0pt}%
\pgfsys@defobject{currentmarker}{\pgfqpoint{-0.048611in}{0.000000in}}{\pgfqpoint{-0.000000in}{0.000000in}}{%
\pgfpathmoveto{\pgfqpoint{-0.000000in}{0.000000in}}%
\pgfpathlineto{\pgfqpoint{-0.048611in}{0.000000in}}%
\pgfusepath{stroke,fill}%
}%
\begin{pgfscope}%
\pgfsys@transformshift{3.649189in}{1.853989in}%
\pgfsys@useobject{currentmarker}{}%
\end{pgfscope}%
\end{pgfscope}%
\begin{pgfscope}%
\definecolor{textcolor}{rgb}{0.000000,0.000000,0.000000}%
\pgfsetstrokecolor{textcolor}%
\pgfsetfillcolor{textcolor}%
\pgftext[x=3.492938in, y=1.815408in, left, base]{\color{textcolor}{\sffamily\fontsize{8.000000}{9.600000}\selectfont\catcode`\^=\active\def^{\ifmmode\sp\else\^{}\fi}\catcode`\%=\active\def
\end{pgfscope}%
\begin{pgfscope}%
\definecolor{textcolor}{rgb}{0.000000,0.000000,0.000000}%
\pgfsetstrokecolor{textcolor}%
\pgfsetfillcolor{textcolor}%
\pgftext[x=3.437382in,y=1.250704in,,bottom,rotate=90.000000]{\color{textcolor}{\sffamily\fontsize{9.000000}{10.800000}\selectfont\catcode`\^=\active\def^{\ifmmode\sp\else\^{}\fi}\catcode`\%=\active\def
\end{pgfscope}%
\begin{pgfscope}%
\pgfpathrectangle{\pgfqpoint{3.649189in}{0.559404in}}{\pgfqpoint{2.423729in}{1.382600in}}%
\pgfusepath{clip}%
\pgfsetbuttcap%
\pgfsetroundjoin%
\pgfsetlinewidth{1.405250pt}%
\definecolor{currentstroke}{rgb}{0.000000,0.000000,0.000000}%
\pgfsetstrokecolor{currentstroke}%
\pgfsetdash{}{0pt}%
\pgfpathmoveto{\pgfqpoint{3.942974in}{0.877546in}}%
\pgfpathlineto{\pgfqpoint{3.942974in}{0.908463in}}%
\pgfusepath{stroke}%
\end{pgfscope}%
\begin{pgfscope}%
\pgfpathrectangle{\pgfqpoint{3.649189in}{0.559404in}}{\pgfqpoint{2.423729in}{1.382600in}}%
\pgfusepath{clip}%
\pgfsetbuttcap%
\pgfsetroundjoin%
\pgfsetlinewidth{1.405250pt}%
\definecolor{currentstroke}{rgb}{0.000000,0.000000,0.000000}%
\pgfsetstrokecolor{currentstroke}%
\pgfsetdash{}{0pt}%
\pgfpathmoveto{\pgfqpoint{4.402014in}{0.878124in}}%
\pgfpathlineto{\pgfqpoint{4.402014in}{0.909214in}}%
\pgfusepath{stroke}%
\end{pgfscope}%
\begin{pgfscope}%
\pgfpathrectangle{\pgfqpoint{3.649189in}{0.559404in}}{\pgfqpoint{2.423729in}{1.382600in}}%
\pgfusepath{clip}%
\pgfsetbuttcap%
\pgfsetroundjoin%
\pgfsetlinewidth{1.405250pt}%
\definecolor{currentstroke}{rgb}{0.000000,0.000000,0.000000}%
\pgfsetstrokecolor{currentstroke}%
\pgfsetdash{}{0pt}%
\pgfpathmoveto{\pgfqpoint{4.861053in}{1.094907in}}%
\pgfpathlineto{\pgfqpoint{4.861053in}{1.148007in}}%
\pgfusepath{stroke}%
\end{pgfscope}%
\begin{pgfscope}%
\pgfpathrectangle{\pgfqpoint{3.649189in}{0.559404in}}{\pgfqpoint{2.423729in}{1.382600in}}%
\pgfusepath{clip}%
\pgfsetbuttcap%
\pgfsetroundjoin%
\pgfsetlinewidth{1.405250pt}%
\definecolor{currentstroke}{rgb}{0.000000,0.000000,0.000000}%
\pgfsetstrokecolor{currentstroke}%
\pgfsetdash{}{0pt}%
\pgfpathmoveto{\pgfqpoint{5.320093in}{0.881338in}}%
\pgfpathlineto{\pgfqpoint{5.320093in}{0.913172in}}%
\pgfusepath{stroke}%
\end{pgfscope}%
\begin{pgfscope}%
\pgfpathrectangle{\pgfqpoint{3.649189in}{0.559404in}}{\pgfqpoint{2.423729in}{1.382600in}}%
\pgfusepath{clip}%
\pgfsetbuttcap%
\pgfsetroundjoin%
\pgfsetlinewidth{1.405250pt}%
\definecolor{currentstroke}{rgb}{0.000000,0.000000,0.000000}%
\pgfsetstrokecolor{currentstroke}%
\pgfsetdash{}{0pt}%
\pgfpathmoveto{\pgfqpoint{5.779132in}{1.782424in}}%
\pgfpathlineto{\pgfqpoint{5.779132in}{1.876166in}}%
\pgfusepath{stroke}%
\end{pgfscope}%
\begin{pgfscope}%
\pgfpathrectangle{\pgfqpoint{3.649189in}{0.559404in}}{\pgfqpoint{2.423729in}{1.382600in}}%
\pgfusepath{clip}%
\pgfsetbuttcap%
\pgfsetroundjoin%
\definecolor{currentfill}{rgb}{0.121569,0.466667,0.705882}%
\pgfsetfillcolor{currentfill}%
\pgfsetlinewidth{1.003750pt}%
\definecolor{currentstroke}{rgb}{0.000000,0.000000,0.000000}%
\pgfsetstrokecolor{currentstroke}%
\pgfsetdash{}{0pt}%
\pgfsys@defobject{currentmarker}{\pgfqpoint{-0.027778in}{-0.000000in}}{\pgfqpoint{0.027778in}{0.000000in}}{%
\pgfpathmoveto{\pgfqpoint{0.027778in}{-0.000000in}}%
\pgfpathlineto{\pgfqpoint{-0.027778in}{0.000000in}}%
\pgfusepath{stroke,fill}%
}%
\begin{pgfscope}%
\pgfsys@transformshift{3.942974in}{0.877546in}%
\pgfsys@useobject{currentmarker}{}%
\end{pgfscope}%
\begin{pgfscope}%
\pgfsys@transformshift{4.402014in}{0.878124in}%
\pgfsys@useobject{currentmarker}{}%
\end{pgfscope}%
\begin{pgfscope}%
\pgfsys@transformshift{4.861053in}{1.094907in}%
\pgfsys@useobject{currentmarker}{}%
\end{pgfscope}%
\begin{pgfscope}%
\pgfsys@transformshift{5.320093in}{0.881338in}%
\pgfsys@useobject{currentmarker}{}%
\end{pgfscope}%
\begin{pgfscope}%
\pgfsys@transformshift{5.779132in}{1.782424in}%
\pgfsys@useobject{currentmarker}{}%
\end{pgfscope}%
\end{pgfscope}%
\begin{pgfscope}%
\pgfpathrectangle{\pgfqpoint{3.649189in}{0.559404in}}{\pgfqpoint{2.423729in}{1.382600in}}%
\pgfusepath{clip}%
\pgfsetbuttcap%
\pgfsetroundjoin%
\definecolor{currentfill}{rgb}{0.121569,0.466667,0.705882}%
\pgfsetfillcolor{currentfill}%
\pgfsetlinewidth{1.003750pt}%
\definecolor{currentstroke}{rgb}{0.000000,0.000000,0.000000}%
\pgfsetstrokecolor{currentstroke}%
\pgfsetdash{}{0pt}%
\pgfsys@defobject{currentmarker}{\pgfqpoint{-0.027778in}{-0.000000in}}{\pgfqpoint{0.027778in}{0.000000in}}{%
\pgfpathmoveto{\pgfqpoint{0.027778in}{-0.000000in}}%
\pgfpathlineto{\pgfqpoint{-0.027778in}{0.000000in}}%
\pgfusepath{stroke,fill}%
}%
\begin{pgfscope}%
\pgfsys@transformshift{3.942974in}{0.908463in}%
\pgfsys@useobject{currentmarker}{}%
\end{pgfscope}%
\begin{pgfscope}%
\pgfsys@transformshift{4.402014in}{0.909214in}%
\pgfsys@useobject{currentmarker}{}%
\end{pgfscope}%
\begin{pgfscope}%
\pgfsys@transformshift{4.861053in}{1.148007in}%
\pgfsys@useobject{currentmarker}{}%
\end{pgfscope}%
\begin{pgfscope}%
\pgfsys@transformshift{5.320093in}{0.913172in}%
\pgfsys@useobject{currentmarker}{}%
\end{pgfscope}%
\begin{pgfscope}%
\pgfsys@transformshift{5.779132in}{1.876166in}%
\pgfsys@useobject{currentmarker}{}%
\end{pgfscope}%
\end{pgfscope}%
\begin{pgfscope}%
\pgfsetrectcap%
\pgfsetmiterjoin%
\pgfsetlinewidth{0.803000pt}%
\definecolor{currentstroke}{rgb}{0.000000,0.000000,0.000000}%
\pgfsetstrokecolor{currentstroke}%
\pgfsetdash{}{0pt}%
\pgfpathmoveto{\pgfqpoint{3.649189in}{0.559404in}}%
\pgfpathlineto{\pgfqpoint{3.649189in}{1.942004in}}%
\pgfusepath{stroke}%
\end{pgfscope}%
\begin{pgfscope}%
\pgfsetrectcap%
\pgfsetmiterjoin%
\pgfsetlinewidth{0.803000pt}%
\definecolor{currentstroke}{rgb}{0.000000,0.000000,0.000000}%
\pgfsetstrokecolor{currentstroke}%
\pgfsetdash{}{0pt}%
\pgfpathmoveto{\pgfqpoint{3.649189in}{0.559404in}}%
\pgfpathlineto{\pgfqpoint{6.072918in}{0.559404in}}%
\pgfusepath{stroke}%
\end{pgfscope}%
\begin{pgfscope}%
\definecolor{textcolor}{rgb}{0.000000,0.000000,0.000000}%
\pgfsetstrokecolor{textcolor}%
\pgfsetfillcolor{textcolor}%
\pgftext[x=0.352917in,y=1.997754in,left,bottom]{\color{textcolor}{\sffamily\fontsize{10.000000}{12.000000}\bfseries\selectfont\catcode`\^=\active\def^{\ifmmode\sp\else\^{}\fi}\catcode`\%=\active\def
\end{pgfscope}%
\begin{pgfscope}%
\definecolor{textcolor}{rgb}{0.000000,0.000000,0.000000}%
\pgfsetstrokecolor{textcolor}%
\pgfsetfillcolor{textcolor}%
\pgftext[x=3.649189in,y=1.997754in,left,bottom]{\color{textcolor}{\sffamily\fontsize{10.000000}{12.000000}\bfseries\selectfont\catcode`\^=\active\def^{\ifmmode\sp\else\^{}\fi}\catcode`\%=\active\def
\end{pgfscope}%
\end{pgfpicture}%
\makeatother%
\endgroup%

%% file: report/spd_synthetic/spd_intrinsic_iterations.pgf
\begingroup%
\makeatletter%
\begin{pgfpicture}%
\pgfpathrectangle{\pgfpointorigin}{\pgfqpoint{3.374918in}{2.014289in}}%
\pgfusepath{use as bounding box, clip}%
\begin{pgfscope}%
\pgfsetbuttcap%
\pgfsetmiterjoin%
\definecolor{currentfill}{rgb}{1.000000,1.000000,1.000000}%
\pgfsetfillcolor{currentfill}%
\pgfsetlinewidth{0.000000pt}%
\definecolor{currentstroke}{rgb}{1.000000,1.000000,1.000000}%
\pgfsetstrokecolor{currentstroke}%
\pgfsetdash{}{0pt}%
\pgfpathmoveto{\pgfqpoint{0.000000in}{0.000000in}}%
\pgfpathlineto{\pgfqpoint{3.374917in}{0.000000in}}%
\pgfpathlineto{\pgfqpoint{3.374917in}{2.014289in}}%
\pgfpathlineto{\pgfqpoint{0.000000in}{2.014289in}}%
\pgfpathlineto{\pgfqpoint{0.000000in}{0.000000in}}%
\pgfpathclose%
\pgfusepath{fill}%
\end{pgfscope}%
\begin{pgfscope}%
\pgfsetbuttcap%
\pgfsetmiterjoin%
\definecolor{currentfill}{rgb}{1.000000,1.000000,1.000000}%
\pgfsetfillcolor{currentfill}%
\pgfsetlinewidth{0.000000pt}%
\definecolor{currentstroke}{rgb}{0.000000,0.000000,0.000000}%
\pgfsetstrokecolor{currentstroke}%
\pgfsetstrokeopacity{0.000000}%
\pgfsetdash{}{0pt}%
\pgfpathmoveto{\pgfqpoint{0.352918in}{0.570689in}}%
\pgfpathlineto{\pgfqpoint{3.344918in}{0.570689in}}%
\pgfpathlineto{\pgfqpoint{3.344918in}{1.984289in}}%
\pgfpathlineto{\pgfqpoint{0.352918in}{1.984289in}}%
\pgfpathlineto{\pgfqpoint{0.352918in}{0.570689in}}%
\pgfpathclose%
\pgfusepath{fill}%
\end{pgfscope}%
\begin{pgfscope}%
\pgfpathrectangle{\pgfqpoint{0.352918in}{0.570689in}}{\pgfqpoint{2.992000in}{1.413600in}}%
\pgfusepath{clip}%
\pgfsetbuttcap%
\pgfsetmiterjoin%
\definecolor{currentfill}{rgb}{0.121569,0.466667,0.705882}%
\pgfsetfillcolor{currentfill}%
\pgfsetlinewidth{0.000000pt}%
\definecolor{currentstroke}{rgb}{0.000000,0.000000,0.000000}%
\pgfsetstrokecolor{currentstroke}%
\pgfsetstrokeopacity{0.000000}%
\pgfsetdash{}{0pt}%
\pgfpathmoveto{\pgfqpoint{0.488917in}{0.570689in}}%
\pgfpathlineto{\pgfqpoint{1.697806in}{0.570689in}}%
\pgfpathlineto{\pgfqpoint{1.697806in}{1.564131in}}%
\pgfpathlineto{\pgfqpoint{0.488917in}{1.564131in}}%
\pgfpathlineto{\pgfqpoint{0.488917in}{0.570689in}}%
\pgfpathclose%
\pgfusepath{fill}%
\end{pgfscope}%
\begin{pgfscope}%
\pgfpathrectangle{\pgfqpoint{0.352918in}{0.570689in}}{\pgfqpoint{2.992000in}{1.413600in}}%
\pgfusepath{clip}%
\pgfsetbuttcap%
\pgfsetmiterjoin%
\definecolor{currentfill}{rgb}{0.121569,0.466667,0.705882}%
\pgfsetfillcolor{currentfill}%
\pgfsetlinewidth{0.000000pt}%
\definecolor{currentstroke}{rgb}{0.000000,0.000000,0.000000}%
\pgfsetstrokecolor{currentstroke}%
\pgfsetstrokeopacity{0.000000}%
\pgfsetdash{}{0pt}%
\pgfpathmoveto{\pgfqpoint{2.000029in}{0.570689in}}%
\pgfpathlineto{\pgfqpoint{3.208917in}{0.570689in}}%
\pgfpathlineto{\pgfqpoint{3.208917in}{1.888383in}}%
\pgfpathlineto{\pgfqpoint{2.000029in}{1.888383in}}%
\pgfpathlineto{\pgfqpoint{2.000029in}{0.570689in}}%
\pgfpathclose%
\pgfusepath{fill}%
\end{pgfscope}%
\begin{pgfscope}%
\pgfsetbuttcap%
\pgfsetroundjoin%
\definecolor{currentfill}{rgb}{0.000000,0.000000,0.000000}%
\pgfsetfillcolor{currentfill}%
\pgfsetlinewidth{0.803000pt}%
\definecolor{currentstroke}{rgb}{0.000000,0.000000,0.000000}%
\pgfsetstrokecolor{currentstroke}%
\pgfsetdash{}{0pt}%
\pgfsys@defobject{currentmarker}{\pgfqpoint{0.000000in}{-0.048611in}}{\pgfqpoint{0.000000in}{0.000000in}}{%
\pgfpathmoveto{\pgfqpoint{0.000000in}{0.000000in}}%
\pgfpathlineto{\pgfqpoint{0.000000in}{-0.048611in}}%
\pgfusepath{stroke,fill}%
}%
\begin{pgfscope}%
\pgfsys@transformshift{1.093362in}{0.570689in}%
\pgfsys@useobject{currentmarker}{}%
\end{pgfscope}%
\end{pgfscope}%
\begin{pgfscope}%
\definecolor{textcolor}{rgb}{0.000000,0.000000,0.000000}%
\pgfsetstrokecolor{textcolor}%
\pgfsetfillcolor{textcolor}%
\pgftext[x=0.462601in, y=0.048710in, left, base,rotate=30.000000]{\color{textcolor}{\sffamily\fontsize{8.000000}{9.600000}\selectfont\catcode`\^=\active\def^{\ifmmode\sp\else\^{}\fi}\catcode`\%=\active\def
\end{pgfscope}%
\begin{pgfscope}%
\pgfsetbuttcap%
\pgfsetroundjoin%
\definecolor{currentfill}{rgb}{0.000000,0.000000,0.000000}%
\pgfsetfillcolor{currentfill}%
\pgfsetlinewidth{0.803000pt}%
\definecolor{currentstroke}{rgb}{0.000000,0.000000,0.000000}%
\pgfsetstrokecolor{currentstroke}%
\pgfsetdash{}{0pt}%
\pgfsys@defobject{currentmarker}{\pgfqpoint{0.000000in}{-0.048611in}}{\pgfqpoint{0.000000in}{0.000000in}}{%
\pgfpathmoveto{\pgfqpoint{0.000000in}{0.000000in}}%
\pgfpathlineto{\pgfqpoint{0.000000in}{-0.048611in}}%
\pgfusepath{stroke,fill}%
}%
\begin{pgfscope}%
\pgfsys@transformshift{2.604473in}{0.570689in}%
\pgfsys@useobject{currentmarker}{}%
\end{pgfscope}%
\end{pgfscope}%
\begin{pgfscope}%
\definecolor{textcolor}{rgb}{0.000000,0.000000,0.000000}%
\pgfsetstrokecolor{textcolor}%
\pgfsetfillcolor{textcolor}%
\pgftext[x=2.134591in, y=0.141594in, left, base,rotate=30.000000]{\color{textcolor}{\sffamily\fontsize{8.000000}{9.600000}\selectfont\catcode`\^=\active\def^{\ifmmode\sp\else\^{}\fi}\catcode`\%=\active\def
\end{pgfscope}%
\begin{pgfscope}%
\pgfpathrectangle{\pgfqpoint{0.352918in}{0.570689in}}{\pgfqpoint{2.992000in}{1.413600in}}%
\pgfusepath{clip}%
\pgfsetrectcap%
\pgfsetroundjoin%
\pgfsetlinewidth{0.401500pt}%
\definecolor{currentstroke}{rgb}{0.690196,0.690196,0.690196}%
\pgfsetstrokecolor{currentstroke}%
\pgfsetstrokeopacity{0.350000}%
\pgfsetdash{}{0pt}%
\pgfpathmoveto{\pgfqpoint{0.352918in}{0.570689in}}%
\pgfpathlineto{\pgfqpoint{3.344918in}{0.570689in}}%
\pgfusepath{stroke}%
\end{pgfscope}%
\begin{pgfscope}%
\pgfsetbuttcap%
\pgfsetroundjoin%
\definecolor{currentfill}{rgb}{0.000000,0.000000,0.000000}%
\pgfsetfillcolor{currentfill}%
\pgfsetlinewidth{0.803000pt}%
\definecolor{currentstroke}{rgb}{0.000000,0.000000,0.000000}%
\pgfsetstrokecolor{currentstroke}%
\pgfsetdash{}{0pt}%
\pgfsys@defobject{currentmarker}{\pgfqpoint{-0.048611in}{0.000000in}}{\pgfqpoint{-0.000000in}{0.000000in}}{%
\pgfpathmoveto{\pgfqpoint{-0.000000in}{0.000000in}}%
\pgfpathlineto{\pgfqpoint{-0.048611in}{0.000000in}}%
\pgfusepath{stroke,fill}%
}%
\begin{pgfscope}%
\pgfsys@transformshift{0.352918in}{0.570689in}%
\pgfsys@useobject{currentmarker}{}%
\end{pgfscope}%
\end{pgfscope}%
\begin{pgfscope}%
\definecolor{textcolor}{rgb}{0.000000,0.000000,0.000000}%
\pgfsetstrokecolor{textcolor}%
\pgfsetfillcolor{textcolor}%
\pgftext[x=0.196667in, y=0.532109in, left, base]{\color{textcolor}{\sffamily\fontsize{8.000000}{9.600000}\selectfont\catcode`\^=\active\def^{\ifmmode\sp\else\^{}\fi}\catcode`\%=\active\def
\end{pgfscope}%
\begin{pgfscope}%
\pgfpathrectangle{\pgfqpoint{0.352918in}{0.570689in}}{\pgfqpoint{2.992000in}{1.413600in}}%
\pgfusepath{clip}%
\pgfsetrectcap%
\pgfsetroundjoin%
\pgfsetlinewidth{0.401500pt}%
\definecolor{currentstroke}{rgb}{0.690196,0.690196,0.690196}%
\pgfsetstrokecolor{currentstroke}%
\pgfsetstrokeopacity{0.350000}%
\pgfsetdash{}{0pt}%
\pgfpathmoveto{\pgfqpoint{0.352918in}{0.894940in}}%
\pgfpathlineto{\pgfqpoint{3.344918in}{0.894940in}}%
\pgfusepath{stroke}%
\end{pgfscope}%
\begin{pgfscope}%
\pgfsetbuttcap%
\pgfsetroundjoin%
\definecolor{currentfill}{rgb}{0.000000,0.000000,0.000000}%
\pgfsetfillcolor{currentfill}%
\pgfsetlinewidth{0.803000pt}%
\definecolor{currentstroke}{rgb}{0.000000,0.000000,0.000000}%
\pgfsetstrokecolor{currentstroke}%
\pgfsetdash{}{0pt}%
\pgfsys@defobject{currentmarker}{\pgfqpoint{-0.048611in}{0.000000in}}{\pgfqpoint{-0.000000in}{0.000000in}}{%
\pgfpathmoveto{\pgfqpoint{-0.000000in}{0.000000in}}%
\pgfpathlineto{\pgfqpoint{-0.048611in}{0.000000in}}%
\pgfusepath{stroke,fill}%
}%
\begin{pgfscope}%
\pgfsys@transformshift{0.352918in}{0.894940in}%
\pgfsys@useobject{currentmarker}{}%
\end{pgfscope}%
\end{pgfscope}%
\begin{pgfscope}%
\definecolor{textcolor}{rgb}{0.000000,0.000000,0.000000}%
\pgfsetstrokecolor{textcolor}%
\pgfsetfillcolor{textcolor}%
\pgftext[x=0.196667in, y=0.856360in, left, base]{\color{textcolor}{\sffamily\fontsize{8.000000}{9.600000}\selectfont\catcode`\^=\active\def^{\ifmmode\sp\else\^{}\fi}\catcode`\%=\active\def
\end{pgfscope}%
\begin{pgfscope}%
\pgfpathrectangle{\pgfqpoint{0.352918in}{0.570689in}}{\pgfqpoint{2.992000in}{1.413600in}}%
\pgfusepath{clip}%
\pgfsetrectcap%
\pgfsetroundjoin%
\pgfsetlinewidth{0.401500pt}%
\definecolor{currentstroke}{rgb}{0.690196,0.690196,0.690196}%
\pgfsetstrokecolor{currentstroke}%
\pgfsetstrokeopacity{0.350000}%
\pgfsetdash{}{0pt}%
\pgfpathmoveto{\pgfqpoint{0.352918in}{1.219192in}}%
\pgfpathlineto{\pgfqpoint{3.344918in}{1.219192in}}%
\pgfusepath{stroke}%
\end{pgfscope}%
\begin{pgfscope}%
\pgfsetbuttcap%
\pgfsetroundjoin%
\definecolor{currentfill}{rgb}{0.000000,0.000000,0.000000}%
\pgfsetfillcolor{currentfill}%
\pgfsetlinewidth{0.803000pt}%
\definecolor{currentstroke}{rgb}{0.000000,0.000000,0.000000}%
\pgfsetstrokecolor{currentstroke}%
\pgfsetdash{}{0pt}%
\pgfsys@defobject{currentmarker}{\pgfqpoint{-0.048611in}{0.000000in}}{\pgfqpoint{-0.000000in}{0.000000in}}{%
\pgfpathmoveto{\pgfqpoint{-0.000000in}{0.000000in}}%
\pgfpathlineto{\pgfqpoint{-0.048611in}{0.000000in}}%
\pgfusepath{stroke,fill}%
}%
\begin{pgfscope}%
\pgfsys@transformshift{0.352918in}{1.219192in}%
\pgfsys@useobject{currentmarker}{}%
\end{pgfscope}%
\end{pgfscope}%
\begin{pgfscope}%
\definecolor{textcolor}{rgb}{0.000000,0.000000,0.000000}%
\pgfsetstrokecolor{textcolor}%
\pgfsetfillcolor{textcolor}%
\pgftext[x=0.196667in, y=1.180612in, left, base]{\color{textcolor}{\sffamily\fontsize{8.000000}{9.600000}\selectfont\catcode`\^=\active\def^{\ifmmode\sp\else\^{}\fi}\catcode`\%=\active\def
\end{pgfscope}%
\begin{pgfscope}%
\pgfpathrectangle{\pgfqpoint{0.352918in}{0.570689in}}{\pgfqpoint{2.992000in}{1.413600in}}%
\pgfusepath{clip}%
\pgfsetrectcap%
\pgfsetroundjoin%
\pgfsetlinewidth{0.401500pt}%
\definecolor{currentstroke}{rgb}{0.690196,0.690196,0.690196}%
\pgfsetstrokecolor{currentstroke}%
\pgfsetstrokeopacity{0.350000}%
\pgfsetdash{}{0pt}%
\pgfpathmoveto{\pgfqpoint{0.352918in}{1.543443in}}%
\pgfpathlineto{\pgfqpoint{3.344918in}{1.543443in}}%
\pgfusepath{stroke}%
\end{pgfscope}%
\begin{pgfscope}%
\pgfsetbuttcap%
\pgfsetroundjoin%
\definecolor{currentfill}{rgb}{0.000000,0.000000,0.000000}%
\pgfsetfillcolor{currentfill}%
\pgfsetlinewidth{0.803000pt}%
\definecolor{currentstroke}{rgb}{0.000000,0.000000,0.000000}%
\pgfsetstrokecolor{currentstroke}%
\pgfsetdash{}{0pt}%
\pgfsys@defobject{currentmarker}{\pgfqpoint{-0.048611in}{0.000000in}}{\pgfqpoint{-0.000000in}{0.000000in}}{%
\pgfpathmoveto{\pgfqpoint{-0.000000in}{0.000000in}}%
\pgfpathlineto{\pgfqpoint{-0.048611in}{0.000000in}}%
\pgfusepath{stroke,fill}%
}%
\begin{pgfscope}%
\pgfsys@transformshift{0.352918in}{1.543443in}%
\pgfsys@useobject{currentmarker}{}%
\end{pgfscope}%
\end{pgfscope}%
\begin{pgfscope}%
\definecolor{textcolor}{rgb}{0.000000,0.000000,0.000000}%
\pgfsetstrokecolor{textcolor}%
\pgfsetfillcolor{textcolor}%
\pgftext[x=0.196667in, y=1.504863in, left, base]{\color{textcolor}{\sffamily\fontsize{8.000000}{9.600000}\selectfont\catcode`\^=\active\def^{\ifmmode\sp\else\^{}\fi}\catcode`\%=\active\def
\end{pgfscope}%
\begin{pgfscope}%
\pgfpathrectangle{\pgfqpoint{0.352918in}{0.570689in}}{\pgfqpoint{2.992000in}{1.413600in}}%
\pgfusepath{clip}%
\pgfsetrectcap%
\pgfsetroundjoin%
\pgfsetlinewidth{0.401500pt}%
\definecolor{currentstroke}{rgb}{0.690196,0.690196,0.690196}%
\pgfsetstrokecolor{currentstroke}%
\pgfsetstrokeopacity{0.350000}%
\pgfsetdash{}{0pt}%
\pgfpathmoveto{\pgfqpoint{0.352918in}{1.867695in}}%
\pgfpathlineto{\pgfqpoint{3.344918in}{1.867695in}}%
\pgfusepath{stroke}%
\end{pgfscope}%
\begin{pgfscope}%
\pgfsetbuttcap%
\pgfsetroundjoin%
\definecolor{currentfill}{rgb}{0.000000,0.000000,0.000000}%
\pgfsetfillcolor{currentfill}%
\pgfsetlinewidth{0.803000pt}%
\definecolor{currentstroke}{rgb}{0.000000,0.000000,0.000000}%
\pgfsetstrokecolor{currentstroke}%
\pgfsetdash{}{0pt}%
\pgfsys@defobject{currentmarker}{\pgfqpoint{-0.048611in}{0.000000in}}{\pgfqpoint{-0.000000in}{0.000000in}}{%
\pgfpathmoveto{\pgfqpoint{-0.000000in}{0.000000in}}%
\pgfpathlineto{\pgfqpoint{-0.048611in}{0.000000in}}%
\pgfusepath{stroke,fill}%
}%
\begin{pgfscope}%
\pgfsys@transformshift{0.352918in}{1.867695in}%
\pgfsys@useobject{currentmarker}{}%
\end{pgfscope}%
\end{pgfscope}%
\begin{pgfscope}%
\definecolor{textcolor}{rgb}{0.000000,0.000000,0.000000}%
\pgfsetstrokecolor{textcolor}%
\pgfsetfillcolor{textcolor}%
\pgftext[x=0.196667in, y=1.829115in, left, base]{\color{textcolor}{\sffamily\fontsize{8.000000}{9.600000}\selectfont\catcode`\^=\active\def^{\ifmmode\sp\else\^{}\fi}\catcode`\%=\active\def
\end{pgfscope}%
\begin{pgfscope}%
\definecolor{textcolor}{rgb}{0.000000,0.000000,0.000000}%
\pgfsetstrokecolor{textcolor}%
\pgfsetfillcolor{textcolor}%
\pgftext[x=0.141111in,y=1.277489in,,bottom,rotate=90.000000]{\color{textcolor}{\sffamily\fontsize{9.000000}{10.800000}\selectfont\catcode`\^=\active\def^{\ifmmode\sp\else\^{}\fi}\catcode`\%=\active\def
\end{pgfscope}%
\begin{pgfscope}%
\pgfpathrectangle{\pgfqpoint{0.352918in}{0.570689in}}{\pgfqpoint{2.992000in}{1.413600in}}%
\pgfusepath{clip}%
\pgfsetbuttcap%
\pgfsetroundjoin%
\pgfsetlinewidth{1.405250pt}%
\definecolor{currentstroke}{rgb}{0.000000,0.000000,0.000000}%
\pgfsetstrokecolor{currentstroke}%
\pgfsetdash{}{0pt}%
\pgfpathmoveto{\pgfqpoint{1.093362in}{1.535540in}}%
\pgfpathlineto{\pgfqpoint{1.093362in}{1.592723in}}%
\pgfusepath{stroke}%
\end{pgfscope}%
\begin{pgfscope}%
\pgfpathrectangle{\pgfqpoint{0.352918in}{0.570689in}}{\pgfqpoint{2.992000in}{1.413600in}}%
\pgfusepath{clip}%
\pgfsetbuttcap%
\pgfsetroundjoin%
\pgfsetlinewidth{1.405250pt}%
\definecolor{currentstroke}{rgb}{0.000000,0.000000,0.000000}%
\pgfsetstrokecolor{currentstroke}%
\pgfsetdash{}{0pt}%
\pgfpathmoveto{\pgfqpoint{2.604473in}{1.859791in}}%
\pgfpathlineto{\pgfqpoint{2.604473in}{1.916975in}}%
\pgfusepath{stroke}%
\end{pgfscope}%
\begin{pgfscope}%
\pgfpathrectangle{\pgfqpoint{0.352918in}{0.570689in}}{\pgfqpoint{2.992000in}{1.413600in}}%
\pgfusepath{clip}%
\pgfsetbuttcap%
\pgfsetroundjoin%
\definecolor{currentfill}{rgb}{0.121569,0.466667,0.705882}%
\pgfsetfillcolor{currentfill}%
\pgfsetlinewidth{1.003750pt}%
\definecolor{currentstroke}{rgb}{0.000000,0.000000,0.000000}%
\pgfsetstrokecolor{currentstroke}%
\pgfsetdash{}{0pt}%
\pgfsys@defobject{currentmarker}{\pgfqpoint{-0.027778in}{-0.000000in}}{\pgfqpoint{0.027778in}{0.000000in}}{%
\pgfpathmoveto{\pgfqpoint{0.027778in}{-0.000000in}}%
\pgfpathlineto{\pgfqpoint{-0.027778in}{0.000000in}}%
\pgfusepath{stroke,fill}%
}%
\begin{pgfscope}%
\pgfsys@transformshift{1.093362in}{1.535540in}%
\pgfsys@useobject{currentmarker}{}%
\end{pgfscope}%
\begin{pgfscope}%
\pgfsys@transformshift{2.604473in}{1.859791in}%
\pgfsys@useobject{currentmarker}{}%
\end{pgfscope}%
\end{pgfscope}%
\begin{pgfscope}%
\pgfpathrectangle{\pgfqpoint{0.352918in}{0.570689in}}{\pgfqpoint{2.992000in}{1.413600in}}%
\pgfusepath{clip}%
\pgfsetbuttcap%
\pgfsetroundjoin%
\definecolor{currentfill}{rgb}{0.121569,0.466667,0.705882}%
\pgfsetfillcolor{currentfill}%
\pgfsetlinewidth{1.003750pt}%
\definecolor{currentstroke}{rgb}{0.000000,0.000000,0.000000}%
\pgfsetstrokecolor{currentstroke}%
\pgfsetdash{}{0pt}%
\pgfsys@defobject{currentmarker}{\pgfqpoint{-0.027778in}{-0.000000in}}{\pgfqpoint{0.027778in}{0.000000in}}{%
\pgfpathmoveto{\pgfqpoint{0.027778in}{-0.000000in}}%
\pgfpathlineto{\pgfqpoint{-0.027778in}{0.000000in}}%
\pgfusepath{stroke,fill}%
}%
\begin{pgfscope}%
\pgfsys@transformshift{1.093362in}{1.592723in}%
\pgfsys@useobject{currentmarker}{}%
\end{pgfscope}%
\begin{pgfscope}%
\pgfsys@transformshift{2.604473in}{1.916975in}%
\pgfsys@useobject{currentmarker}{}%
\end{pgfscope}%
\end{pgfscope}%
\begin{pgfscope}%
\pgfsetrectcap%
\pgfsetmiterjoin%
\pgfsetlinewidth{0.803000pt}%
\definecolor{currentstroke}{rgb}{0.000000,0.000000,0.000000}%
\pgfsetstrokecolor{currentstroke}%
\pgfsetdash{}{0pt}%
\pgfpathmoveto{\pgfqpoint{0.352918in}{0.570689in}}%
\pgfpathlineto{\pgfqpoint{0.352918in}{1.984289in}}%
\pgfusepath{stroke}%
\end{pgfscope}%
\begin{pgfscope}%
\pgfsetrectcap%
\pgfsetmiterjoin%
\pgfsetlinewidth{0.803000pt}%
\definecolor{currentstroke}{rgb}{0.000000,0.000000,0.000000}%
\pgfsetstrokecolor{currentstroke}%
\pgfsetdash{}{0pt}%
\pgfpathmoveto{\pgfqpoint{0.352917in}{0.570689in}}%
\pgfpathlineto{\pgfqpoint{3.344918in}{0.570689in}}%
\pgfusepath{stroke}%
\end{pgfscope}%
\end{pgfpicture}%
\makeatother%
\endgroup%

%% file: report/eeg_spd/eeg_downstream_metrics.pgf
\begingroup%
\makeatletter%
\begin{pgfpicture}%
\pgfpathrectangle{\pgfpointorigin}{\pgfqpoint{6.194740in}{2.166643in}}%
\pgfusepath{use as bounding box, clip}%
\begin{pgfscope}%
\pgfsetbuttcap%
\pgfsetmiterjoin%
\definecolor{currentfill}{rgb}{1.000000,1.000000,1.000000}%
\pgfsetfillcolor{currentfill}%
\pgfsetlinewidth{0.000000pt}%
\definecolor{currentstroke}{rgb}{1.000000,1.000000,1.000000}%
\pgfsetstrokecolor{currentstroke}%
\pgfsetdash{}{0pt}%
\pgfpathmoveto{\pgfqpoint{0.000000in}{0.000000in}}%
\pgfpathlineto{\pgfqpoint{6.194740in}{0.000000in}}%
\pgfpathlineto{\pgfqpoint{6.194740in}{2.166643in}}%
\pgfpathlineto{\pgfqpoint{0.000000in}{2.166643in}}%
\pgfpathlineto{\pgfqpoint{0.000000in}{0.000000in}}%
\pgfpathclose%
\pgfusepath{fill}%
\end{pgfscope}%
\begin{pgfscope}%
\pgfsetbuttcap%
\pgfsetmiterjoin%
\definecolor{currentfill}{rgb}{1.000000,1.000000,1.000000}%
\pgfsetfillcolor{currentfill}%
\pgfsetlinewidth{0.000000pt}%
\definecolor{currentstroke}{rgb}{0.000000,0.000000,0.000000}%
\pgfsetstrokecolor{currentstroke}%
\pgfsetstrokeopacity{0.000000}%
\pgfsetdash{}{0pt}%
\pgfpathmoveto{\pgfqpoint{0.444740in}{0.559404in}}%
\pgfpathlineto{\pgfqpoint{2.868469in}{0.559404in}}%
\pgfpathlineto{\pgfqpoint{2.868469in}{1.942004in}}%
\pgfpathlineto{\pgfqpoint{0.444740in}{1.942004in}}%
\pgfpathlineto{\pgfqpoint{0.444740in}{0.559404in}}%
\pgfpathclose%
\pgfusepath{fill}%
\end{pgfscope}%
\begin{pgfscope}%
\pgfpathrectangle{\pgfqpoint{0.444740in}{0.559404in}}{\pgfqpoint{2.423729in}{1.382600in}}%
\pgfusepath{clip}%
\pgfsetbuttcap%
\pgfsetmiterjoin%
\definecolor{currentfill}{rgb}{0.121569,0.466667,0.705882}%
\pgfsetfillcolor{currentfill}%
\pgfsetlinewidth{0.000000pt}%
\definecolor{currentstroke}{rgb}{0.000000,0.000000,0.000000}%
\pgfsetstrokecolor{currentstroke}%
\pgfsetstrokeopacity{0.000000}%
\pgfsetdash{}{0pt}%
\pgfpathmoveto{\pgfqpoint{0.554909in}{0.559404in}}%
\pgfpathlineto{\pgfqpoint{0.922141in}{0.559404in}}%
\pgfpathlineto{\pgfqpoint{0.922141in}{1.795514in}}%
\pgfpathlineto{\pgfqpoint{0.554909in}{1.795514in}}%
\pgfpathlineto{\pgfqpoint{0.554909in}{0.559404in}}%
\pgfpathclose%
\pgfusepath{fill}%
\end{pgfscope}%
\begin{pgfscope}%
\pgfpathrectangle{\pgfqpoint{0.444740in}{0.559404in}}{\pgfqpoint{2.423729in}{1.382600in}}%
\pgfusepath{clip}%
\pgfsetbuttcap%
\pgfsetmiterjoin%
\definecolor{currentfill}{rgb}{0.121569,0.466667,0.705882}%
\pgfsetfillcolor{currentfill}%
\pgfsetlinewidth{0.000000pt}%
\definecolor{currentstroke}{rgb}{0.000000,0.000000,0.000000}%
\pgfsetstrokecolor{currentstroke}%
\pgfsetstrokeopacity{0.000000}%
\pgfsetdash{}{0pt}%
\pgfpathmoveto{\pgfqpoint{1.013949in}{0.559404in}}%
\pgfpathlineto{\pgfqpoint{1.381181in}{0.559404in}}%
\pgfpathlineto{\pgfqpoint{1.381181in}{1.790331in}}%
\pgfpathlineto{\pgfqpoint{1.013949in}{1.790331in}}%
\pgfpathlineto{\pgfqpoint{1.013949in}{0.559404in}}%
\pgfpathclose%
\pgfusepath{fill}%
\end{pgfscope}%
\begin{pgfscope}%
\pgfpathrectangle{\pgfqpoint{0.444740in}{0.559404in}}{\pgfqpoint{2.423729in}{1.382600in}}%
\pgfusepath{clip}%
\pgfsetbuttcap%
\pgfsetmiterjoin%
\definecolor{currentfill}{rgb}{0.121569,0.466667,0.705882}%
\pgfsetfillcolor{currentfill}%
\pgfsetlinewidth{0.000000pt}%
\definecolor{currentstroke}{rgb}{0.000000,0.000000,0.000000}%
\pgfsetstrokecolor{currentstroke}%
\pgfsetstrokeopacity{0.000000}%
\pgfsetdash{}{0pt}%
\pgfpathmoveto{\pgfqpoint{1.472988in}{0.559404in}}%
\pgfpathlineto{\pgfqpoint{1.840220in}{0.559404in}}%
\pgfpathlineto{\pgfqpoint{1.840220in}{1.767872in}}%
\pgfpathlineto{\pgfqpoint{1.472988in}{1.767872in}}%
\pgfpathlineto{\pgfqpoint{1.472988in}{0.559404in}}%
\pgfpathclose%
\pgfusepath{fill}%
\end{pgfscope}%
\begin{pgfscope}%
\pgfpathrectangle{\pgfqpoint{0.444740in}{0.559404in}}{\pgfqpoint{2.423729in}{1.382600in}}%
\pgfusepath{clip}%
\pgfsetbuttcap%
\pgfsetmiterjoin%
\definecolor{currentfill}{rgb}{0.121569,0.466667,0.705882}%
\pgfsetfillcolor{currentfill}%
\pgfsetlinewidth{0.000000pt}%
\definecolor{currentstroke}{rgb}{0.000000,0.000000,0.000000}%
\pgfsetstrokecolor{currentstroke}%
\pgfsetstrokeopacity{0.000000}%
\pgfsetdash{}{0pt}%
\pgfpathmoveto{\pgfqpoint{1.932028in}{0.559404in}}%
\pgfpathlineto{\pgfqpoint{2.299260in}{0.559404in}}%
\pgfpathlineto{\pgfqpoint{2.299260in}{1.801560in}}%
\pgfpathlineto{\pgfqpoint{1.932028in}{1.801560in}}%
\pgfpathlineto{\pgfqpoint{1.932028in}{0.559404in}}%
\pgfpathclose%
\pgfusepath{fill}%
\end{pgfscope}%
\begin{pgfscope}%
\pgfpathrectangle{\pgfqpoint{0.444740in}{0.559404in}}{\pgfqpoint{2.423729in}{1.382600in}}%
\pgfusepath{clip}%
\pgfsetbuttcap%
\pgfsetmiterjoin%
\definecolor{currentfill}{rgb}{0.121569,0.466667,0.705882}%
\pgfsetfillcolor{currentfill}%
\pgfsetlinewidth{0.000000pt}%
\definecolor{currentstroke}{rgb}{0.000000,0.000000,0.000000}%
\pgfsetstrokecolor{currentstroke}%
\pgfsetstrokeopacity{0.000000}%
\pgfsetdash{}{0pt}%
\pgfpathmoveto{\pgfqpoint{2.391068in}{0.559404in}}%
\pgfpathlineto{\pgfqpoint{2.758299in}{0.559404in}}%
\pgfpathlineto{\pgfqpoint{2.758299in}{1.767872in}}%
\pgfpathlineto{\pgfqpoint{2.391068in}{1.767872in}}%
\pgfpathlineto{\pgfqpoint{2.391068in}{0.559404in}}%
\pgfpathclose%
\pgfusepath{fill}%
\end{pgfscope}%
\begin{pgfscope}%
\pgfsetbuttcap%
\pgfsetroundjoin%
\definecolor{currentfill}{rgb}{0.000000,0.000000,0.000000}%
\pgfsetfillcolor{currentfill}%
\pgfsetlinewidth{0.803000pt}%
\definecolor{currentstroke}{rgb}{0.000000,0.000000,0.000000}%
\pgfsetstrokecolor{currentstroke}%
\pgfsetdash{}{0pt}%
\pgfsys@defobject{currentmarker}{\pgfqpoint{0.000000in}{-0.048611in}}{\pgfqpoint{0.000000in}{0.000000in}}{%
\pgfpathmoveto{\pgfqpoint{0.000000in}{0.000000in}}%
\pgfpathlineto{\pgfqpoint{0.000000in}{-0.048611in}}%
\pgfusepath{stroke,fill}%
}%
\begin{pgfscope}%
\pgfsys@transformshift{0.738525in}{0.559404in}%
\pgfsys@useobject{currentmarker}{}%
\end{pgfscope}%
\end{pgfscope}%
\begin{pgfscope}%
\definecolor{textcolor}{rgb}{0.000000,0.000000,0.000000}%
\pgfsetstrokecolor{textcolor}%
\pgfsetfillcolor{textcolor}%
\pgftext[x=0.388491in, y=0.199504in, left, base,rotate=30.000000]{\color{textcolor}{\sffamily\fontsize{8.000000}{9.600000}\selectfont\catcode`\^=\active\def^{\ifmmode\sp\else\^{}\fi}\catcode`\%=\active\def
\end{pgfscope}%
\begin{pgfscope}%
\pgfsetbuttcap%
\pgfsetroundjoin%
\definecolor{currentfill}{rgb}{0.000000,0.000000,0.000000}%
\pgfsetfillcolor{currentfill}%
\pgfsetlinewidth{0.803000pt}%
\definecolor{currentstroke}{rgb}{0.000000,0.000000,0.000000}%
\pgfsetstrokecolor{currentstroke}%
\pgfsetdash{}{0pt}%
\pgfsys@defobject{currentmarker}{\pgfqpoint{0.000000in}{-0.048611in}}{\pgfqpoint{0.000000in}{0.000000in}}{%
\pgfpathmoveto{\pgfqpoint{0.000000in}{0.000000in}}%
\pgfpathlineto{\pgfqpoint{0.000000in}{-0.048611in}}%
\pgfusepath{stroke,fill}%
}%
\begin{pgfscope}%
\pgfsys@transformshift{1.197565in}{0.559404in}%
\pgfsys@useobject{currentmarker}{}%
\end{pgfscope}%
\end{pgfscope}%
\begin{pgfscope}%
\definecolor{textcolor}{rgb}{0.000000,0.000000,0.000000}%
\pgfsetstrokecolor{textcolor}%
\pgfsetfillcolor{textcolor}%
\pgftext[x=0.739376in, y=0.137061in, left, base,rotate=30.000000]{\color{textcolor}{\sffamily\fontsize{8.000000}{9.600000}\selectfont\catcode`\^=\active\def^{\ifmmode\sp\else\^{}\fi}\catcode`\%=\active\def
\end{pgfscope}%
\begin{pgfscope}%
\pgfsetbuttcap%
\pgfsetroundjoin%
\definecolor{currentfill}{rgb}{0.000000,0.000000,0.000000}%
\pgfsetfillcolor{currentfill}%
\pgfsetlinewidth{0.803000pt}%
\definecolor{currentstroke}{rgb}{0.000000,0.000000,0.000000}%
\pgfsetstrokecolor{currentstroke}%
\pgfsetdash{}{0pt}%
\pgfsys@defobject{currentmarker}{\pgfqpoint{0.000000in}{-0.048611in}}{\pgfqpoint{0.000000in}{0.000000in}}{%
\pgfpathmoveto{\pgfqpoint{0.000000in}{0.000000in}}%
\pgfpathlineto{\pgfqpoint{0.000000in}{-0.048611in}}%
\pgfusepath{stroke,fill}%
}%
\begin{pgfscope}%
\pgfsys@transformshift{1.656604in}{0.559404in}%
\pgfsys@useobject{currentmarker}{}%
\end{pgfscope}%
\end{pgfscope}%
\begin{pgfscope}%
\definecolor{textcolor}{rgb}{0.000000,0.000000,0.000000}%
\pgfsetstrokecolor{textcolor}%
\pgfsetfillcolor{textcolor}%
\pgftext[x=1.423111in, y=0.266788in, left, base,rotate=30.000000]{\color{textcolor}{\sffamily\fontsize{8.000000}{9.600000}\selectfont\catcode`\^=\active\def^{\ifmmode\sp\else\^{}\fi}\catcode`\%=\active\def
\end{pgfscope}%
\begin{pgfscope}%
\pgfsetbuttcap%
\pgfsetroundjoin%
\definecolor{currentfill}{rgb}{0.000000,0.000000,0.000000}%
\pgfsetfillcolor{currentfill}%
\pgfsetlinewidth{0.803000pt}%
\definecolor{currentstroke}{rgb}{0.000000,0.000000,0.000000}%
\pgfsetstrokecolor{currentstroke}%
\pgfsetdash{}{0pt}%
\pgfsys@defobject{currentmarker}{\pgfqpoint{0.000000in}{-0.048611in}}{\pgfqpoint{0.000000in}{0.000000in}}{%
\pgfpathmoveto{\pgfqpoint{0.000000in}{0.000000in}}%
\pgfpathlineto{\pgfqpoint{0.000000in}{-0.048611in}}%
\pgfusepath{stroke,fill}%
}%
\begin{pgfscope}%
\pgfsys@transformshift{2.115644in}{0.559404in}%
\pgfsys@useobject{currentmarker}{}%
\end{pgfscope}%
\end{pgfscope}%
\begin{pgfscope}%
\definecolor{textcolor}{rgb}{0.000000,0.000000,0.000000}%
\pgfsetstrokecolor{textcolor}%
\pgfsetfillcolor{textcolor}%
\pgftext[x=1.504429in, y=0.048710in, left, base,rotate=30.000000]{\color{textcolor}{\sffamily\fontsize{8.000000}{9.600000}\selectfont\catcode`\^=\active\def^{\ifmmode\sp\else\^{}\fi}\catcode`\%=\active\def
\end{pgfscope}%
\begin{pgfscope}%
\pgfsetbuttcap%
\pgfsetroundjoin%
\definecolor{currentfill}{rgb}{0.000000,0.000000,0.000000}%
\pgfsetfillcolor{currentfill}%
\pgfsetlinewidth{0.803000pt}%
\definecolor{currentstroke}{rgb}{0.000000,0.000000,0.000000}%
\pgfsetstrokecolor{currentstroke}%
\pgfsetdash{}{0pt}%
\pgfsys@defobject{currentmarker}{\pgfqpoint{0.000000in}{-0.048611in}}{\pgfqpoint{0.000000in}{0.000000in}}{%
\pgfpathmoveto{\pgfqpoint{0.000000in}{0.000000in}}%
\pgfpathlineto{\pgfqpoint{0.000000in}{-0.048611in}}%
\pgfusepath{stroke,fill}%
}%
\begin{pgfscope}%
\pgfsys@transformshift{2.574683in}{0.559404in}%
\pgfsys@useobject{currentmarker}{}%
\end{pgfscope}%
\end{pgfscope}%
\begin{pgfscope}%
\definecolor{textcolor}{rgb}{0.000000,0.000000,0.000000}%
\pgfsetstrokecolor{textcolor}%
\pgfsetfillcolor{textcolor}%
\pgftext[x=2.241957in, y=0.209496in, left, base,rotate=30.000000]{\color{textcolor}{\sffamily\fontsize{8.000000}{9.600000}\selectfont\catcode`\^=\active\def^{\ifmmode\sp\else\^{}\fi}\catcode`\%=\active\def
\end{pgfscope}%
\begin{pgfscope}%
\pgfpathrectangle{\pgfqpoint{0.444740in}{0.559404in}}{\pgfqpoint{2.423729in}{1.382600in}}%
\pgfusepath{clip}%
\pgfsetrectcap%
\pgfsetroundjoin%
\pgfsetlinewidth{0.401500pt}%
\definecolor{currentstroke}{rgb}{0.690196,0.690196,0.690196}%
\pgfsetstrokecolor{currentstroke}%
\pgfsetstrokeopacity{0.350000}%
\pgfsetdash{}{0pt}%
\pgfpathmoveto{\pgfqpoint{0.444740in}{0.559404in}}%
\pgfpathlineto{\pgfqpoint{2.868469in}{0.559404in}}%
\pgfusepath{stroke}%
\end{pgfscope}%
\begin{pgfscope}%
\pgfsetbuttcap%
\pgfsetroundjoin%
\definecolor{currentfill}{rgb}{0.000000,0.000000,0.000000}%
\pgfsetfillcolor{currentfill}%
\pgfsetlinewidth{0.803000pt}%
\definecolor{currentstroke}{rgb}{0.000000,0.000000,0.000000}%
\pgfsetstrokecolor{currentstroke}%
\pgfsetdash{}{0pt}%
\pgfsys@defobject{currentmarker}{\pgfqpoint{-0.048611in}{0.000000in}}{\pgfqpoint{-0.000000in}{0.000000in}}{%
\pgfpathmoveto{\pgfqpoint{-0.000000in}{0.000000in}}%
\pgfpathlineto{\pgfqpoint{-0.048611in}{0.000000in}}%
\pgfusepath{stroke,fill}%
}%
\begin{pgfscope}%
\pgfsys@transformshift{0.444740in}{0.559404in}%
\pgfsys@useobject{currentmarker}{}%
\end{pgfscope}%
\end{pgfscope}%
\begin{pgfscope}%
\definecolor{textcolor}{rgb}{0.000000,0.000000,0.000000}%
\pgfsetstrokecolor{textcolor}%
\pgfsetfillcolor{textcolor}%
\pgftext[x=0.196667in, y=0.520824in, left, base]{\color{textcolor}{\sffamily\fontsize{8.000000}{9.600000}\selectfont\catcode`\^=\active\def^{\ifmmode\sp\else\^{}\fi}\catcode`\%=\active\def
\end{pgfscope}%
\begin{pgfscope}%
\pgfpathrectangle{\pgfqpoint{0.444740in}{0.559404in}}{\pgfqpoint{2.423729in}{1.382600in}}%
\pgfusepath{clip}%
\pgfsetrectcap%
\pgfsetroundjoin%
\pgfsetlinewidth{0.401500pt}%
\definecolor{currentstroke}{rgb}{0.690196,0.690196,0.690196}%
\pgfsetstrokecolor{currentstroke}%
\pgfsetstrokeopacity{0.350000}%
\pgfsetdash{}{0pt}%
\pgfpathmoveto{\pgfqpoint{0.444740in}{1.007202in}}%
\pgfpathlineto{\pgfqpoint{2.868469in}{1.007202in}}%
\pgfusepath{stroke}%
\end{pgfscope}%
\begin{pgfscope}%
\pgfsetbuttcap%
\pgfsetroundjoin%
\definecolor{currentfill}{rgb}{0.000000,0.000000,0.000000}%
\pgfsetfillcolor{currentfill}%
\pgfsetlinewidth{0.803000pt}%
\definecolor{currentstroke}{rgb}{0.000000,0.000000,0.000000}%
\pgfsetstrokecolor{currentstroke}%
\pgfsetdash{}{0pt}%
\pgfsys@defobject{currentmarker}{\pgfqpoint{-0.048611in}{0.000000in}}{\pgfqpoint{-0.000000in}{0.000000in}}{%
\pgfpathmoveto{\pgfqpoint{-0.000000in}{0.000000in}}%
\pgfpathlineto{\pgfqpoint{-0.048611in}{0.000000in}}%
\pgfusepath{stroke,fill}%
}%
\begin{pgfscope}%
\pgfsys@transformshift{0.444740in}{1.007202in}%
\pgfsys@useobject{currentmarker}{}%
\end{pgfscope}%
\end{pgfscope}%
\begin{pgfscope}%
\definecolor{textcolor}{rgb}{0.000000,0.000000,0.000000}%
\pgfsetstrokecolor{textcolor}%
\pgfsetfillcolor{textcolor}%
\pgftext[x=0.196667in, y=0.968622in, left, base]{\color{textcolor}{\sffamily\fontsize{8.000000}{9.600000}\selectfont\catcode`\^=\active\def^{\ifmmode\sp\else\^{}\fi}\catcode`\%=\active\def
\end{pgfscope}%
\begin{pgfscope}%
\pgfpathrectangle{\pgfqpoint{0.444740in}{0.559404in}}{\pgfqpoint{2.423729in}{1.382600in}}%
\pgfusepath{clip}%
\pgfsetrectcap%
\pgfsetroundjoin%
\pgfsetlinewidth{0.401500pt}%
\definecolor{currentstroke}{rgb}{0.690196,0.690196,0.690196}%
\pgfsetstrokecolor{currentstroke}%
\pgfsetstrokeopacity{0.350000}%
\pgfsetdash{}{0pt}%
\pgfpathmoveto{\pgfqpoint{0.444740in}{1.455001in}}%
\pgfpathlineto{\pgfqpoint{2.868469in}{1.455001in}}%
\pgfusepath{stroke}%
\end{pgfscope}%
\begin{pgfscope}%
\pgfsetbuttcap%
\pgfsetroundjoin%
\definecolor{currentfill}{rgb}{0.000000,0.000000,0.000000}%
\pgfsetfillcolor{currentfill}%
\pgfsetlinewidth{0.803000pt}%
\definecolor{currentstroke}{rgb}{0.000000,0.000000,0.000000}%
\pgfsetstrokecolor{currentstroke}%
\pgfsetdash{}{0pt}%
\pgfsys@defobject{currentmarker}{\pgfqpoint{-0.048611in}{0.000000in}}{\pgfqpoint{-0.000000in}{0.000000in}}{%
\pgfpathmoveto{\pgfqpoint{-0.000000in}{0.000000in}}%
\pgfpathlineto{\pgfqpoint{-0.048611in}{0.000000in}}%
\pgfusepath{stroke,fill}%
}%
\begin{pgfscope}%
\pgfsys@transformshift{0.444740in}{1.455001in}%
\pgfsys@useobject{currentmarker}{}%
\end{pgfscope}%
\end{pgfscope}%
\begin{pgfscope}%
\definecolor{textcolor}{rgb}{0.000000,0.000000,0.000000}%
\pgfsetstrokecolor{textcolor}%
\pgfsetfillcolor{textcolor}%
\pgftext[x=0.196667in, y=1.416420in, left, base]{\color{textcolor}{\sffamily\fontsize{8.000000}{9.600000}\selectfont\catcode`\^=\active\def^{\ifmmode\sp\else\^{}\fi}\catcode`\%=\active\def
\end{pgfscope}%
\begin{pgfscope}%
\pgfpathrectangle{\pgfqpoint{0.444740in}{0.559404in}}{\pgfqpoint{2.423729in}{1.382600in}}%
\pgfusepath{clip}%
\pgfsetrectcap%
\pgfsetroundjoin%
\pgfsetlinewidth{0.401500pt}%
\definecolor{currentstroke}{rgb}{0.690196,0.690196,0.690196}%
\pgfsetstrokecolor{currentstroke}%
\pgfsetstrokeopacity{0.350000}%
\pgfsetdash{}{0pt}%
\pgfpathmoveto{\pgfqpoint{0.444740in}{1.902799in}}%
\pgfpathlineto{\pgfqpoint{2.868469in}{1.902799in}}%
\pgfusepath{stroke}%
\end{pgfscope}%
\begin{pgfscope}%
\pgfsetbuttcap%
\pgfsetroundjoin%
\definecolor{currentfill}{rgb}{0.000000,0.000000,0.000000}%
\pgfsetfillcolor{currentfill}%
\pgfsetlinewidth{0.803000pt}%
\definecolor{currentstroke}{rgb}{0.000000,0.000000,0.000000}%
\pgfsetstrokecolor{currentstroke}%
\pgfsetdash{}{0pt}%
\pgfsys@defobject{currentmarker}{\pgfqpoint{-0.048611in}{0.000000in}}{\pgfqpoint{-0.000000in}{0.000000in}}{%
\pgfpathmoveto{\pgfqpoint{-0.000000in}{0.000000in}}%
\pgfpathlineto{\pgfqpoint{-0.048611in}{0.000000in}}%
\pgfusepath{stroke,fill}%
}%
\begin{pgfscope}%
\pgfsys@transformshift{0.444740in}{1.902799in}%
\pgfsys@useobject{currentmarker}{}%
\end{pgfscope}%
\end{pgfscope}%
\begin{pgfscope}%
\definecolor{textcolor}{rgb}{0.000000,0.000000,0.000000}%
\pgfsetstrokecolor{textcolor}%
\pgfsetfillcolor{textcolor}%
\pgftext[x=0.196667in, y=1.864219in, left, base]{\color{textcolor}{\sffamily\fontsize{8.000000}{9.600000}\selectfont\catcode`\^=\active\def^{\ifmmode\sp\else\^{}\fi}\catcode`\%=\active\def
\end{pgfscope}%
\begin{pgfscope}%
\definecolor{textcolor}{rgb}{0.000000,0.000000,0.000000}%
\pgfsetstrokecolor{textcolor}%
\pgfsetfillcolor{textcolor}%
\pgftext[x=0.141111in,y=1.250704in,,bottom,rotate=90.000000]{\color{textcolor}{\sffamily\fontsize{9.000000}{10.800000}\selectfont\catcode`\^=\active\def^{\ifmmode\sp\else\^{}\fi}\catcode`\%=\active\def
\end{pgfscope}%
\begin{pgfscope}%
\pgfpathrectangle{\pgfqpoint{0.444740in}{0.559404in}}{\pgfqpoint{2.423729in}{1.382600in}}%
\pgfusepath{clip}%
\pgfsetbuttcap%
\pgfsetroundjoin%
\pgfsetlinewidth{1.405250pt}%
\definecolor{currentstroke}{rgb}{0.000000,0.000000,0.000000}%
\pgfsetstrokecolor{currentstroke}%
\pgfsetdash{}{0pt}%
\pgfpathmoveto{\pgfqpoint{0.738525in}{1.719944in}}%
\pgfpathlineto{\pgfqpoint{0.738525in}{1.871083in}}%
\pgfusepath{stroke}%
\end{pgfscope}%
\begin{pgfscope}%
\pgfpathrectangle{\pgfqpoint{0.444740in}{0.559404in}}{\pgfqpoint{2.423729in}{1.382600in}}%
\pgfusepath{clip}%
\pgfsetbuttcap%
\pgfsetroundjoin%
\pgfsetlinewidth{1.405250pt}%
\definecolor{currentstroke}{rgb}{0.000000,0.000000,0.000000}%
\pgfsetstrokecolor{currentstroke}%
\pgfsetdash{}{0pt}%
\pgfpathmoveto{\pgfqpoint{1.197565in}{1.715314in}}%
\pgfpathlineto{\pgfqpoint{1.197565in}{1.865348in}}%
\pgfusepath{stroke}%
\end{pgfscope}%
\begin{pgfscope}%
\pgfpathrectangle{\pgfqpoint{0.444740in}{0.559404in}}{\pgfqpoint{2.423729in}{1.382600in}}%
\pgfusepath{clip}%
\pgfsetbuttcap%
\pgfsetroundjoin%
\pgfsetlinewidth{1.405250pt}%
\definecolor{currentstroke}{rgb}{0.000000,0.000000,0.000000}%
\pgfsetstrokecolor{currentstroke}%
\pgfsetdash{}{0pt}%
\pgfpathmoveto{\pgfqpoint{1.656604in}{1.697209in}}%
\pgfpathlineto{\pgfqpoint{1.656604in}{1.838535in}}%
\pgfusepath{stroke}%
\end{pgfscope}%
\begin{pgfscope}%
\pgfpathrectangle{\pgfqpoint{0.444740in}{0.559404in}}{\pgfqpoint{2.423729in}{1.382600in}}%
\pgfusepath{clip}%
\pgfsetbuttcap%
\pgfsetroundjoin%
\pgfsetlinewidth{1.405250pt}%
\definecolor{currentstroke}{rgb}{0.000000,0.000000,0.000000}%
\pgfsetstrokecolor{currentstroke}%
\pgfsetdash{}{0pt}%
\pgfpathmoveto{\pgfqpoint{2.115644in}{1.726955in}}%
\pgfpathlineto{\pgfqpoint{2.115644in}{1.876166in}}%
\pgfusepath{stroke}%
\end{pgfscope}%
\begin{pgfscope}%
\pgfpathrectangle{\pgfqpoint{0.444740in}{0.559404in}}{\pgfqpoint{2.423729in}{1.382600in}}%
\pgfusepath{clip}%
\pgfsetbuttcap%
\pgfsetroundjoin%
\pgfsetlinewidth{1.405250pt}%
\definecolor{currentstroke}{rgb}{0.000000,0.000000,0.000000}%
\pgfsetstrokecolor{currentstroke}%
\pgfsetdash{}{0pt}%
\pgfpathmoveto{\pgfqpoint{2.574683in}{1.710162in}}%
\pgfpathlineto{\pgfqpoint{2.574683in}{1.825582in}}%
\pgfusepath{stroke}%
\end{pgfscope}%
\begin{pgfscope}%
\pgfpathrectangle{\pgfqpoint{0.444740in}{0.559404in}}{\pgfqpoint{2.423729in}{1.382600in}}%
\pgfusepath{clip}%
\pgfsetbuttcap%
\pgfsetroundjoin%
\definecolor{currentfill}{rgb}{0.121569,0.466667,0.705882}%
\pgfsetfillcolor{currentfill}%
\pgfsetlinewidth{1.003750pt}%
\definecolor{currentstroke}{rgb}{0.000000,0.000000,0.000000}%
\pgfsetstrokecolor{currentstroke}%
\pgfsetdash{}{0pt}%
\pgfsys@defobject{currentmarker}{\pgfqpoint{-0.027778in}{-0.000000in}}{\pgfqpoint{0.027778in}{0.000000in}}{%
\pgfpathmoveto{\pgfqpoint{0.027778in}{-0.000000in}}%
\pgfpathlineto{\pgfqpoint{-0.027778in}{0.000000in}}%
\pgfusepath{stroke,fill}%
}%
\begin{pgfscope}%
\pgfsys@transformshift{0.738525in}{1.719944in}%
\pgfsys@useobject{currentmarker}{}%
\end{pgfscope}%
\begin{pgfscope}%
\pgfsys@transformshift{1.197565in}{1.715314in}%
\pgfsys@useobject{currentmarker}{}%
\end{pgfscope}%
\begin{pgfscope}%
\pgfsys@transformshift{1.656604in}{1.697209in}%
\pgfsys@useobject{currentmarker}{}%
\end{pgfscope}%
\begin{pgfscope}%
\pgfsys@transformshift{2.115644in}{1.726955in}%
\pgfsys@useobject{currentmarker}{}%
\end{pgfscope}%
\begin{pgfscope}%
\pgfsys@transformshift{2.574683in}{1.710162in}%
\pgfsys@useobject{currentmarker}{}%
\end{pgfscope}%
\end{pgfscope}%
\begin{pgfscope}%
\pgfpathrectangle{\pgfqpoint{0.444740in}{0.559404in}}{\pgfqpoint{2.423729in}{1.382600in}}%
\pgfusepath{clip}%
\pgfsetbuttcap%
\pgfsetroundjoin%
\definecolor{currentfill}{rgb}{0.121569,0.466667,0.705882}%
\pgfsetfillcolor{currentfill}%
\pgfsetlinewidth{1.003750pt}%
\definecolor{currentstroke}{rgb}{0.000000,0.000000,0.000000}%
\pgfsetstrokecolor{currentstroke}%
\pgfsetdash{}{0pt}%
\pgfsys@defobject{currentmarker}{\pgfqpoint{-0.027778in}{-0.000000in}}{\pgfqpoint{0.027778in}{0.000000in}}{%
\pgfpathmoveto{\pgfqpoint{0.027778in}{-0.000000in}}%
\pgfpathlineto{\pgfqpoint{-0.027778in}{0.000000in}}%
\pgfusepath{stroke,fill}%
}%
\begin{pgfscope}%
\pgfsys@transformshift{0.738525in}{1.871083in}%
\pgfsys@useobject{currentmarker}{}%
\end{pgfscope}%
\begin{pgfscope}%
\pgfsys@transformshift{1.197565in}{1.865348in}%
\pgfsys@useobject{currentmarker}{}%
\end{pgfscope}%
\begin{pgfscope}%
\pgfsys@transformshift{1.656604in}{1.838535in}%
\pgfsys@useobject{currentmarker}{}%
\end{pgfscope}%
\begin{pgfscope}%
\pgfsys@transformshift{2.115644in}{1.876166in}%
\pgfsys@useobject{currentmarker}{}%
\end{pgfscope}%
\begin{pgfscope}%
\pgfsys@transformshift{2.574683in}{1.825582in}%
\pgfsys@useobject{currentmarker}{}%
\end{pgfscope}%
\end{pgfscope}%
\begin{pgfscope}%
\pgfsetrectcap%
\pgfsetmiterjoin%
\pgfsetlinewidth{0.803000pt}%
\definecolor{currentstroke}{rgb}{0.000000,0.000000,0.000000}%
\pgfsetstrokecolor{currentstroke}%
\pgfsetdash{}{0pt}%
\pgfpathmoveto{\pgfqpoint{0.444740in}{0.559404in}}%
\pgfpathlineto{\pgfqpoint{0.444740in}{1.942004in}}%
\pgfusepath{stroke}%
\end{pgfscope}%
\begin{pgfscope}%
\pgfsetrectcap%
\pgfsetmiterjoin%
\pgfsetlinewidth{0.803000pt}%
\definecolor{currentstroke}{rgb}{0.000000,0.000000,0.000000}%
\pgfsetstrokecolor{currentstroke}%
\pgfsetdash{}{0pt}%
\pgfpathmoveto{\pgfqpoint{0.444740in}{0.559404in}}%
\pgfpathlineto{\pgfqpoint{2.868469in}{0.559404in}}%
\pgfusepath{stroke}%
\end{pgfscope}%
\begin{pgfscope}%
\pgfsetbuttcap%
\pgfsetmiterjoin%
\definecolor{currentfill}{rgb}{1.000000,1.000000,1.000000}%
\pgfsetfillcolor{currentfill}%
\pgfsetlinewidth{0.000000pt}%
\definecolor{currentstroke}{rgb}{0.000000,0.000000,0.000000}%
\pgfsetstrokecolor{currentstroke}%
\pgfsetstrokeopacity{0.000000}%
\pgfsetdash{}{0pt}%
\pgfpathmoveto{\pgfqpoint{3.741011in}{0.559404in}}%
\pgfpathlineto{\pgfqpoint{6.164740in}{0.559404in}}%
\pgfpathlineto{\pgfqpoint{6.164740in}{1.942004in}}%
\pgfpathlineto{\pgfqpoint{3.741011in}{1.942004in}}%
\pgfpathlineto{\pgfqpoint{3.741011in}{0.559404in}}%
\pgfpathclose%
\pgfusepath{fill}%
\end{pgfscope}%
\begin{pgfscope}%
\pgfpathrectangle{\pgfqpoint{3.741011in}{0.559404in}}{\pgfqpoint{2.423729in}{1.382600in}}%
\pgfusepath{clip}%
\pgfsetbuttcap%
\pgfsetmiterjoin%
\definecolor{currentfill}{rgb}{0.121569,0.466667,0.705882}%
\pgfsetfillcolor{currentfill}%
\pgfsetlinewidth{0.000000pt}%
\definecolor{currentstroke}{rgb}{0.000000,0.000000,0.000000}%
\pgfsetstrokecolor{currentstroke}%
\pgfsetstrokeopacity{0.000000}%
\pgfsetdash{}{0pt}%
\pgfpathmoveto{\pgfqpoint{3.851181in}{0.559404in}}%
\pgfpathlineto{\pgfqpoint{4.218412in}{0.559404in}}%
\pgfpathlineto{\pgfqpoint{4.218412in}{0.984022in}}%
\pgfpathlineto{\pgfqpoint{3.851181in}{0.984022in}}%
\pgfpathlineto{\pgfqpoint{3.851181in}{0.559404in}}%
\pgfpathclose%
\pgfusepath{fill}%
\end{pgfscope}%
\begin{pgfscope}%
\pgfpathrectangle{\pgfqpoint{3.741011in}{0.559404in}}{\pgfqpoint{2.423729in}{1.382600in}}%
\pgfusepath{clip}%
\pgfsetbuttcap%
\pgfsetmiterjoin%
\definecolor{currentfill}{rgb}{0.121569,0.466667,0.705882}%
\pgfsetfillcolor{currentfill}%
\pgfsetlinewidth{0.000000pt}%
\definecolor{currentstroke}{rgb}{0.000000,0.000000,0.000000}%
\pgfsetstrokecolor{currentstroke}%
\pgfsetstrokeopacity{0.000000}%
\pgfsetdash{}{0pt}%
\pgfpathmoveto{\pgfqpoint{4.310220in}{0.559404in}}%
\pgfpathlineto{\pgfqpoint{4.677452in}{0.559404in}}%
\pgfpathlineto{\pgfqpoint{4.677452in}{0.985981in}}%
\pgfpathlineto{\pgfqpoint{4.310220in}{0.985981in}}%
\pgfpathlineto{\pgfqpoint{4.310220in}{0.559404in}}%
\pgfpathclose%
\pgfusepath{fill}%
\end{pgfscope}%
\begin{pgfscope}%
\pgfpathrectangle{\pgfqpoint{3.741011in}{0.559404in}}{\pgfqpoint{2.423729in}{1.382600in}}%
\pgfusepath{clip}%
\pgfsetbuttcap%
\pgfsetmiterjoin%
\definecolor{currentfill}{rgb}{0.121569,0.466667,0.705882}%
\pgfsetfillcolor{currentfill}%
\pgfsetlinewidth{0.000000pt}%
\definecolor{currentstroke}{rgb}{0.000000,0.000000,0.000000}%
\pgfsetstrokecolor{currentstroke}%
\pgfsetstrokeopacity{0.000000}%
\pgfsetdash{}{0pt}%
\pgfpathmoveto{\pgfqpoint{4.769260in}{0.559404in}}%
\pgfpathlineto{\pgfqpoint{5.136491in}{0.559404in}}%
\pgfpathlineto{\pgfqpoint{5.136491in}{1.054165in}}%
\pgfpathlineto{\pgfqpoint{4.769260in}{1.054165in}}%
\pgfpathlineto{\pgfqpoint{4.769260in}{0.559404in}}%
\pgfpathclose%
\pgfusepath{fill}%
\end{pgfscope}%
\begin{pgfscope}%
\pgfpathrectangle{\pgfqpoint{3.741011in}{0.559404in}}{\pgfqpoint{2.423729in}{1.382600in}}%
\pgfusepath{clip}%
\pgfsetbuttcap%
\pgfsetmiterjoin%
\definecolor{currentfill}{rgb}{0.121569,0.466667,0.705882}%
\pgfsetfillcolor{currentfill}%
\pgfsetlinewidth{0.000000pt}%
\definecolor{currentstroke}{rgb}{0.000000,0.000000,0.000000}%
\pgfsetstrokecolor{currentstroke}%
\pgfsetstrokeopacity{0.000000}%
\pgfsetdash{}{0pt}%
\pgfpathmoveto{\pgfqpoint{5.228299in}{0.559404in}}%
\pgfpathlineto{\pgfqpoint{5.595531in}{0.559404in}}%
\pgfpathlineto{\pgfqpoint{5.595531in}{0.992481in}}%
\pgfpathlineto{\pgfqpoint{5.228299in}{0.992481in}}%
\pgfpathlineto{\pgfqpoint{5.228299in}{0.559404in}}%
\pgfpathclose%
\pgfusepath{fill}%
\end{pgfscope}%
\begin{pgfscope}%
\pgfpathrectangle{\pgfqpoint{3.741011in}{0.559404in}}{\pgfqpoint{2.423729in}{1.382600in}}%
\pgfusepath{clip}%
\pgfsetbuttcap%
\pgfsetmiterjoin%
\definecolor{currentfill}{rgb}{0.121569,0.466667,0.705882}%
\pgfsetfillcolor{currentfill}%
\pgfsetlinewidth{0.000000pt}%
\definecolor{currentstroke}{rgb}{0.000000,0.000000,0.000000}%
\pgfsetstrokecolor{currentstroke}%
\pgfsetstrokeopacity{0.000000}%
\pgfsetdash{}{0pt}%
\pgfpathmoveto{\pgfqpoint{5.687339in}{0.559404in}}%
\pgfpathlineto{\pgfqpoint{6.054570in}{0.559404in}}%
\pgfpathlineto{\pgfqpoint{6.054570in}{1.821190in}}%
\pgfpathlineto{\pgfqpoint{5.687339in}{1.821190in}}%
\pgfpathlineto{\pgfqpoint{5.687339in}{0.559404in}}%
\pgfpathclose%
\pgfusepath{fill}%
\end{pgfscope}%
\begin{pgfscope}%
\pgfsetbuttcap%
\pgfsetroundjoin%
\definecolor{currentfill}{rgb}{0.000000,0.000000,0.000000}%
\pgfsetfillcolor{currentfill}%
\pgfsetlinewidth{0.803000pt}%
\definecolor{currentstroke}{rgb}{0.000000,0.000000,0.000000}%
\pgfsetstrokecolor{currentstroke}%
\pgfsetdash{}{0pt}%
\pgfsys@defobject{currentmarker}{\pgfqpoint{0.000000in}{-0.048611in}}{\pgfqpoint{0.000000in}{0.000000in}}{%
\pgfpathmoveto{\pgfqpoint{0.000000in}{0.000000in}}%
\pgfpathlineto{\pgfqpoint{0.000000in}{-0.048611in}}%
\pgfusepath{stroke,fill}%
}%
\begin{pgfscope}%
\pgfsys@transformshift{4.034796in}{0.559404in}%
\pgfsys@useobject{currentmarker}{}%
\end{pgfscope}%
\end{pgfscope}%
\begin{pgfscope}%
\definecolor{textcolor}{rgb}{0.000000,0.000000,0.000000}%
\pgfsetstrokecolor{textcolor}%
\pgfsetfillcolor{textcolor}%
\pgftext[x=3.684763in, y=0.199504in, left, base,rotate=30.000000]{\color{textcolor}{\sffamily\fontsize{8.000000}{9.600000}\selectfont\catcode`\^=\active\def^{\ifmmode\sp\else\^{}\fi}\catcode`\%=\active\def
\end{pgfscope}%
\begin{pgfscope}%
\pgfsetbuttcap%
\pgfsetroundjoin%
\definecolor{currentfill}{rgb}{0.000000,0.000000,0.000000}%
\pgfsetfillcolor{currentfill}%
\pgfsetlinewidth{0.803000pt}%
\definecolor{currentstroke}{rgb}{0.000000,0.000000,0.000000}%
\pgfsetstrokecolor{currentstroke}%
\pgfsetdash{}{0pt}%
\pgfsys@defobject{currentmarker}{\pgfqpoint{0.000000in}{-0.048611in}}{\pgfqpoint{0.000000in}{0.000000in}}{%
\pgfpathmoveto{\pgfqpoint{0.000000in}{0.000000in}}%
\pgfpathlineto{\pgfqpoint{0.000000in}{-0.048611in}}%
\pgfusepath{stroke,fill}%
}%
\begin{pgfscope}%
\pgfsys@transformshift{4.493836in}{0.559404in}%
\pgfsys@useobject{currentmarker}{}%
\end{pgfscope}%
\end{pgfscope}%
\begin{pgfscope}%
\definecolor{textcolor}{rgb}{0.000000,0.000000,0.000000}%
\pgfsetstrokecolor{textcolor}%
\pgfsetfillcolor{textcolor}%
\pgftext[x=4.035648in, y=0.137061in, left, base,rotate=30.000000]{\color{textcolor}{\sffamily\fontsize{8.000000}{9.600000}\selectfont\catcode`\^=\active\def^{\ifmmode\sp\else\^{}\fi}\catcode`\%=\active\def
\end{pgfscope}%
\begin{pgfscope}%
\pgfsetbuttcap%
\pgfsetroundjoin%
\definecolor{currentfill}{rgb}{0.000000,0.000000,0.000000}%
\pgfsetfillcolor{currentfill}%
\pgfsetlinewidth{0.803000pt}%
\definecolor{currentstroke}{rgb}{0.000000,0.000000,0.000000}%
\pgfsetstrokecolor{currentstroke}%
\pgfsetdash{}{0pt}%
\pgfsys@defobject{currentmarker}{\pgfqpoint{0.000000in}{-0.048611in}}{\pgfqpoint{0.000000in}{0.000000in}}{%
\pgfpathmoveto{\pgfqpoint{0.000000in}{0.000000in}}%
\pgfpathlineto{\pgfqpoint{0.000000in}{-0.048611in}}%
\pgfusepath{stroke,fill}%
}%
\begin{pgfscope}%
\pgfsys@transformshift{4.952875in}{0.559404in}%
\pgfsys@useobject{currentmarker}{}%
\end{pgfscope}%
\end{pgfscope}%
\begin{pgfscope}%
\definecolor{textcolor}{rgb}{0.000000,0.000000,0.000000}%
\pgfsetstrokecolor{textcolor}%
\pgfsetfillcolor{textcolor}%
\pgftext[x=4.719382in, y=0.266788in, left, base,rotate=30.000000]{\color{textcolor}{\sffamily\fontsize{8.000000}{9.600000}\selectfont\catcode`\^=\active\def^{\ifmmode\sp\else\^{}\fi}\catcode`\%=\active\def
\end{pgfscope}%
\begin{pgfscope}%
\pgfsetbuttcap%
\pgfsetroundjoin%
\definecolor{currentfill}{rgb}{0.000000,0.000000,0.000000}%
\pgfsetfillcolor{currentfill}%
\pgfsetlinewidth{0.803000pt}%
\definecolor{currentstroke}{rgb}{0.000000,0.000000,0.000000}%
\pgfsetstrokecolor{currentstroke}%
\pgfsetdash{}{0pt}%
\pgfsys@defobject{currentmarker}{\pgfqpoint{0.000000in}{-0.048611in}}{\pgfqpoint{0.000000in}{0.000000in}}{%
\pgfpathmoveto{\pgfqpoint{0.000000in}{0.000000in}}%
\pgfpathlineto{\pgfqpoint{0.000000in}{-0.048611in}}%
\pgfusepath{stroke,fill}%
}%
\begin{pgfscope}%
\pgfsys@transformshift{5.411915in}{0.559404in}%
\pgfsys@useobject{currentmarker}{}%
\end{pgfscope}%
\end{pgfscope}%
\begin{pgfscope}%
\definecolor{textcolor}{rgb}{0.000000,0.000000,0.000000}%
\pgfsetstrokecolor{textcolor}%
\pgfsetfillcolor{textcolor}%
\pgftext[x=4.800700in, y=0.048710in, left, base,rotate=30.000000]{\color{textcolor}{\sffamily\fontsize{8.000000}{9.600000}\selectfont\catcode`\^=\active\def^{\ifmmode\sp\else\^{}\fi}\catcode`\%=\active\def
\end{pgfscope}%
\begin{pgfscope}%
\pgfsetbuttcap%
\pgfsetroundjoin%
\definecolor{currentfill}{rgb}{0.000000,0.000000,0.000000}%
\pgfsetfillcolor{currentfill}%
\pgfsetlinewidth{0.803000pt}%
\definecolor{currentstroke}{rgb}{0.000000,0.000000,0.000000}%
\pgfsetstrokecolor{currentstroke}%
\pgfsetdash{}{0pt}%
\pgfsys@defobject{currentmarker}{\pgfqpoint{0.000000in}{-0.048611in}}{\pgfqpoint{0.000000in}{0.000000in}}{%
\pgfpathmoveto{\pgfqpoint{0.000000in}{0.000000in}}%
\pgfpathlineto{\pgfqpoint{0.000000in}{-0.048611in}}%
\pgfusepath{stroke,fill}%
}%
\begin{pgfscope}%
\pgfsys@transformshift{5.870955in}{0.559404in}%
\pgfsys@useobject{currentmarker}{}%
\end{pgfscope}%
\end{pgfscope}%
\begin{pgfscope}%
\definecolor{textcolor}{rgb}{0.000000,0.000000,0.000000}%
\pgfsetstrokecolor{textcolor}%
\pgfsetfillcolor{textcolor}%
\pgftext[x=5.538228in, y=0.209496in, left, base,rotate=30.000000]{\color{textcolor}{\sffamily\fontsize{8.000000}{9.600000}\selectfont\catcode`\^=\active\def^{\ifmmode\sp\else\^{}\fi}\catcode`\%=\active\def
\end{pgfscope}%
\begin{pgfscope}%
\pgfpathrectangle{\pgfqpoint{3.741011in}{0.559404in}}{\pgfqpoint{2.423729in}{1.382600in}}%
\pgfusepath{clip}%
\pgfsetrectcap%
\pgfsetroundjoin%
\pgfsetlinewidth{0.401500pt}%
\definecolor{currentstroke}{rgb}{0.690196,0.690196,0.690196}%
\pgfsetstrokecolor{currentstroke}%
\pgfsetstrokeopacity{0.350000}%
\pgfsetdash{}{0pt}%
\pgfpathmoveto{\pgfqpoint{3.741011in}{0.559404in}}%
\pgfpathlineto{\pgfqpoint{6.164740in}{0.559404in}}%
\pgfusepath{stroke}%
\end{pgfscope}%
\begin{pgfscope}%
\pgfsetbuttcap%
\pgfsetroundjoin%
\definecolor{currentfill}{rgb}{0.000000,0.000000,0.000000}%
\pgfsetfillcolor{currentfill}%
\pgfsetlinewidth{0.803000pt}%
\definecolor{currentstroke}{rgb}{0.000000,0.000000,0.000000}%
\pgfsetstrokecolor{currentstroke}%
\pgfsetdash{}{0pt}%
\pgfsys@defobject{currentmarker}{\pgfqpoint{-0.048611in}{0.000000in}}{\pgfqpoint{-0.000000in}{0.000000in}}{%
\pgfpathmoveto{\pgfqpoint{-0.000000in}{0.000000in}}%
\pgfpathlineto{\pgfqpoint{-0.048611in}{0.000000in}}%
\pgfusepath{stroke,fill}%
}%
\begin{pgfscope}%
\pgfsys@transformshift{3.741011in}{0.559404in}%
\pgfsys@useobject{currentmarker}{}%
\end{pgfscope}%
\end{pgfscope}%
\begin{pgfscope}%
\definecolor{textcolor}{rgb}{0.000000,0.000000,0.000000}%
\pgfsetstrokecolor{textcolor}%
\pgfsetfillcolor{textcolor}%
\pgftext[x=3.492938in, y=0.520824in, left, base]{\color{textcolor}{\sffamily\fontsize{8.000000}{9.600000}\selectfont\catcode`\^=\active\def^{\ifmmode\sp\else\^{}\fi}\catcode`\%=\active\def
\end{pgfscope}%
\begin{pgfscope}%
\pgfpathrectangle{\pgfqpoint{3.741011in}{0.559404in}}{\pgfqpoint{2.423729in}{1.382600in}}%
\pgfusepath{clip}%
\pgfsetrectcap%
\pgfsetroundjoin%
\pgfsetlinewidth{0.401500pt}%
\definecolor{currentstroke}{rgb}{0.690196,0.690196,0.690196}%
\pgfsetstrokecolor{currentstroke}%
\pgfsetstrokeopacity{0.350000}%
\pgfsetdash{}{0pt}%
\pgfpathmoveto{\pgfqpoint{3.741011in}{0.912817in}}%
\pgfpathlineto{\pgfqpoint{6.164740in}{0.912817in}}%
\pgfusepath{stroke}%
\end{pgfscope}%
\begin{pgfscope}%
\pgfsetbuttcap%
\pgfsetroundjoin%
\definecolor{currentfill}{rgb}{0.000000,0.000000,0.000000}%
\pgfsetfillcolor{currentfill}%
\pgfsetlinewidth{0.803000pt}%
\definecolor{currentstroke}{rgb}{0.000000,0.000000,0.000000}%
\pgfsetstrokecolor{currentstroke}%
\pgfsetdash{}{0pt}%
\pgfsys@defobject{currentmarker}{\pgfqpoint{-0.048611in}{0.000000in}}{\pgfqpoint{-0.000000in}{0.000000in}}{%
\pgfpathmoveto{\pgfqpoint{-0.000000in}{0.000000in}}%
\pgfpathlineto{\pgfqpoint{-0.048611in}{0.000000in}}%
\pgfusepath{stroke,fill}%
}%
\begin{pgfscope}%
\pgfsys@transformshift{3.741011in}{0.912817in}%
\pgfsys@useobject{currentmarker}{}%
\end{pgfscope}%
\end{pgfscope}%
\begin{pgfscope}%
\definecolor{textcolor}{rgb}{0.000000,0.000000,0.000000}%
\pgfsetstrokecolor{textcolor}%
\pgfsetfillcolor{textcolor}%
\pgftext[x=3.492938in, y=0.874237in, left, base]{\color{textcolor}{\sffamily\fontsize{8.000000}{9.600000}\selectfont\catcode`\^=\active\def^{\ifmmode\sp\else\^{}\fi}\catcode`\%=\active\def
\end{pgfscope}%
\begin{pgfscope}%
\pgfpathrectangle{\pgfqpoint{3.741011in}{0.559404in}}{\pgfqpoint{2.423729in}{1.382600in}}%
\pgfusepath{clip}%
\pgfsetrectcap%
\pgfsetroundjoin%
\pgfsetlinewidth{0.401500pt}%
\definecolor{currentstroke}{rgb}{0.690196,0.690196,0.690196}%
\pgfsetstrokecolor{currentstroke}%
\pgfsetstrokeopacity{0.350000}%
\pgfsetdash{}{0pt}%
\pgfpathmoveto{\pgfqpoint{3.741011in}{1.266230in}}%
\pgfpathlineto{\pgfqpoint{6.164740in}{1.266230in}}%
\pgfusepath{stroke}%
\end{pgfscope}%
\begin{pgfscope}%
\pgfsetbuttcap%
\pgfsetroundjoin%
\definecolor{currentfill}{rgb}{0.000000,0.000000,0.000000}%
\pgfsetfillcolor{currentfill}%
\pgfsetlinewidth{0.803000pt}%
\definecolor{currentstroke}{rgb}{0.000000,0.000000,0.000000}%
\pgfsetstrokecolor{currentstroke}%
\pgfsetdash{}{0pt}%
\pgfsys@defobject{currentmarker}{\pgfqpoint{-0.048611in}{0.000000in}}{\pgfqpoint{-0.000000in}{0.000000in}}{%
\pgfpathmoveto{\pgfqpoint{-0.000000in}{0.000000in}}%
\pgfpathlineto{\pgfqpoint{-0.048611in}{0.000000in}}%
\pgfusepath{stroke,fill}%
}%
\begin{pgfscope}%
\pgfsys@transformshift{3.741011in}{1.266230in}%
\pgfsys@useobject{currentmarker}{}%
\end{pgfscope}%
\end{pgfscope}%
\begin{pgfscope}%
\definecolor{textcolor}{rgb}{0.000000,0.000000,0.000000}%
\pgfsetstrokecolor{textcolor}%
\pgfsetfillcolor{textcolor}%
\pgftext[x=3.492938in, y=1.227650in, left, base]{\color{textcolor}{\sffamily\fontsize{8.000000}{9.600000}\selectfont\catcode`\^=\active\def^{\ifmmode\sp\else\^{}\fi}\catcode`\%=\active\def
\end{pgfscope}%
\begin{pgfscope}%
\pgfpathrectangle{\pgfqpoint{3.741011in}{0.559404in}}{\pgfqpoint{2.423729in}{1.382600in}}%
\pgfusepath{clip}%
\pgfsetrectcap%
\pgfsetroundjoin%
\pgfsetlinewidth{0.401500pt}%
\definecolor{currentstroke}{rgb}{0.690196,0.690196,0.690196}%
\pgfsetstrokecolor{currentstroke}%
\pgfsetstrokeopacity{0.350000}%
\pgfsetdash{}{0pt}%
\pgfpathmoveto{\pgfqpoint{3.741011in}{1.619643in}}%
\pgfpathlineto{\pgfqpoint{6.164740in}{1.619643in}}%
\pgfusepath{stroke}%
\end{pgfscope}%
\begin{pgfscope}%
\pgfsetbuttcap%
\pgfsetroundjoin%
\definecolor{currentfill}{rgb}{0.000000,0.000000,0.000000}%
\pgfsetfillcolor{currentfill}%
\pgfsetlinewidth{0.803000pt}%
\definecolor{currentstroke}{rgb}{0.000000,0.000000,0.000000}%
\pgfsetstrokecolor{currentstroke}%
\pgfsetdash{}{0pt}%
\pgfsys@defobject{currentmarker}{\pgfqpoint{-0.048611in}{0.000000in}}{\pgfqpoint{-0.000000in}{0.000000in}}{%
\pgfpathmoveto{\pgfqpoint{-0.000000in}{0.000000in}}%
\pgfpathlineto{\pgfqpoint{-0.048611in}{0.000000in}}%
\pgfusepath{stroke,fill}%
}%
\begin{pgfscope}%
\pgfsys@transformshift{3.741011in}{1.619643in}%
\pgfsys@useobject{currentmarker}{}%
\end{pgfscope}%
\end{pgfscope}%
\begin{pgfscope}%
\definecolor{textcolor}{rgb}{0.000000,0.000000,0.000000}%
\pgfsetstrokecolor{textcolor}%
\pgfsetfillcolor{textcolor}%
\pgftext[x=3.492938in, y=1.581063in, left, base]{\color{textcolor}{\sffamily\fontsize{8.000000}{9.600000}\selectfont\catcode`\^=\active\def^{\ifmmode\sp\else\^{}\fi}\catcode`\%=\active\def
\end{pgfscope}%
\begin{pgfscope}%
\definecolor{textcolor}{rgb}{0.000000,0.000000,0.000000}%
\pgfsetstrokecolor{textcolor}%
\pgfsetfillcolor{textcolor}%
\pgftext[x=3.437382in,y=1.250704in,,bottom,rotate=90.000000]{\color{textcolor}{\sffamily\fontsize{9.000000}{10.800000}\selectfont\catcode`\^=\active\def^{\ifmmode\sp\else\^{}\fi}\catcode`\%=\active\def
\end{pgfscope}%
\begin{pgfscope}%
\pgfpathrectangle{\pgfqpoint{3.741011in}{0.559404in}}{\pgfqpoint{2.423729in}{1.382600in}}%
\pgfusepath{clip}%
\pgfsetbuttcap%
\pgfsetroundjoin%
\pgfsetlinewidth{1.405250pt}%
\definecolor{currentstroke}{rgb}{0.000000,0.000000,0.000000}%
\pgfsetstrokecolor{currentstroke}%
\pgfsetdash{}{0pt}%
\pgfpathmoveto{\pgfqpoint{4.034796in}{0.968022in}}%
\pgfpathlineto{\pgfqpoint{4.034796in}{1.000022in}}%
\pgfusepath{stroke}%
\end{pgfscope}%
\begin{pgfscope}%
\pgfpathrectangle{\pgfqpoint{3.741011in}{0.559404in}}{\pgfqpoint{2.423729in}{1.382600in}}%
\pgfusepath{clip}%
\pgfsetbuttcap%
\pgfsetroundjoin%
\pgfsetlinewidth{1.405250pt}%
\definecolor{currentstroke}{rgb}{0.000000,0.000000,0.000000}%
\pgfsetstrokecolor{currentstroke}%
\pgfsetdash{}{0pt}%
\pgfpathmoveto{\pgfqpoint{4.493836in}{0.969897in}}%
\pgfpathlineto{\pgfqpoint{4.493836in}{1.002066in}}%
\pgfusepath{stroke}%
\end{pgfscope}%
\begin{pgfscope}%
\pgfpathrectangle{\pgfqpoint{3.741011in}{0.559404in}}{\pgfqpoint{2.423729in}{1.382600in}}%
\pgfusepath{clip}%
\pgfsetbuttcap%
\pgfsetroundjoin%
\pgfsetlinewidth{1.405250pt}%
\definecolor{currentstroke}{rgb}{0.000000,0.000000,0.000000}%
\pgfsetstrokecolor{currentstroke}%
\pgfsetdash{}{0pt}%
\pgfpathmoveto{\pgfqpoint{4.952875in}{1.037661in}}%
\pgfpathlineto{\pgfqpoint{4.952875in}{1.070670in}}%
\pgfusepath{stroke}%
\end{pgfscope}%
\begin{pgfscope}%
\pgfpathrectangle{\pgfqpoint{3.741011in}{0.559404in}}{\pgfqpoint{2.423729in}{1.382600in}}%
\pgfusepath{clip}%
\pgfsetbuttcap%
\pgfsetroundjoin%
\pgfsetlinewidth{1.405250pt}%
\definecolor{currentstroke}{rgb}{0.000000,0.000000,0.000000}%
\pgfsetstrokecolor{currentstroke}%
\pgfsetdash{}{0pt}%
\pgfpathmoveto{\pgfqpoint{5.411915in}{0.976090in}}%
\pgfpathlineto{\pgfqpoint{5.411915in}{1.008873in}}%
\pgfusepath{stroke}%
\end{pgfscope}%
\begin{pgfscope}%
\pgfpathrectangle{\pgfqpoint{3.741011in}{0.559404in}}{\pgfqpoint{2.423729in}{1.382600in}}%
\pgfusepath{clip}%
\pgfsetbuttcap%
\pgfsetroundjoin%
\pgfsetlinewidth{1.405250pt}%
\definecolor{currentstroke}{rgb}{0.000000,0.000000,0.000000}%
\pgfsetstrokecolor{currentstroke}%
\pgfsetdash{}{0pt}%
\pgfpathmoveto{\pgfqpoint{5.870955in}{1.766215in}}%
\pgfpathlineto{\pgfqpoint{5.870955in}{1.876166in}}%
\pgfusepath{stroke}%
\end{pgfscope}%
\begin{pgfscope}%
\pgfpathrectangle{\pgfqpoint{3.741011in}{0.559404in}}{\pgfqpoint{2.423729in}{1.382600in}}%
\pgfusepath{clip}%
\pgfsetbuttcap%
\pgfsetroundjoin%
\definecolor{currentfill}{rgb}{0.121569,0.466667,0.705882}%
\pgfsetfillcolor{currentfill}%
\pgfsetlinewidth{1.003750pt}%
\definecolor{currentstroke}{rgb}{0.000000,0.000000,0.000000}%
\pgfsetstrokecolor{currentstroke}%
\pgfsetdash{}{0pt}%
\pgfsys@defobject{currentmarker}{\pgfqpoint{-0.027778in}{-0.000000in}}{\pgfqpoint{0.027778in}{0.000000in}}{%
\pgfpathmoveto{\pgfqpoint{0.027778in}{-0.000000in}}%
\pgfpathlineto{\pgfqpoint{-0.027778in}{0.000000in}}%
\pgfusepath{stroke,fill}%
}%
\begin{pgfscope}%
\pgfsys@transformshift{4.034796in}{0.968022in}%
\pgfsys@useobject{currentmarker}{}%
\end{pgfscope}%
\begin{pgfscope}%
\pgfsys@transformshift{4.493836in}{0.969897in}%
\pgfsys@useobject{currentmarker}{}%
\end{pgfscope}%
\begin{pgfscope}%
\pgfsys@transformshift{4.952875in}{1.037661in}%
\pgfsys@useobject{currentmarker}{}%
\end{pgfscope}%
\begin{pgfscope}%
\pgfsys@transformshift{5.411915in}{0.976090in}%
\pgfsys@useobject{currentmarker}{}%
\end{pgfscope}%
\begin{pgfscope}%
\pgfsys@transformshift{5.870955in}{1.766215in}%
\pgfsys@useobject{currentmarker}{}%
\end{pgfscope}%
\end{pgfscope}%
\begin{pgfscope}%
\pgfpathrectangle{\pgfqpoint{3.741011in}{0.559404in}}{\pgfqpoint{2.423729in}{1.382600in}}%
\pgfusepath{clip}%
\pgfsetbuttcap%
\pgfsetroundjoin%
\definecolor{currentfill}{rgb}{0.121569,0.466667,0.705882}%
\pgfsetfillcolor{currentfill}%
\pgfsetlinewidth{1.003750pt}%
\definecolor{currentstroke}{rgb}{0.000000,0.000000,0.000000}%
\pgfsetstrokecolor{currentstroke}%
\pgfsetdash{}{0pt}%
\pgfsys@defobject{currentmarker}{\pgfqpoint{-0.027778in}{-0.000000in}}{\pgfqpoint{0.027778in}{0.000000in}}{%
\pgfpathmoveto{\pgfqpoint{0.027778in}{-0.000000in}}%
\pgfpathlineto{\pgfqpoint{-0.027778in}{0.000000in}}%
\pgfusepath{stroke,fill}%
}%
\begin{pgfscope}%
\pgfsys@transformshift{4.034796in}{1.000022in}%
\pgfsys@useobject{currentmarker}{}%
\end{pgfscope}%
\begin{pgfscope}%
\pgfsys@transformshift{4.493836in}{1.002066in}%
\pgfsys@useobject{currentmarker}{}%
\end{pgfscope}%
\begin{pgfscope}%
\pgfsys@transformshift{4.952875in}{1.070670in}%
\pgfsys@useobject{currentmarker}{}%
\end{pgfscope}%
\begin{pgfscope}%
\pgfsys@transformshift{5.411915in}{1.008873in}%
\pgfsys@useobject{currentmarker}{}%
\end{pgfscope}%
\begin{pgfscope}%
\pgfsys@transformshift{5.870955in}{1.876166in}%
\pgfsys@useobject{currentmarker}{}%
\end{pgfscope}%
\end{pgfscope}%
\begin{pgfscope}%
\pgfsetrectcap%
\pgfsetmiterjoin%
\pgfsetlinewidth{0.803000pt}%
\definecolor{currentstroke}{rgb}{0.000000,0.000000,0.000000}%
\pgfsetstrokecolor{currentstroke}%
\pgfsetdash{}{0pt}%
\pgfpathmoveto{\pgfqpoint{3.741011in}{0.559404in}}%
\pgfpathlineto{\pgfqpoint{3.741011in}{1.942004in}}%
\pgfusepath{stroke}%
\end{pgfscope}%
\begin{pgfscope}%
\pgfsetrectcap%
\pgfsetmiterjoin%
\pgfsetlinewidth{0.803000pt}%
\definecolor{currentstroke}{rgb}{0.000000,0.000000,0.000000}%
\pgfsetstrokecolor{currentstroke}%
\pgfsetdash{}{0pt}%
\pgfpathmoveto{\pgfqpoint{3.741011in}{0.559404in}}%
\pgfpathlineto{\pgfqpoint{6.164740in}{0.559404in}}%
\pgfusepath{stroke}%
\end{pgfscope}%
\begin{pgfscope}%
\definecolor{textcolor}{rgb}{0.000000,0.000000,0.000000}%
\pgfsetstrokecolor{textcolor}%
\pgfsetfillcolor{textcolor}%
\pgftext[x=0.444740in,y=1.997754in,left,bottom]{\color{textcolor}{\sffamily\fontsize{10.000000}{12.000000}\bfseries\selectfont\catcode`\^=\active\def^{\ifmmode\sp\else\^{}\fi}\catcode`\%=\active\def
\end{pgfscope}%
\begin{pgfscope}%
\definecolor{textcolor}{rgb}{0.000000,0.000000,0.000000}%
\pgfsetstrokecolor{textcolor}%
\pgfsetfillcolor{textcolor}%
\pgftext[x=3.741011in,y=1.997754in,left,bottom]{\color{textcolor}{\sffamily\fontsize{10.000000}{12.000000}\bfseries\selectfont\catcode`\^=\active\def^{\ifmmode\sp\else\^{}\fi}\catcode`\%=\active\def
\end{pgfscope}%
\end{pgfpicture}%
\makeatother%
\endgroup%